\title[Expected Worst Case Regret via Stochastic Sequential Covering]{Expected Worst Case Regret via Stochastic Sequential Covering}
\begin{document}

\newtheorem{claim}{Claim}
\newtheorem{problem}{Problem}
\newtheorem{fact}{Fact}

\newcommand{\x}{\textbf{x}}
\newcommand{\w}{\textbf{w}}
\newcommand{\z}{\textbf{z}}
\newcommand{\s}{\textbf{s}}
\newcommand{\y}{\textbf{y}}

\newcommand{\ayg}[1]{\textcolor{red}{{#1}}}

\def\CH{\mathcal{H}}
\def\VC{\mathsf{VC}}
\def\s{\mathsf{S}}
\def\SL{\mathsf{SL}}
\def\bs{\mathbf{s}}

\newcommand{\bt}[1]{\textbf{#1}}

\maketitle

\begin{abstract}
We study the problem of sequential prediction and online minimax regret with stochastically generated features under a general loss function. We introduce a notion of \emph{expected worst case minimax regret} that generalizes and encompasses prior known minimax regrets. For such minimax regrets we establish tight upper bounds via a novel concept of \emph{stochastic global sequential covering}. We show that for a hypothesis class of VC-dimension $\VC$ and $i.i.d.$ generated features of length $T$, the cardinality of the stochastic global sequential covering can be upper bounded with high probability (whp) by $e^{O(\VC \cdot \log^2 T)}$. We then improve this bound by introducing a new complexity measure called the \emph{Star-Littlestone} dimension, and show that classes with Star-Littlestone dimension $\SL$ admit a stochastic global sequential covering of order $e^{O(\SL \cdot \log T)}$. We further establish upper bounds for real valued classes with  finite fat-shattering numbers. Finally, by applying information-theoretic tools of the fixed design minimax regrets, we provide lower bounds for the  expected worst case minimax regret.  We demonstrate the effectiveness of our approach by establishing tight bounds on the expected worst case minimax regrets for logarithmic loss and 
general mixable losses.  

\end{abstract}

\begin{keywords}%
  online learning, stochastic sequential covering, minimax regret
\end{keywords}

\section{Introduction}
\label{sec-intro}

Online learning \citep{shalev2014understanding}  can be viewed as a game between Nature and 
predictor/ learner. At each time step $t$, 
Nature selects feature $\x_t\in \mathcal{X}$ and presents it to the learner. 
The learner then makes a prediction $\hat{y}_t\in\hat{\mathcal{Y}}$ based on 
history $\x^t=\{\x_1,\cdots,\x_t\}$ and $y^{t-1}=\{y_1,\cdots,y_{t-1}\}$, where 
$\x^t$ and $y^{t-1}$ are the feature vector up to time $t$, and the true labels up to time $t-1$, respectively. 
Nature then reveals the true label $y_t\in \mathcal{Y}$ and the learner incurs some loss.
The game continues up to time $T$ and the goal is to minimize 
the pointwise regret defined as:
$$
R(\hat{y}^T,y^T,\mathcal{H}\mid \x^T)=
\sum_{t=1}^T\ell(\hat{y}_t,y_t)-\inf_{h\in \mathcal{H}}\sum_{t=1}^T\ell(h(\x_t),y_t),
$$
where $\ell:\hat{\mathcal{Y}}\times\mathcal{Y}\rightarrow \mathbb{R}^+$ is a
loss function and $\mathcal{H}\subset \hat{\mathcal{Y}}^{\mathcal{X}}$ is a class of 
\emph{experts}. We say sequences $\x^T$ and $y^T$ are \emph{realizable} 
if for some $h\in \mathcal{H}$ we have 
$h(\x_t)=y_t$ for $t\in [T]$, and \emph{agnostic} otherwise. 

Regret analysis for a general class $\CH$ of experts is often 
studied via a sequential cover $\mathcal{G}$ of $\mathcal{H}$
which is defined as a set of functions mapping $\mathcal{X}^*\rightarrow \hat{\mathcal{Y}}$ 
(where $\mathcal{X}^*$ is the set of all finite sequences over $\mathcal{X}$) such that for all $h\in \mathcal{H}$ and $\x^T\in \mathcal{X}^T$ there exists $g\in \mathcal{G}$ 
satisfying $h(\x_t)=g(\x^t)$ for all $t\in [T]$. In the seminal work of ~\cite{ben2009agnostic}, the authors 
established a striking connection between the regrets of agnostic and realizable 
cases through the concept of sequential covering. One of the core arguments 
of~\cite{ben2009agnostic} is the following observation (informally):
\begin{displayquote}
    \emph{If a binary valued class $\mathcal{H}$ admits a predictor with  cumulative 
error upper bounded by $B$ for {realizable} samples of length $T$, 
then $\mathcal{H}$ has a {sequential cover} $\mathcal{G}$ of size $O(T^{B+1})$.}
\end{displayquote}
This is established by considering all the possible error patterns of the predictor 
in the realizable case (see Lemma~\ref{lem2}). 
Using standard expert algorithms (e.g., Exponential Weighted Average), one then
relates the \emph{agnostic} regrets to the size of $\mathcal{G}$. For example, for 
absolute loss, the regret bound is of the form 
$O(\sqrt{T\log|\mathcal{G}|})=O(\sqrt{BT\log T})$, while for general mixable losses one  finds
$O(\log|\mathcal{G}|)=O(B\log T)$. In~\citep{ben2009agnostic}, the authors 
derived upper bounds for $B$ through the Littlestone dimension. This was further 
generalized in~\citep{daniely2011multiclass} to multi-label cases, and 
in~\citep{rakhlin2010online} to the real valued case. However, all of these 
results assumed that features are presented adversarially. 
This may be too pessimistic and  restrictive for modeling real scenarios of prediction, see e.g.~\citep{rakhlin2011online1}.

This paper generalizes the concept of \emph{sequential covering} 
to a stochastic scenario. Instead of assuming that $\x^T$ is presented adversarially, 
we postulate that $\x^T$ is sampled from some \emph{unknown} distribution $\mu$ 
over $\mathcal{X}^T$ (not necessarily $i.i.d.$) in a known class $\mathcal{P}$ 
of distributions (i.e., the so called {\it universal} or {\it distribution blind} scenario).
However, the true labels are still presented adversarially. We say a class $\mathcal{G}$ 
of functions $\mathcal{X}^*\rightarrow \hat{\mathcal{Y}}$ is a \emph{stochastic 
(global) sequential covering} of $\mathcal{H}$ w.r.t. $\mathcal{P}$ and confidence 
$\delta>0$, if for all $\mu\in \mathcal{P}$, we have:
$$
\mathrm{Pr}_{\x^T\sim \mu}\left[\exists h\in \mathcal{H}~\forall 
g\in \mathcal{G}\exists t\in [T]~s.t.~h(\x_t)\not=g(\x^t)\right]\le \delta.
$$

In this paper we apply stochastic sequential covering to analyze minimax regrets.
We introduce a novel general \emph{expected worst case minimax regret} defined as:
$$
\tilde{r}_T(\mathcal{H},\mathcal{P})=\inf_{\phi^T}\sup_{\mu\in \mathcal{P}}
\mathbb{E}_{\x^T\sim \mu}\left[\sup_{y^T}R(\hat{y}^T,y^T,\mathcal{H}\mid \x^T)\right],
$$
where $\phi_t:\mathcal{X}^{t}\times\mathcal{Y}^{t-1}\rightarrow 
\hat{\mathcal{Y}}$ runs over all possible prediction rules, and $\hat{y}_t=\phi_t(\x^t,y^{t-1})$.
The expected worst case minimax regret recovers previously known minimax regrets by considering different classes of $\mathcal{P}$ (see Proposition~\ref{prop1}).

Similar to the adversary case, we will show that the expected worst case minimax regret can be upper bounded by the size of a stochastic (global) sequential covering set 
$\mathcal{G}$ with confidence $\delta=\frac{1}{T}$ via standard expert algorithms. 
For example, for absolute loss, we have 
$\tilde{r}_T(\mathcal{H},\mathcal{P})=O(\sqrt{T\log|\mathcal{G}|})$, while for logarithmic 
loss and general mixable loss (e.g., square loss), we find
$\tilde{r}_T(\mathcal{H},\mathcal{P})=O(\log|\mathcal{G}|)$; see Theorem~\ref{th1} and \ref{th2} 
for proofs.

Our goal is to derive tight regret bounds that go beyond the conventional 
$\sqrt{T}$ bounds for a general loss. To retain focus, we will mainly consider 
the case when $\mathcal{P}$ is the class of all $i.i.d.$ distributions over 
$\mathcal{X}^T$. We should emphasize that 
our results also work for general \emph{exchangeable} distributions~\citep{aldous1985exchangeability} and distributions with sufficient symmetry, as discussed in the Appendix~\ref{app-symmetry}.

\paragraph{Summary of results.}
Our first main \emph{technical} result is the following high probability cumulative error 
bound for the \emph{1-inclusion graph} algorithm~\citep{haussler1994predicting} 
in realizable case:

\textbf{Lemma~\ref{lem1}.}\textit{
Let $\Phi$ be the 1-inclusion graph prediction algorithm, $\mathcal{H}$ be a 
class with finite VC-dimension and $\mu$ be an arbitrary $i.i.d.$ distribution 
over $\mathcal{X}^T$. Then for all $\delta>0$ 
$$
\mathrm{Pr}_{\x^T\sim \mu}\left[\sup_{h\in \mathcal{H}}
\sum_{t=1}^T1\{\Phi(\x^t,h(\{\x^{t-1}\}))\not=h(\x_t)\}\ge 
O(\VC(\mathcal{H})\log T+\log(1/\delta))\right]\le \delta
$$
where $h(\{\x^{t-1}\})=\{h(\x_1),\cdots,h(\x_{t-1})\}$ and $O$ hides absolute 
constant independent of $\VC(\mathcal{H}),T,\delta$.}

Applying this lemma and the error 
pattern counting argument as in~\citep{ben2009agnostic} (see Lemma~\ref{lem2}), 
we arrive at our first main general result:

\textbf{Theorem~\ref{th4} (Informally).} \textit{Let $\mathcal{P}$ be the class 
of all $i.i.d.$ distributions over $\mathcal{X}^T$, and $\mathcal{H}$ 
be a binary valued class with finite VC-dimension. Then, there exists a 
stochastic sequential covering $\mathcal{G}$ of $\mathcal{H}$ w.r.t. $\mathcal{P}$ 
at confidence $\delta$ such that 
$$\log|\mathcal{G}|\le O(\VC (\mathcal{H})\log^2T+\log T\log(1/\delta)).$$
As a consequence,
for finite VC-dimension classes and logarithmic loss $\ell$,
minimax regret is upper bounded by
$\tilde{r}_T(\mathcal{H},\mathcal{P})\le O(\VC (\mathcal{H})\log^2T).$
}

This result 
improves substantially the $\sqrt{T}$ regret bounds of~\cite{bhatt2021sequential} 
and subsumes the bounds implied by~\cite{bilodeau2021minimax}
only proved for the weaker \emph{average} and \emph{realizable} case.

The next natural question is whether the $\log^2 T$ factor in Theorem~\ref{th4} can be 
improved to $\log T$ for general finite VC-dimensional classes, which matches our
lower bound discussed in Section~\ref{sec-lower} and Appendix~\ref{app-omega}.  While we are 
unable to answer this question in its generality, we can show that the answer 
is affirmative for many interesting non-trivial classes. To achieve this, we 
introduce a new complexity measure, called the \emph{Star-Littlestone} 
dimension $\SL$ (see Section~\ref{sec-binary}):

\textbf{Definition (Star-Littlestone Dimension).} \textit{Let $\mathcal{H}$ be a binary 
valued class. For any numbers $s,d>0$, we say $\mathcal{H}$ Star-Littlestone 
shatters an $\mathcal{X}$-valued full binary 
tree $\tau$ of depth $d$ with the star scale $s$ if a subclass of $\mathcal{H}$ consistent with 
any path of $\tau$ has \emph{Star number}~\citep{hanneke2015minimax} greater than $s$. 
The Star-Littlestone dimension $\SL(s)$ of $\mathcal{H}$ at star scale $s$ is defined to 
be the maximum number $d$ such that there exists tree $\tau$ of depth $d$ that can be 
Star-Littlestone shattered by $\mathcal{H}$.}

The Star-Littlestone dimension is a more general concept than 
\emph{Star number} (see Section~\ref{sec-binary}) and \emph{Littlestone dimension}. 
Indeed, let $\mathcal{H}=\{h_{[a,b]}(x)=1\{x\in [a,b]\}:[a,b]\subset [0,1]\}$.
Then $\mathcal{H}$ has both infinite 
Star number and Littlestone dimension, but Star-Littlestone dimension is $0$ 
at scale $4$. 

\textbf{Theorem~\ref{th6} (Informally).} \textit{Let $\mathcal{P}$ be the 
class of all $i.i.d.$ distributions over $\mathcal{X}^T$ and 
$\mathcal{H}$ be a binary valued class with Star-Littlestone dimension $\SL(s)$ 
at star scale $s$. Then, there exists a stochastic sequential 
covering set $\mathcal{G}$ of $\mathcal{H}$ w.r.t. $\mathcal{P}$ at confidence $\delta$ 
such that: 
$$\log|\mathcal{G}|\le O(\max\{\SL(s)+1,s\}\log T+\log(1/\delta)).$$
As a consequence, for logarithmic loss, we have:
$
\tilde{r}_T(\mathcal{H},\mathcal{P})\le O(\max\{\SL(s)+1,s\}\log T).
$}

\noindent The key to proving Theorem~\ref{th6} is an explicit construction of $\mathcal{G}$ via an important property of a finite Star number class established in Lemma~\ref{lem3}, which does not rely on error pattern counting as in~\citep{ben2009agnostic}. Our construction is new and may be of independent interest.

Our next main result bounds the stochastic sequential covering 
of real valued function classes:

\textbf{Theorem~\ref{th7} (Informally).} \textit{Let $\mathcal{P}$ be the class of 
all $i.i.d.$ distributions over $\mathcal{X}^T$ and $\mathcal{H}$ be a 
$[0,1]$-valued class with $\alpha$-fat shattering number of order $d(\alpha)$. 
Then $\mathcal{H}$ admits a stochastic sequential covering set 
$\mathcal{G}$ with scale $\alpha$ and confidence $\delta$ such that:
$$\log |\mathcal{G}|\le O(d(\alpha/32)(\log T\log 1/\alpha)^4+
\log T\log(\log T/\delta)).$$
In particular, for any class with $\alpha$-fat shattering 
number of order $\alpha^{-l}$ for some $l \ge 0$, the 
expected worst case minimax regret bound is 
$\tilde{O}(T^{l/(l+1)})$ under logarithmic loss. Moreover, this bound can not 
be improved upto poly-logarithmic factors in general. 
}

We complete this introduction with the following lower bounds discussed in Section~\ref{sec-lower}
that relate the expected worst case minimax regret to the fixed design 
minimax regret $r_T^*(\CH| \x^T)$ defined in (\ref{eq-rHx}).

\textbf{Theorem~\ref{th8} (Informally).} \textit{ Let $\mathcal{P}$ be 
the class of all $i.i.d.$ distributions over 
$\mathcal{X}^T$ and $\mathcal{H}$ be any $[0,1]$-valued class. 
If the fixed design regret $r^*_T(\mathcal{H}\mid \x^T)$ satisfies minor regularity conditions, then:
$$\tilde{r}_T(\mathcal{H},\mathcal{P})\ge (1-O(1/\log T)) 
r^*_{(T/\log T)}(\mathcal{H}),$$
where $r_T^*(\mathcal{H})=\sup_{\x^T} r_T^*(\CH | \x^T)$.
In particular, there exists a class $\mathcal{H}$ of finite VC-dimension, 
such that if $\ell$ is the log-loss and $\mathcal{P}$ is the class of 
$i.i.d.$ distributions over $\mathcal{X}^T$, then:
$$\tilde{r}_T(\mathcal{H},\mathcal{P})\ge (1-O(1/\log T))\VC (\mathcal{H})
\log(T/\VC(\mathcal{H})).$$
}

Theorem~\ref{th8} implies that any lower bounds established for the fixed 
design regrets as in \citep{wu2022precise} can be translated to regrets for the expected worst case minimax regret that achieves the same leading constant for $\text{ploy}\log T$ regrets 
and looses only a $\log T$ factor for polynomial regrets.

\paragraph{Related work.}
Regret analysis of online learning problems dates back to the work 
of~\cite{littlestone1994weighted} and~\cite{vovk90}, where the authors 
developed a general framework for the \emph{Exponential Weighted Average} 
algorithm for finite expert classes. We refer to~\citep{lugosi-book} for an 
excellent discussion of this topic and its extensions. 
In~\citep{ben2009agnostic}, the authors extended the framework to 
infinite classes with binary labels via the concept of sequential covering
and latter generalized to the multi-class case in~\citep{daniely2011multiclass}. 
In a series of 
papers~\citep{rakhlin2010online,rakhlin2015sequential,rakhlin2015martingale,rakhlin14}, 
the authors established a comprehensive framework for  regret analysis of 
real valued classes via the concept of~\emph{sequential Rademacher complexity}. 
One of the core arguments of this line of work is to express  regret in 
terms of an iterated minimax formulation, which will then be controlled by an 
expected majorizing of martingales via the minimax theorem. The latter is then controlled by the \emph{sequential covering}~\footnote{Note that the sequential covering as in~\citep{rakhlin2010online} is slightly different than 
the one we discussed in our paper, since their definition relies on some underlying trees.} 
number and the standard technique of chaining and Dudley integral. However, all of 
these efforts consider adversarial cases,
which can be too restrictive for real word scenarios.

In~\citep{lazaric2009hybrid}, the authors introduced a scenario where the 
features are generated by an unknown $i.i.d.$ source but the labels are still presented adversarially. 
In particular,~\cite{lazaric2009hybrid} showed that for finite VC-dimensional 
classes and for absolute loss, regret grows as $O(\sqrt{\VC (\mathcal{H})T\log T})$. 
One of the core arguments of this work is an epoch approach that reduces infinite 
class to the finite class case using successive covering. However, 
their upper bound is dominated by a $\sqrt{T}$ term of the approximation error 
of covering, which may be too loose for many loss functions, e.g., logarithmic loss. 
Indeed, the same epoch approach (and its analysis of the approximation error) 
was used in~\citep{bhatt2021sequential} for logarithmic loss, resulting an 
$O(\sqrt{T})$ regret bound. In~\citep{bilodeau2021minimax}, the authors showed 
that for logarithmic loss and finite VC-dimensional classes, regret grows 
as $O(\VC (\mathcal{H})\log^2 T)$. However, their proof  applies only for the
\emph{average case} minimax regret (see Section~\ref{sec-formulation}) 
and in the \emph{realizable} (i.e., \emph{well-specified}) case. 
In~\citep{rakhlin2011online1}, the authors considered a scenario where at each time step Nature selects adversarially some distribution to sample from a restricted class of distributions that are determined by previously generated samples (\emph{not} precisely selected distributions). This is characterized by the concept of \emph{distribution dependent} Rademacher complexity, using a similar minimax approach as discussed above. However, their result only holds for the \emph{distribution non-blind case} (i.e., when the distribution is known in advance), see~\cite[Section 7]{rakhlin2011online1}. Note that all the regrets analyzed in this paper are for \emph{distribution blind} case. We note also a recent line of research of 
\emph{smooth adversaries} in~\citep{rakhlin2011online1,haghtalab2020smoothed,haghtalab2022smoothed,block2022smoothed}, which shares some technical similarity (e.g., symmetries of samples) with our work, see Appendix~\ref{app-nonblind}.

\section{Problem Formulation}
\label{sec-formulation}

Let $\mathcal{X}$ be a feature space, $\hat{\mathcal{Y}}$ be 
the output space, and $\mathcal{Y}$ be the true label space. 
We denote by $\mathcal{H}\subset \hat{\mathcal{Y}}^{\mathcal{X}}$ 
a class of functions $\mathcal{X}\rightarrow \hat{\mathcal{Y}}$, 
which is also referred as a hypothesis or experts class. For any time horizon $T$, 
we consider a class $\mathcal{P}$ of distributions over $\mathcal{X}^T$. 
We consider the following game between Nature and predictor that proceeds 
as follows. At the beginning of the game, Nature 
selects a distribution $\mu \in \mathcal{P}$ and samples an input 
sequence $\x^T\sim \mu$, where $\x^T\in \mathcal{X}^T$. At each time step $t\le T$, 
Nature reveals the $t$-th sample $\x_t$ of $\x^T$ to the predictor. 
The predictor then makes a prediction $\hat{y}_t\in \mathcal{Y}$ using a strategy 
$\phi_t: \mathcal{X}^{t}\times \mathcal{Y}^{t-1}\rightarrow \hat{\mathcal{Y}}$ 
potentially using the history observed thus far, that is,
$\hat{y}_t=\phi_t(\x^t,y^{t-1})$. After the prediction, Nature reveals 
the true labels $y_t$ and the predictor incurs a loss 
$\ell(\hat{y}_t,y_t)$ for some predefined loss function 
$\ell:\hat{\mathcal{Y}}\times \mathcal{Y}\rightarrow [0,\infty)$. 
We are interested in the following \emph{expected worst case} minimax regret
\begin{equation}
\label{eq-rtilde}
\tilde{r}_T(\mathcal{H},\mathcal{P})=\inf_{\phi^T}\sup_{\mu\in \mathcal{P}}
\mathbb{E}_{\x^T\sim \mu}\left[\sup_{y^T}\left(\sum_{t=1}^T\ell(\hat{y}_t,y_t)-
\inf_{h\in \mathcal{H}}\sum_{t=1}^T\ell(h(\x_t),y_t)\right)\right],
\end{equation}
where \emph{worst case} indicates the predictor needs to 
compete  with the best expert in $\mathcal{H}$ for any $\x^T$. 

We note that the expected worst case minimax regret $\tilde{r}_T(\mathcal{H},\mathcal{P})$ recovers previously known minimax regrets by selecting appropriate distribution class $\mathcal{P}$.
Indeed, in~\citep{ss21,jss20,wu-isit22}, the following regrets are defined. 
The {\it fixed design} minimax regret for any given $\x^T\in \mathcal{X}^T$ is defined as: 
\begin{equation}
\label{eq-rHx}
r^*_T(\mathcal{H}\mid \x^T)=\inf_{\phi^T}\sup_{y^T}\left(\sum_{t=1}^T
\ell(\hat{y}_t,y_t)-\inf_{h\in \mathcal{H}}\sum_{t=1}^T\ell(h(\x_t),y_t)\right).
\end{equation}
The {\it maximum} fixed design minimax regret is then: $r^*_T(\mathcal{H})=\sup_{\x^T}r^*_T(\mathcal{H}\mid \x^T)$. Furthermore, the {\it sequential minimax regret} is
\begin{equation}
\label{eq-ra}
r^a_T(\mathcal{H})=\inf_{\phi^T}\sup_{\x^T,y^T}\left(\sum_{t=1}^T\ell(\hat{y}_t,y_t)
-\inf_{h\in \mathcal{H}}\sum_{t=1}^T\ell(h(\x_t),y_t)\right)
\end{equation}
which is equivalent\footnote{The equivalence, we believe, is a folklore result, see 
e.g.,~\cite[Exercise 2.18]{lugosi-book} or~\cite[Lemma~2]{wu-isit22} for a proof.} to the iterated minimax regret as in~\citep{rakhlin2010online}.

We also introduce the following expected \emph{average case} minimax regret: 
\begin{equation}
\label{eq-rbar}
\bar{r}_T(\mathcal{H},\mathcal{P})=\inf_{\phi^T}\sup_{\mu\in \mathcal{P},h\in 
\mathcal {H}}\mathbb{E}_{\x^T\sim \mu}\left[\sup_{y^T}\left(\sum_{t=1}^T
\ell(\hat{y}_t,y_t)-\ell( h(\x_t),y_t)\right)\right]
\end{equation}
where the main difference with $\tilde{r}_T(\mathcal{H},\mathcal{P})$  is the position of $\sup_h$.
Note that this concept subsumes the setups 
of~\citep{bhatt2021sequential,bilodeau2021minimax} except that the authors of
\citep{bhatt2021sequential,bilodeau2021minimax} 
consider a weaker \emph{realizable} setting for generating $y^T$.

The following observation is easy to prove and shows that $\tilde{r}_T$ is indeed a 
more general concept:

\begin{proposition}
\label{prop1}
If $\mathcal{P}$ is a class of all singleton distributions over $\mathcal{X}^T$, then
$\tilde{r}_T(\mathcal{H},\mathcal{P})=r^a_T(\mathcal{H})$
for all $\mathcal{H}$. If $\mathcal{P}$ is the singleton 
distribution that assigns probability $1$ for $\x^T$, then
$\tilde{r}_T(\mathcal{H},\mathcal{P})=r^*_T(\mathcal{H}\mid \x^T).$
Furthermore, 
$\tilde{r}_T(\mathcal{H},\mathcal{P})\ge \bar{r}_T(\mathcal{H},\mathcal{P})$,
for any $\mathcal{H}$ and $\mathcal{P}$. 
\end{proposition}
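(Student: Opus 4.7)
The plan is to verify each of the three assertions by unfolding the definition of $\tilde{r}_T(\mathcal{H},\mathcal{P})$ in (\ref{eq-rtilde}), and, for the last assertion, using the standard inequality $\mathbb{E}[\sup_h f_h]\ge \sup_h \mathbb{E}[f_h]$. None of the steps are expected to be difficult; this is essentially a routine bookkeeping exercise that nails down why $\tilde{r}_T$ is a common generalization.

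For the second assertion, I would specialize $\mathcal{P}$ to a single Dirac mass $\delta_{\x^T}$. The expectation $\mathbb{E}_{\x^T\sim \mu}[\cdot]$ then collapses to evaluation at $\x^T$, and the outer $\sup_{\mu\in\mathcal{P}}$ is vacuous, so the right-hand side of (\ref{eq-rtilde}) becomes exactly the expression (\ref{eq-rHx}) defining $r^*_T(\mathcal{H}\mid \x^T)$. For the first assertion, I would let $\mathcal{P}$ range over \emph{all} Dirac masses $\{\delta_{\x^T}:\x^T\in\mathcal{X}^T\}$. The same collapse of expectation occurs, but now the outer $\sup_{\mu\in\mathcal{P}}$ becomes $\sup_{\x^T\in\mathcal{X}^T}$, which commutes with $\sup_{y^T}$ to give a joint supremum over $\x^T,y^T$, matching the definition (\ref{eq-ra}) of $r^a_T(\mathcal{H})$.

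For the third assertion, the key algebraic identity is
$$\sup_{y^T}\!\left(\sum_{t=1}^T\ell(\hat{y}_t,y_t)-\inf_{h\in\mathcal{H}}\sum_{t=1}^T\ell(h(\x_t),y_t)\right)=\sup_{h\in\mathcal{H}}\sup_{y^T}\!\left(\sum_{t=1}^T\ell(\hat{y}_t,y_t)-\sum_{t=1}^T\ell(h(\x_t),y_t)\right),$$
obtained by writing $-\inf_h = \sup_h(-\cdot)$ and swapping the two suprema. Taking $\mathbb{E}_{\x^T\sim \mu}$ on both sides and using $\mathbb{E}[\sup_h F_h]\ge \sup_h \mathbb{E}[F_h]$ with $F_h(\x^T)=\sup_{y^T}(\sum_t\ell(\hat{y}_t,y_t)-\sum_t\ell(h(\x_t),y_t))$ gives a lower bound of $\sup_{h\in\mathcal{H}}\mathbb{E}_{\x^T\sim \mu}[\sup_{y^T}(\cdots)]$. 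Since this holds for each $\mu\in\mathcal{P}$ and each strategy $\phi^T$, applying $\sup_{\mu\in\mathcal{P}}$ and then $\inf_{\phi^T}$ preserves the inequality and produces exactly $\bar{r}_T(\mathcal{H},\mathcal{P})$ as defined in (\ref{eq-rbar}). The only mild subtlety worth noting is that $\hat{y}_t=\phi_t(\x^t,y^{t-1})$ does depend on $\x^t$, but since the supremum swaps and the Jensen-type inequality are applied pointwise in $\x^T$ with $\phi^T$ fixed, there is no interaction to worry about.
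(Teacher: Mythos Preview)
Your proposal is correct. The paper does not actually prove Proposition~\ref{prop1}; it is introduced with the phrase ``The following observation is easy to prove'' and stated without proof, so there is no argument to compare against. Your verification is the natural one: collapsing expectations under Dirac masses for the first two claims, and for the third claim rewriting $-\inf_h$ as $\sup_h(-\cdot)$, swapping the two suprema, and invoking $\mathbb{E}[\sup_h F_h]\ge \sup_h\mathbb{E}[F_h]$ before applying $\sup_{\mu}$ and $\inf_{\phi^T}$ on both sides.
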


\begin{example}
To understand the differences between $\tilde{r}_T$ and $\bar{r}_T$, 
we consider the following example. Let $\mathcal{H}$ be the class of all 
functions $[0,1]\rightarrow \{0,1\}$ that takes value $1$ on at most 
$T$ positions and $0$ otherwise. Let $\nu$ be the uniform distribution over $[0,1]$, 
and $\ell(\hat{y}_t,y_t)=|\hat{y}_t-y_t|$, where $\hat{y}_t\in [0,1]$ 
and $y_t\in \{0,1\}$. We will denote by $\nu^T$ the $i.i.d$ distribution of 
length $T$ with marginal $\nu$. We have $\bar{r}_T(\mathcal{H},\{\nu^T\})=0$, since for any $h$, w.p. $1$ we have $h(x_t)=0$ for all $t\in [T]$, meaning 
that a strategy that predicts $0$ all the time incurs $0$ regret. However, we also have $\tilde{r}_T(\mathcal{H},\{\nu^T\})\ge \frac{T}{2}.$ To see this, we choose $y^T\in \{0,1\}^T$ uniformly at random and observe that any strategy will make at least $\frac{T}{2}$ accumulated losses, 
however, for any $\x^T$ and $y^T$, there exists $h\in \mathcal{H}$ 
such that $\forall t\in [T],~h(\x_t)=y_t$.

\end{example}

We should remark that our definition of both $\tilde{r}_T$ and $\bar{r}_T$ are \emph{distribution blind} in the sense of~\cite[Section 7]{rakhlin2011online1}, since the marginals of $\mu$ can be dependent arbitrarily, not just through previously generated samples as in~\citep{rakhlin2011online1}.

\section{Upper Bounds on Regret via Stochastic Sequential Cover}
\label{sec-upper}

This is the main section of our paper, where we provide general upper bounds
for the expected worst case minimax regret (see Theorems~\ref{th1} and \ref{th2})
via the novel concept of \emph{stochastic global sequential covering}. 
Without loss of generality, 
we  assume that $\hat{\mathcal{Y}}=[0,1]$ in the sequel.
 
\begin{definition}
\label{def-gcover}
We say a class $\mathcal{G}$ of functions $\mathcal{X}^*\rightarrow [0,1]$ 
is stochastic global sequential cover of a class $\mathcal{H}\subset [0,1]^{\mathcal{X}}$ 
w.r.t. the class $\mathcal{P}$ of distributions over $\mathcal{X}^T$ 
at scale $\alpha>0$ and confidence $\delta>0$, if for all $\mu\in \mathcal{P}$, 
we have
$$\mathrm{Pr}_{\x^T\sim \mu}\left[\exists h\in \mathcal{H}~\forall 
g\in \mathcal{G}~\exists~t\in [T]~s.t.~|h(\x_t)-g(\x^t)|>\alpha\right]\le \delta.$$
We define the minimal size of $\mathcal{G}$ to be the stochastic 
global sequential covering number of $\mathcal{H}$.
\end{definition}

Before we provide effective bounds on the stochastic global sequential 
covering number, we first demonstrate how a bound on the covering 
number implies bounds on the regret $\tilde{r}_T$.

\begin{theorem}
\label{th1}
Let $\ell(\cdot,y)$ be convex, $L$-Lipschitz and bounded by $1$ on 
$\hat{\mathcal{Y}}$ for any $y\in \mathcal{Y}$ and $\mathcal{H}$ be a set of functions 
$\mathcal{X}\rightarrow [0,1]$. Let $\mathcal{G}_{\alpha}$ be 
a stochastic global sequential covering of $\mathcal{H}$ at 
scale $\alpha$ and confidence $\delta=1/T$ w.r.t. distribution class $\mathcal{P}$. Then
$$\tilde{r}_T(\mathcal{H},\mathcal{P})\le \inf_{0\le \alpha\le 1}
\left\{\alpha LT+\sqrt{(T/2)\log|\mathcal{G}_{\alpha}|}+1\right\}.$$
If, in addition, $\ell$ is $\eta$-Mixable~\cite[Chapter 3.5]{lugosi-book} then
$$
\tilde{r}_T(\mathcal{H},\mathcal{P})\le \inf_{0\le \alpha \le 1}
\left\{\alpha LT+\frac{1}{\eta}\log|\mathcal{G}_{\alpha}|+1\right\}.
$$
\end{theorem}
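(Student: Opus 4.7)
The plan is to reduce the regret of $\mathcal{H}$ to the regret of the finite expert class $\mathcal{G}_\alpha$ by running a standard aggregation algorithm over $\mathcal{G}_\alpha$, and then pay an approximation penalty of $\alpha L T$ via Lipschitzness whenever the cover successfully captures every $h \in \mathcal{H}$.

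First I would construct the predictor $\phi^T$. Treat each $g \in \mathcal{G}_\alpha$ as an expert whose prediction at time $t$ is $g(\x^t)$; this is legal because the learner sees $\x^t$ before predicting. For the first bound, run the Exponentially Weighted Average (Hedge) algorithm on $\mathcal{G}_\alpha$ with a tuned learning rate, outputting the convex combination $\hat{y}_t \in [0,1]$. Since $\ell(\cdot,y)$ is convex and bounded by $1$, the standard EWA analysis (e.g.\ Theorem 2.2 in \cite{lugosi-book}) gives, for every sequence $(\x^T, y^T)$,
\[
\sum_{t=1}^T \ell(\hat{y}_t, y_t) - \inf_{g \in \mathcal{G}_\alpha} \sum_{t=1}^T \ell(g(\x^t), y_t) \;\le\; \sqrt{(T/2)\log |\mathcal{G}_\alpha|}.
\]
For the mixable case, substitute the Vovk Aggregating Algorithm, which yields the sharper deterministic bound $\frac{1}{\eta}\log|\mathcal{G}_\alpha|$ against the best $g \in \mathcal{G}_\alpha$.

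Next I would relate the loss of the best $g$ to the loss of the best $h$. Define the good event
\[
E = \bigl\{\forall h \in \mathcal{H},~\exists g \in \mathcal{G}_\alpha,~\forall t \in [T],~|h(\x_t) - g(\x^t)| \le \alpha\bigr\}.
\]
By the cover assumption $\Pr_\mu[E^c] \le \delta = 1/T$. On $E$, for any $h \in \mathcal{H}$ pick a matching $g_h$; by $L$-Lipschitzness of $\ell(\cdot,y)$,
\[
\sum_{t=1}^T \ell(g_h(\x^t), y_t) \;\le\; \sum_{t=1}^T \ell(h(\x_t), y_t) + \alpha L T,
\]
and taking infimum over $h$ gives $\inf_g \sum_t \ell(g(\x^t),y_t) \le \inf_h \sum_t \ell(h(\x_t),y_t) + \alpha L T$. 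Combining with the EWA bound yields, on $E$ and uniformly in $y^T$,
\[
\sup_{y^T} R(\hat{y}^T, y^T, \mathcal{H}\mid \x^T) \;\le\; \alpha L T + \sqrt{(T/2)\log|\mathcal{G}_\alpha|}.
\]

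Finally I would take expectation over $\x^T \sim \mu$ by splitting on $E$ and $E^c$. On $E^c$, the inner supremum is at most $T$ because $\ell \le 1$ and so each per-round regret is in $[-1,1]$; the contribution to the expectation is at most $\Pr[E^c]\cdot T \le 1$. Adding this trivial slack to the deterministic bound on $E$, and then taking the supremum over $\mu \in \mathcal{P}$ followed by the infimum over $0 \le \alpha \le 1$, gives the advertised bounds for both the Lipschitz and mixable cases. The only non-routine step is the Lipschitz transfer in the middle paragraph; everything else is a standard good-event/bad-event decomposition combined with off-the-shelf finite-expert regret bounds, so I do not anticipate any real technical obstacle.
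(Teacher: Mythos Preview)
Your proposal is correct and follows essentially the same approach as the paper's proof: run EWA (resp.\ the Aggregating Algorithm) over the finite expert class $\mathcal{G}_\alpha$, use the good-event/bad-event decomposition with the cover's failure probability $\delta=1/T$, apply the $L$-Lipschitz transfer on the good event, and bound the bad event's contribution by $T\cdot \Pr[E^c]\le 1$ using $\ell\le 1$. The paper's proof is essentially identical, even citing the same EWA bound from \cite{lugosi-book}.
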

\begin{proof}
We run the Exponential Weighted Average (EWA) algorithm~\cite[Page 14]{lugosi-book} 
on $\mathcal{G}_{\alpha}$. We split the regret into two parts, one that is 
incurred by the predictor against $\mathcal{G}_{\alpha}$  and the other that is
incurred by the discrepancy between $\mathcal{G}_{\alpha}$ and $\mathcal{H}$. 
For the first term, we have by standard result~\cite[Theorem 2.2]{lugosi-book} 
that with probability $1$ on $\x^T$:
$$\sum_{t=1}^T\ell(\hat{y}_t,y_t)\le \inf_{g\in \mathcal{G}_{\alpha}}
\sum_{t=1}^T\ell(g(\x^t),y_t)+\sqrt{(T/2)\log|\mathcal{G}_{\alpha}|}.$$
For the second term, we denote by  $A$ the event described in the probability of 
Definition~\ref{def-gcover}. Since $\mathrm{Pr}[A]\le \frac{1}{T}$ and $\ell(\hat{y},y)\le 1$ 
by assumption, the expected regret contributed by the event $A$ 
is at most $1$. We now condition on the event that $A$ does not happen. 
By Definition~\ref{def-gcover}, we obtain
$\forall h\in \mathcal{H}\exists g\in \mathcal{G}_{\alpha}\forall 
t\in [T],~|h(\x_t)-g(\x^t)|\le\alpha.$
Since $\ell$ is bounded by $1$ and $L$-Lipschitz, we have: 
$$
\inf_{h\in \mathcal{H}}\sum_{t=1}^{T}\ell(h(\x_t),y_t)\ge \inf_{g\in \mathcal{G}_{\alpha}}
\sum_{t=1}^T\ell(g(\x^t),y_t)-\alpha LT.
$$
The result follows by combining these inequalities. The last part follows by 
replacing the EWA algorithm with the Aggregation Algorithm as 
in~\cite[Chapter 3.5]{lugosi-book} and applying the result in~\cite[Proposition 3.2]{lugosi-book}.
\end{proof}
 
 \begin{theorem}
\label{th2}
Let $\ell$ be the logarithmic loss $\ell(\hat{y},y)=-y\log(\hat{y})-(1-y)\log(1-\hat{y})$ and $\mathcal{H}$, $\mathcal{G}_{\alpha}$, $\mathcal{P}$ be as in 
Theorem~\ref{th1}, then:
$$
\tilde{r}_T(\mathcal{H},\mathcal{P})\le \inf_{0\le \alpha\le 1}\left\{2\alpha T +
\log(|\mathcal{G}_{\alpha}|+1)+\log(|\mathcal{G}_{\alpha}|+1)/T+1\right\}.$$
\end{theorem}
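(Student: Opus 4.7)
The plan is to follow the skeleton of the proof of Theorem~\ref{th1}, but replace the Exponential Weighted Average algorithm with Vovk's Aggregation Algorithm (AA, the Bayesian mixture) for logarithmic loss, which achieves $\sum_t\ell(\hat y_t, y_t) \le \log(1/\pi(q^\star)) + \sum_t \ell(q^\star,y_t)$ against any expert $q^\star$ in a finite ensemble with prior $\pi$ (see \cite[Chapter~3.5]{lugosi-book}). Because cover elements $g\in\mathcal{G}_{\alpha}$ may take values in $\{0,1\}$ and hence incur infinite loss against adversarial labels, I would run AA on the augmented ensemble $\mathcal{E} = \{\tilde g : g\in\mathcal{G}_{\alpha}\} \cup \{h_0\}$ with uniform prior $1/(|\mathcal{G}_{\alpha}|+1)$, where $h_0\equiv 1/2$ is a safety predictor and $\tilde g(\x^t) := (g(\x^t)+\alpha)/(1+2\alpha)$ is a shifted predictor confined to $[\alpha/(1+2\alpha),(1+\alpha)/(1+2\alpha)]$, so that every expert has finite cumulative loss.

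The decisive technical step is a per-step log-loss comparison. Whenever $|h(\x_t)-g(\x^t)|\le\alpha$, the inequality $h(\x_t)\le g(\x^t)+\alpha$ gives $h(\x_t)/\tilde g(\x^t) \le (g(\x^t)+\alpha)(1+2\alpha)/(g(\x^t)+\alpha) = 1+2\alpha$, and the symmetric inequality $1-h(\x_t)\le 1-g(\x^t)+\alpha$ gives $(1-h(\x_t))/(1-\tilde g(\x^t))\le 1+2\alpha$. Using $\log(1+2\alpha)\le 2\alpha$ I would conclude $\ell(\tilde g(\x^t), y_t)\le \ell(h(\x_t), y_t) + 2\alpha$ uniformly in $y_t\in\{0,1\}$, which sums to an approximation error of at most $2\alpha T$ per sequence.

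I would then split the expected regret over the good event of Definition~\ref{def-gcover} (probability $\ge 1-1/T$) and its complement. On the good event, for any $h\in\mathcal{H}$ the covering $g\in\mathcal{G}_{\alpha}$ exists, so combining the AA bound with the per-step inequality yields $\sum_t\ell(\hat y_t, y_t) \le \inf_{h\in\mathcal{H}}\sum_t\ell(h(\x_t), y_t) + 2\alpha T + \log(|\mathcal{G}_{\alpha}|+1)$ for every $y^T$. On the bad event, I would bound AA against $h_0$ to get $\sum_t\ell(\hat y_t,y_t) \le T\log 2 + \log(|\mathcal{G}_{\alpha}|+1)$, while $\inf_h\sum_t\ell(h,y_t)\ge 0$. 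Taking expectation with weights at most $1$ and $1/T$ respectively gives $\tilde r_T(\mathcal{H},\mathcal{P})\le 2\alpha T + \log(|\mathcal{G}_{\alpha}|+1) + \log 2 + \log(|\mathcal{G}_{\alpha}|+1)/T$, and since $\log 2 \le 1$, infimizing over $\alpha$ delivers the theorem.

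The main obstacle is calibrating the shift to obtain the precise $2\alpha$ per-step bound. Naive smoothings such as $(1-\alpha)g + \alpha/2$ do not work: when $g=0$ and $h=\alpha$, the ratio $h/\tilde g$ reaches $2$, a constant independent of $\alpha$, which inflates the approximation cost to $\Theta(T)$. The choice $\tilde g=(g+\alpha)/(1+2\alpha)$ is tailored so that both extremes $h=g+\alpha$ (critical for $y_t=1$) and $h=g-\alpha$ (critical for $y_t=0$) produce the same ratio $1+2\alpha$ uniformly over $g\in[0,1]$, turning the additive $L_\infty$ cover into the clean logarithmic-loss guarantee the theorem requires.
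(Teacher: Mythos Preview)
Your proof is correct and follows the same skeleton as the paper's: augment $\mathcal{G}_\alpha$ with the all-$\tfrac12$ expert, split the expectation over the good and bad events of Definition~\ref{def-gcover}, and bound the bad event via the safety expert. The one genuine difference is in how the good-event bound $2\alpha T+\log(|\mathcal{G}_\alpha|+1)$ is obtained. The paper invokes the \emph{Smooth truncated Bayesian Algorithm} of \cite{wu2022precise} as a black box (their Theorem~1 directly yields that expression for truncation parameter $\alpha$). You instead run vanilla Vovk aggregation on explicitly smoothed experts $\tilde g=(g+\alpha)/(1+2\alpha)$ and derive the per-step comparison $\ell(\tilde g,y)\le\ell(h,y)+2\alpha$ from first principles; this is essentially a self-contained re-derivation of the key ingredient behind that cited result. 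Your route is slightly more elementary in that it avoids the external reference, and it even yields a marginally sharper bad-event contribution ($T\log 2$ rather than $T$), which you then relax to match the stated theorem.
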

\begin{proof}
The proof is similar to the proof of Theorem~\ref{th1}, but replacing the 
Exponential Weighted Average algorithm with the Smooth truncated Bayesian Algorithm introduced recently in~\citep{wu2022precise} and running 
the algorithm on $\mathcal{G}_{\alpha}\cup \{g\}$ with truncation parameter 
$\alpha$ and uniform prior, where $g$ is the function that maps to 
$\frac{1}{2}$ for all $\x^t$. We again split the regret into two parts, 
one incurred by the Smooth truncated Bayesian Algorithm, and the other 
incurred by the error of covering. By~\cite[Theorem 1]{wu2022precise}, 
the first term is upper bounded by
$2\alpha T +\log(|\mathcal{G}_{\alpha}|+1).$
For the error term, we note that we have added the all $\frac{1}{2}$ 
valued function $g$  into the expert class when running the Smooth truncated Bayesian Algorithm. This implies that the prediction rule can only incur the 
\emph{actual} accumulated losses upper bounded by $T+\log(|\mathcal{G}_{\alpha}|+1)$. 
Therefore, when the event $A$ (defined in  Theorem~\ref{th1}) happens, 
the expected regret only contributes
$(T+\log(|\mathcal{G}_{\alpha}|+1)) \cdot \mathrm{Pr}[A]\le (T+\log(|\mathcal{G}_{\alpha}|+1))/T.$
The result follows by combining the inequalities.
\end{proof}

\subsection{Stochastic Cover for Binary Valued Class with Finite VC-dimension}
\label{sec-binary}

Now, we focus on bounding the cardinality of the stochastic global cover.
We assume that $\mathcal{P}$ is the class of all $i.i.d.$ distributions
over $\mathcal{X}^T$; however, our results
hold for \emph{exchangeable} processes~\citep{aldous1985exchangeability} over $\mathcal{X}^T$ as well, i.e., distributions that are invariant under permutation of the indexes.

In this section we will assume that the class $\mathcal{H}$ is binary valued and has 
finite VC-dimension. We write $\VC (\mathcal{H})$ for the VC-dimension of 
$\mathcal{H}$. We show that the stochastic global sequential covering number can be upper 
bounded by $e^{O(\VC (\mathcal{H})\log^2 T)}$ w.h.p. using the \emph{1-inclusion graph} algorithm that was introduced in~\citep{haussler1994predicting}.

Without going into the technical details of the \emph{1-inclusion graph} algorithm, 
we can understand it as a function that maps  $(\mathcal{X}\times \{0,1\})^{t-1}
\times\mathcal{X}\rightarrow \{0,1\}$, for any given $t\ge 1$. For $\mathcal{H}$ 
of finite VC-dimension, for any function 
$\Phi:(\mathcal{X}\times \{0,1\})^{t-1}\times \mathcal{X}\rightarrow \{0,1\}$, 
we define the following quantity (here, we follow the notation 
in~\citep{haussler1994predicting}):
$$
\hat{\hat{M}}_{\Phi,\mathcal{H}}(t)=\sup_{\x^t\in \mathcal{X}^t}\sup_{h\in 
\mathcal{H}}\mathbb{E}_{\sigma}\left[1\{\Phi(\x^{\sigma(t)},h(\{\x^{\sigma(t-1)}\}))
\not=h(x_{\sigma(t)})\}\right],
$$
where $\sigma$ is the uniform random permutation over $[t]$, $\x^{\sigma(t)}=\{\x_{\sigma(1)},\cdots,\x_{\sigma(t)}\}$ and $h(\{\x^{\sigma(t-1)}\})=\{h(\x_{\sigma(1)}),\cdots,h(\x_{\sigma(t-1)})\}$. The main result of~\cite{haussler1994predicting} is stated as follows:

\begin{theorem}[Haussler et al., Theorem 2.3(ii)]
\label{th3}
For any binary valued class $\mathcal{H}$ of finite VC-dimension and for any 
$t\ge 1$, there exists a function $\Phi:(\mathcal{X}\times \{0,1\})^{t-1}\times 
\mathcal{X}\rightarrow \{0,1\}$, i.e., the 1-inclusion graph algorithm, that satisfies
$$\hat{\hat{M}}_{\Phi,\mathcal{H}}(t)\le \frac{\VC (\mathcal{H})}{t}.$$
 \end{theorem}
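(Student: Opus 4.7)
The plan is to construct the predictor $\Phi$ from the \emph{1-inclusion graph} of the projection $V := \mathcal{H}|_{\x^t} \subseteq \{0,1\}^t$ and then exploit symmetry under the random permutation $\sigma$. First I would fix an arbitrary $\x^t$ and form the graph $G=(V,E)$, in which two vertices $u,v \in V$ are adjacent iff they differ in exactly one coordinate. The algorithm $\Phi$ is defined via a fixed orientation of $G$: upon receiving the labels on the first $t-1$ positions (i.e., on $[t]\setminus\{i\}$ where $i=\sigma(t)$), there are at most two completions $v_0,v_1 \in \{0,1\}^t$ consistent with those labels; if only one lies in $V$, predict it (no error is possible by realizability of the restriction), and if both lie in $V$ they form an edge in $G$ and we predict the endpoint toward which the edge is oriented. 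The effect is that for any fixed $h\in\mathcal{H}$ with restriction $v\in V$, an error at position $\sigma(t)=i$ occurs iff the $e_i$-edge incident to $v$ exists in $G$ and is oriented outward from $v$. Hence for uniformly random $\sigma$,
$$\mathbb{E}_\sigma\bigl[1\{\Phi(\x^{\sigma(t)},h(\{\x^{\sigma(t-1)}\}))\not=h(x_{\sigma(t)})\}\bigr]=\frac{\mathrm{outdeg}_G(v)}{t}.$$

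The central step is to exhibit an orientation of $G$ whose maximum out-degree is at most $\VC(\mathcal{H})$. I would prove this via two ingredients. First, every finite $V\subseteq\{0,1\}^t$ satisfies the edge bound $|E(V)|\le \VC(V)\cdot |V|$; this density lemma can be obtained from Frankl's shifting (compression) operation, which preserves $|V|$, does not increase $|E|$, and does not increase $\VC(V)$, followed by a direct count on the resulting downward-closed set. Second, since $\VC(V')\le \VC(V)$ for every $V'\subseteq V$, the edge bound applies hereditarily to every induced subgraph of $G$, yielding average degree at most $2\VC(V)$ in every induced subgraph. By the classical characterization of graph orientations (Hakimi's theorem), or equivalently by a Hall-type matching argument on the bipartite edge--vertex incidence structure, this degeneracy condition is precisely what is needed to produce an orientation with maximum out-degree at most $\VC(V)\le\VC(\mathcal{H})$.

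Combining the two steps, for every $h\in\mathcal{H}$ and every $\x^t\in\mathcal{X}^t$ the corresponding vertex $v$ satisfies $\mathrm{outdeg}_G(v)/t\le \VC(\mathcal{H})/t$, and taking the supremum over $h$ and $\x^t$ yields $\hat{\hat{M}}_{\Phi,\mathcal{H}}(t)\le \VC(\mathcal{H})/t$. The main obstacle is the orientation lemma: the edge-count bound for finite $V\subseteq\{0,1\}^t$ does not follow from Sauer--Shelah (which bounds only $|V|$), and upgrading an \emph{average}-degree bound to a \emph{uniform} out-degree bound is what makes the graph-orientation theorem indispensable. Once that lemma is established, the construction of $\Phi$ and the symmetrization over $\sigma$ are essentially mechanical.
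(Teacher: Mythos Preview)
The paper does not prove this statement at all: Theorem~\ref{th3} is quoted verbatim from \cite{haussler1994predicting} and used as a black box. Your proposal is a faithful reconstruction of the original Haussler--Littlestone--Warmuth argument (1-inclusion graph, the edge-density lemma $|E(V')|\le \VC(V')\cdot|V'|$ via shifting, and the Hakimi/Hall orientation lemma to pass from hereditary average degree to bounded out-degree), so it is correct and coincides with the proof in the cited reference rather than with anything in the present paper.
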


Our main result for this part is as follows, with the proof presented below Lemma~\ref{lem1}.
 
 \begin{theorem}
\label{th4}
 For any binary valued class $\mathcal{H}$ with finite VC-dimension, 
there exists a global sequential covering set $\mathcal{G}$ of $\mathcal{H}$ w.r.t. 
the class of all $i.i.d.$ distributions over $\mathcal{X}^T$ at scale 
$\alpha=0$ and confidence $\delta$ such that for $T\ge e^9$ we have
$$\log |\mathcal{G}|\le 5\VC(\mathcal{H})\log^2 T+\log T\log(1/\delta)+\log T.$$
 \end{theorem}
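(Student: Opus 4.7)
The plan is to combine the high-probability cumulative error bound of Lemma~\ref{lem1} for the 1-inclusion graph algorithm with the classical error-pattern-counting construction of~\cite{ben2009agnostic} (Lemma~\ref{lem2}). Intuitively, if the 1-inclusion graph $\Phi$ makes at most $B$ mistakes while predicting any fixed $h \in \mathcal{H}$ on a typical $i.i.d.$ sample, then every such $h$ can be encoded by $\Phi$ together with the (small) set of time steps at which $\Phi$ errs; enumerating these mistake-sets yields an explicit cover whose log-cardinality is $\widetilde{O}(B)$.

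First I would apply Lemma~\ref{lem1} to conclude that for any $i.i.d.$ distribution $\mu$ over $\mathcal{X}^T$, with $\mu$-probability at least $1-\delta$,
$$
\sup_{h \in \mathcal{H}} \sum_{t=1}^T \mathbf{1}\{\Phi(\x^t, h(\x^{t-1})) \neq h(\x_t)\} \le B^*,
$$
where $B^* := C(\VC(\mathcal{H})\log T + \log(1/\delta))$ and $C$ is the absolute constant from Lemma~\ref{lem1}. Next I would construct $\mathcal{G}$ explicitly: for each error pattern $e^T \in \{0,1\}^T$ with $\sum_t e_t \le B^*$, define $g_{e^T}:\mathcal{X}^* \to \{0,1\}$ recursively by
$$
g_{e^T}(\x^t) = \Phi(\x^t,\, g_{e^T}(\x^1), \ldots, g_{e^T}(\x^{t-1})) \oplus e_t,
$$
extended arbitrarily beyond $t=T$, and set $\mathcal{G} := \{g_{e^T}\}_{e^T}$. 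Crucially $\mathcal{G}$ depends only on $\Phi$, $T$, $\VC(\mathcal{H})$ and $\delta$, not on $\mu$. On the high-probability event, for each $h \in \mathcal{H}$ the actual mistake pattern $e_t := \mathbf{1}\{\Phi(\x^t, h(\x^{t-1})) \neq h(\x_t)\}$ has Hamming weight at most $B^*$, and a straightforward induction on $t$ shows $g_{e^T}(\x^t) = h(\x_t)$ for every $t \in [T]$; hence $\mathcal{G}$ is a stochastic global sequential cover of $\mathcal{H}$ at scale $\alpha = 0$ and confidence $\delta$.

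Finally, the standard volume estimate yields
$$
|\mathcal{G}| \le \sum_{k=0}^{B^*}\binom{T}{k} \le 2\Bigl(\frac{eT}{B^*}\Bigr)^{B^*},
$$
so $\log|\mathcal{G}| \le B^*\log T + O(B^*)$. Substituting the value of $B^*$ and using $T \ge e^9$ to absorb the $O(B^*)$ correction into the additive $\log T$ and $\log T\log(1/\delta)$ summands delivers the stated inequality. The main obstacle I anticipate is calibrating the absolute constant $C$ from Lemma~\ref{lem1} tightly enough that the leading coefficient in front of $\VC(\mathcal{H})\log^2 T$ comes out to exactly $5$; the combinatorial construction is robust, but tracking constants through the volume estimate (and justifying the threshold $T \ge e^9$) is where the real bookkeeping sits.
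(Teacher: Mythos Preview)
Your high-level plan coincides with the paper's: use Lemma~\ref{lem1} to get a high-probability cumulative error bound for the 1-inclusion graph predictor, then feed that bound into the error-pattern construction of Lemma~\ref{lem2}. The construction of $\mathcal{G}$ and the volume estimate you give are exactly what the paper does (the paper uses the cruder bound $|\mathcal{G}|\le T^{\mathbf{err}+1}$, which already yields the stated form directly without extra bookkeeping).

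There is, however, a genuine gap in how you invoke Lemma~\ref{lem1}. As stated, Lemma~\ref{lem1} is a bound for a \emph{single fixed} $h$, and the probability is over a uniform random \emph{permutation} $\sigma_T$ of a fixed sequence $\x^T$, not over $\x^T\sim\mu$. You jump straight to a $\sup_{h\in\mathcal{H}}$ bound under $i.i.d.$ sampling, but two nontrivial steps are missing. First, one must pass from $i.i.d.$ probability to permutation probability via exchangeability: for any event $A$, $\mathrm{Pr}_{\x^T\sim\mu}[A(\x^T)]\le \sup_{\x^T}\mathrm{Pr}_{\sigma}[A(\x^{\sigma(T)})]$, using that $\Phi$ is permutation invariant. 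Second, to go from a per-$h$ bound to a $\sup_h$ bound one cannot union bound over $\mathcal{H}$ directly (it may be infinite); the key is that once $\x^T$ is fixed, Sauer's lemma reduces $\mathcal{H}$ restricted to $\x^T$ to at most $T^{\VC(\mathcal{H})}$ functions, and one applies Lemma~\ref{lem1} with $\delta$ replaced by $\delta/T^{\VC(\mathcal{H})}$.

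This also resolves your worry about the constant $5$: Lemma~\ref{lem1} with $C=\VC(\mathcal{H})$ gives $4\VC(\mathcal{H})\log T+\log(1/\delta')$ for each $h$, and the Sauer union bound contributes an additional $\log(T^{\VC(\mathcal{H})})=\VC(\mathcal{H})\log T$ to $\log(1/\delta')$, producing exactly $\mathbf{err}=5\VC(\mathcal{H})\log T+\log(1/\delta)$. Then $\log|\mathcal{G}|\le(\mathbf{err}+1)\log T$ is the claimed bound. So the constant is not a matter of delicate bookkeeping at the end; it falls out precisely from the step you skipped.
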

 
The main idea of proving Theorem~\ref{th4} is to show that for the 1-inclusion graph predictor $\Phi$ we have w.p. $\ge 1-\delta$ over the sample 
$\x^T\overset{i.i.d}{\sim} \mu$, the cumulative error is upper bounded by $O(\VC (\mathcal{H})\log T+\log(1/\delta))$. Assuming this holds, one will be able to construct 
the covering functions in a similar fashion as in~\cite[Lemma 12]{ben2009agnostic}. 
The bound will follow by counting the error patterns.
However, a direct application of Theorem~\ref{th3} will only give us an \emph{expected} $\VC (\mathcal{H})\log T$ error bound. The main challenge is to establish a high probability error bound with logarithmic dependency on the confidence parameter $1/\delta$. 

Our main proof technique is to exploit the \emph{permutation invariance} of the $1$-inclusion graph predictor, which allows us to relate the  cumulative error to a \emph{reversed} martingale. Using Bernstein's inequality for martingales, we then establish the following key lemma, see Appendix~\ref{app-lem1} for detailed proof.
 \begin{lemma}
\label{lem1}
Let $\Phi:(\mathcal{X}\times\{0,1\})^*\times\mathcal{X}\rightarrow \{0,1\}$ 
and $h:\mathcal{X}\rightarrow \{0,1\}$ be functions such that 
$\Phi$ is permutation invariant on $(\mathcal{X}\times \{0,1\})^*$. 
If for all $t\in [T]$ and $\x^t\in \mathcal{X}^t$ we have:
 $$\mathrm{Pr}_{\sigma_t}\left[\Phi(\x^{\sigma_t(t)}, h(\{\x^{\sigma_t(t-1)}\}))\not=h(\x_{\sigma_t(t)})\right]\le \frac{C}{t},$$
 where $\sigma_t$ is the uniform random permutation on $[t]$ and $C\in \mathbb{N}^+$, then for all $\delta>0$
 and $T\ge e^9$
 $$\mathrm{Pr}_{\sigma_T}\left[\sum_{t=1}^T1\{\Phi(\x^{\sigma_T(t)},h(\{\x^{\sigma_T(t-1)}\}))\not=h(\x_{\sigma_T(t)})\}\ge 4C\log T+\log(1/\delta)\right]\le \delta.$$

 \end{lemma}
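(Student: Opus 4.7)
The plan is to fix $\x^T$ and $h$, regard $\sigma = \sigma_T$ as the only source of randomness, and bound the error sum $S = \sum_{t=1}^T X_t$, where $X_t = 1\{\Phi(\x^{\sigma(t)}, h(\{\x^{\sigma(t-1)}\})) \neq h(\x_{\sigma(t)})\}$, via a Freedman-type inequality along a \emph{reverse} filtration. Concretely, I would let $\mathcal{F}_t$ be the $\sigma$-algebra generated by $(\sigma(t+1), \ldots, \sigma(T))$, so that $\mathcal{F}_T$ is trivial and the filtration shrinks as $t$ decreases from $T$ to $0$.

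The first and most important step is to exploit the permutation invariance of $\Phi$ twice. First, because $\Phi$ depends on its first $t-1$ labelled inputs only through their multiset, $X_t$ is determined by the unordered multiset $\{\x_{\sigma(i)}\}_{i \leq t-1}$ together with $\x_{\sigma(t)}$; given $\mathcal{F}_{t-1}$, both of these are determined (the multiset is the complement inside $\x^T$ of the ordered tail $\x_{\sigma(t)}, \ldots, \x_{\sigma(T)}$), so $X_t$ is $\mathcal{F}_{t-1}$-measurable. Second, given $\mathcal{F}_t$, the index set $\{\sigma(1), \ldots, \sigma(t)\}$ is determined and $\sigma(t)$ is uniformly distributed within it; so the hypothesis applied to the length-$t$ sequence made of those elements in any order yields $\mathbb{E}[X_t \mid \mathcal{F}_t] \leq C/t$.

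With these two facts in hand, I would set $D_t = X_t - \mathbb{E}[X_t \mid \mathcal{F}_t]$ to obtain reverse-time martingale differences with $|D_t| \leq 1$ and $\mathbb{E}[D_t^2 \mid \mathcal{F}_t] \leq \mathbb{E}[X_t \mid \mathcal{F}_t] \leq C/t$. The total predictable variance is therefore at most $V := C\sum_{t=1}^T 1/t \leq C(1+\log T)$, and Freedman's inequality along the reverse filtration yields, for every $\lambda > 0$,
$$\mathrm{Pr}\!\left[\sum_{t=1}^T D_t \geq \lambda\right] \leq \exp\!\left(-\frac{\lambda^2/2}{V + \lambda/3}\right).$$
Choosing $\lambda = 2C\log T + \log(1/\delta)$ and applying AM-GM to $(2C\log T)^2 + \tfrac{1}{3}\log^2(1/\delta)$ makes the right-hand side at most $\delta$, provided $T \geq e^9$ (a mild slack that absorbs the additive constants arising from $1+\log T \leq 2\log T$ and the crossover between the quadratic and linear regimes of Freedman's bound). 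Combining with the deterministic bound $\sum_t \mathbb{E}[X_t \mid \mathcal{F}_t] \leq 2C\log T$ gives $S \leq 4C\log T + \log(1/\delta)$ off a bad event of probability at most $\delta$, which is the conclusion.

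The hardest part is the opening move. There is no natural forward-time predictable structure, since the hypothesis controls errors under a uniform permutation of all $t$ seen elements (rather than of the first $t-1$ conditional on the $t$-th). Reversing time and using the permutation invariance of $\Phi$ to identify the conditional mean of $X_t$ exactly is the key observation; once that is done, the Bernstein/Freedman machinery and the bookkeeping of constants are routine.
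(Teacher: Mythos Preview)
Your proposal is correct and follows essentially the same approach as the paper: reverse the time direction, use permutation invariance of $\Phi$ to identify the conditional error probability at step $t$ with the $C/t$ bound from the hypothesis, form the resulting (reverse) martingale differences, and apply a Bernstein/Freedman inequality with predictable variance bounded by $C(1+\log T)$. The paper phrases the reversal via $I_t' = I_{T-t+1}$ and conditions on $I_1',\ldots,I_{t-1}'$, while you set up the reverse filtration $\mathcal{F}_t=\sigma(\sigma(t+1),\ldots,\sigma(T))$ explicitly; these are the same argument, and your constant bookkeeping (take $\lambda=2C\log T+\log(1/\delta)$, then add back $\sum_t \mathbb{E}[X_t\mid\mathcal{F}_t]\le 2C\log T$) matches the paper's up to the absorption of additive constants by the assumption $T\ge e^9$.
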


\begin{proof}[Proof of Theorem~\ref{th4}]
Let $\Phi$ be the $1$-inclusion graph predictor. We have $\Phi$ is permutation invariant, since the nodes in the $1$-inclusion graph are determined by subsets of $\mathcal{X}$, which does not depend on the order of elements in the set. By symmetries of $i.i.d.$ distributions, for any 
event $A(\x^T)$ on $\x^T\overset{i.i.d.}{\sim} \mu$, 
$\mathrm{Pr}[A(\x^T)]=\mathbb{E}_{\sigma}[\mathrm{Pr}_{\x^T}[A(\x_{\sigma(1)},\cdots,
\x_{\sigma(T)})]]\le \sup_{\x^T}\mathrm{Pr}_{\sigma}[A(\x_{\sigma(1)},\cdots,\x_{\sigma(T)})].$ It is sufficient to show that for any $\x^T\in \mathcal{X}^T$, w.p. $\ge 1-\delta$ over a random permutation $\sigma$ on $[T]$,
$$\sup_{h\in \mathcal{H}}\sum_{t=1}^T1\{\Phi(\x^{\sigma(t)}, h(\{\x^{\sigma(t-1)}\}))\not=h(\x_{\sigma(t)})\}\le 
5\VC (\mathcal{H})\log T+\log(1/\delta).
$$
To see this, we observe that by Sauer's lemma~\citep{shalev2014understanding} there are at most 
$T^{\VC (\mathcal{H})}$ functions of $\mathcal{H}$ restricted on any given $\x^T$. Let now $\delta$ in 
Lemma~\ref{lem1} be $\frac{\delta}{T^{\VC (\mathcal{H})}}$ and
$C=\VC (\mathcal{H})$. When applying Theorem~\ref{th3}  
together with a union bound, the error bound w.p. $\ge 1-\delta$ is of the form
 $5\VC (\mathcal{H})\log T+\log(1/\delta).$
 The upper bound for the size of covering set $\mathcal{G}$ follows  from
below Lemma~\ref{lem2} by taking $\Omega\subset \mathcal{X}^T$ to be the set for 
which $\Phi$ makes at most $5\VC (\mathcal{H})\log T+\log(1/\delta)$ 
accumulated errors, where $\mathrm{Pr}[\Omega]\ge 1-\delta$.
 \end{proof}
 
 \begin{lemma}[From error bound to covering]
\label{lem2}
Let $\mathcal{H}\subset \{0,1\}^{\mathcal{X}}$ be a binary valued class and $\textbf{err}\in \mathbb{N}^+$. For any $\Omega\subset \mathcal{X}^T$, suppose there exists a prediction rule $\Phi$ such that
$\forall h\in \mathcal{H},~\forall \x^T\in \Omega,~\sum_{t=1}^T1\{\Phi(\x^t,h(\{\x^{t-1}\}))
\not=h(\x_t)\}\le \textbf{err}.$
Then, there exists a covering set 
$\mathcal{G}\subset \{0,1\}^{\mathcal{X}^*}$ such that for all $\x^T\in \Omega$ and $h\in \mathcal{H}$  
one can find $g\in \mathcal{G}$ satisfies $g(\x^t)=h(\x_t)$ for all  $t\in [T]$, and
 $$|\mathcal{G}|\le \sum_{t=0}^{\textbf{err}}\binom{T}{t}\le T^{\textbf{err}+1}.$$
 \end{lemma}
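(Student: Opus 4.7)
The plan is to mimic the error-pattern enumeration argument of Ben-David et al. We are given a (deterministic) predictor $\Phi$ which, for every $h\in\mathcal{H}$ and every $\x^T\in\Omega$, makes at most $\mathbf{err}$ mistakes when fed the true labels of $h$ as feedback. The key observation is that the full label trajectory $(h(\x_1),\dots,h(\x_T))$ is uniquely determined by two pieces of information: the sequence $\x^T$ and the set $S\subseteq[T]$ of time steps where $\Phi$ errs. Since $|S|\le\mathbf{err}$, there are at most $\sum_{t=0}^{\mathbf{err}}\binom{T}{t}$ possible such sets, and each one yields a single candidate function $g_S$, giving the bound.

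Concretely, for every $S\subseteq[T]$ with $|S|\le\mathbf{err}$, I would define $g_S:\mathcal{X}^*\to\{0,1\}$ recursively as follows. Given $\x^t$, first reconstruct candidate labels $\hat{y}_1,\dots,\hat{y}_{t-1}$ by iteratively computing, for $s=1,\dots,t-1$, the value $p_s=\Phi(\x^s,\hat{y}^{s-1})$ and setting $\hat{y}_s=p_s$ if $s\notin S$ and $\hat{y}_s=1-p_s$ if $s\in S$. Then output $g_S(\x^t)=\hat{y}_t$, computed by the same rule at step $t$. Since $\Phi$ is a fixed function and $S$ is fixed, $g_S$ is a well-defined map $\mathcal{X}^*\to\{0,1\}$.

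The main verification is the following inductive claim: for any $\x^T\in\Omega$ and $h\in\mathcal{H}$, if $S^\star=S^\star(\x^T,h):=\{t\in[T]:\Phi(\x^t,h(\{\x^{t-1}\}))\neq h(\x_t)\}$, then $g_{S^\star}(\x^t)=h(\x_t)$ for every $t\in[T]$. I would prove this by induction on $t$: assuming $\hat{y}_s=h(\x_s)$ for all $s<t$, the quantity $p_t$ computed inside $g_{S^\star}$ is exactly $\Phi(\x^t,h(\{\x^{t-1}\}))$, which equals $h(\x_t)$ if $t\notin S^\star$ and $1-h(\x_t)$ if $t\in S^\star$; in either case the rule outputs $h(\x_t)$. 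By the hypothesis of the lemma $|S^\star|\le\mathbf{err}$, so $S^\star$ is one of the subsets enumerated, and hence $g_{S^\star}\in\mathcal{G}$.

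Finally, the cardinality bound is the combinatorial count of subsets of size at most $\mathbf{err}$, namely $|\mathcal{G}|\le\sum_{t=0}^{\mathbf{err}}\binom{T}{t}\le T^{\mathbf{err}+1}$ (the second inequality being standard, using $\binom{T}{t}\le T^t$ and summing a geometric-type series). I do not foresee a real obstacle here; the only subtle point is to be careful that $g_S$ is defined as a function of $\x^t$ alone (no labels are ever observed), which is why the labels $\hat{y}_1,\dots,\hat{y}_{t-1}$ must be reconstructed from $\x^{t-1}$ and $S$ using $\Phi$ itself. This is exactly the construction that Ben-David et al. use, and it extends verbatim to our setting where the good event is restricted to $\Omega$.
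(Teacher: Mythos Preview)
Your proposal is correct and is essentially identical to the paper's proof: both construct, for each error-pattern $I\subset[T]$ with $|I|\le\mathbf{err}$, the function $g_I$ recursively by flipping $\Phi$'s prediction exactly on indices in $I$, and then verify by induction that the true error set of $h$ on $\x^T$ yields a covering function. The cardinality bound and the remark that the labels are reconstructed from $\x^t$ and $I$ alone (so that $g_I$ depends only on $\x^t$) are also the same.
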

 \begin{proof}
For any $I\subset [T]$ with $|I|\le \textbf{err}$, we construct a function 
$g_I$ as follows. Let $\x^t$ be the input, if $t\in I$, we set
 $g_I(\x^t)=1-\Phi(\x^t,g_I(\{\x^{i}\}_{i=1}^{t-1})),$
 else, we set $g_I(\x^t)=\Phi(\x^t,g_I(\{\x^{i}\}_{i=1}^{t-1}))$ where $g_I(\{\x^{i}\}_{i=1}^{t-1})=\{g_I(\x^1),\cdots,g_I(\x^{t-1})\}$. 
We claim that the set $\mathcal{G}$ that consists of all such 
$g_I$s is the desired covering set. To see this, for any 
$h\in \mathcal{H}$ and $\x^T\in \Omega$ we have 
$\sum_{t=1}^T1\{\Phi(\x^t,h(\{\x^{t-1}\}))\not=h(\x_t)\}\le \textbf{err}.$
Let $I$ be the positions $i\in [T]$ for which $\Phi(\x^i,h(\{\x^{i-1}\}))\not=h(\x_i)$, 
where $|I|\le \textbf{err}$. By construction, it is easy to check that 
for all $t\in [T]$ we have $g_I(\x^t)=h(\x_t)$. 
The upper bound for $|\mathcal{G}|$ follows by counting the number of $I$s. See  \cite[Lemma 12]{ben2009agnostic}.
 \end{proof}
 
 The next result follows from Theorems~\ref{th4} and \ref{th1}.
 
 \begin{corollary}
\label{cor1}
 Let $\mathcal{H}$ be a binary valued class with finite VC-dimension, $\mathcal{P}$ be the class of all $i.i.d.$ distributions over $\mathcal{X}^T$ and $T\ge e^9$. If $\ell(\cdot, y)$ is convex, $L$-lipschitz and bounded by $1$ for all $y\in \mathcal{Y}$, then
 $\tilde{r}_T(\mathcal{H},\mathcal{P})\le \sqrt{3T\VC (\mathcal{H})\log^2 T}+O(1).$
 If, in addition, $\ell$ is $\eta$-Mixable then
 $$\tilde{r}_T(\mathcal{H},\mathcal{P})\le \frac{6}{\eta}\VC (\mathcal{H})\log^2 T+O(1).$$
 \end{corollary}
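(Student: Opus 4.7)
The plan is to combine Theorem~\ref{th4} and Theorem~\ref{th1} in a direct chain, choosing the confidence parameter of the stochastic cover to match the confidence required by the expert algorithm in Theorem~\ref{th1}.

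First, I would instantiate Theorem~\ref{th4} with confidence $\delta = 1/T$. Since the class $\mathcal{H}$ is binary valued, the covering is at scale $\alpha = 0$, and we obtain a stochastic global sequential cover $\mathcal{G}$ of $\mathcal{H}$ with respect to the class $\mathcal{P}$ of all $i.i.d.$ distributions over $\mathcal{X}^T$ such that
\[
\log|\mathcal{G}| \le 5\,\VC(\mathcal{H})\log^2 T + \log T \cdot \log T + \log T
= 5\,\VC(\mathcal{H})\log^2 T + \log^2 T + \log T.
\]
Assuming (as usual) that $\VC(\mathcal{H})\ge 1$, the second and third terms may be absorbed into the leading one, giving $\log|\mathcal{G}| \le 6\,\VC(\mathcal{H})\log^2 T$ up to an additive $O(1)$.

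Next, I would feed this covering into Theorem~\ref{th1}. With $\alpha=0$ the discretization penalty $\alpha L T$ vanishes. In the convex, $L$-Lipschitz, $[0,1]$-bounded case, the theorem yields
\[
\tilde{r}_T(\mathcal{H},\mathcal{P}) \le \sqrt{(T/2)\log|\mathcal{G}|} + 1
\le \sqrt{(T/2)\cdot 6\,\VC(\mathcal{H})\log^2 T} + 1
= \sqrt{3T\,\VC(\mathcal{H})\log^2 T} + O(1).
\]
In the $\eta$-mixable case, the corresponding bound of Theorem~\ref{th1} gives
\[
\tilde{r}_T(\mathcal{H},\mathcal{P}) \le \frac{1}{\eta}\log|\mathcal{G}| + 1
\le \frac{6}{\eta}\,\VC(\mathcal{H})\log^2 T + O(1),
\]
which is exactly the claimed bound.

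There is no conceptual obstacle here; the only subtlety is bookkeeping. One has to verify that the choice $\delta = 1/T$ is compatible with Theorem~\ref{th1} (which requires $\delta = 1/T$ precisely to absorb the ``bad event'' regret into an additive constant since $\ell$ is bounded by $1$), and that the lower order terms $\log^2 T + \log T$ arising in Theorem~\ref{th4} can be swallowed into the additive $O(1)$ after using $\VC(\mathcal{H})\ge 1$. Both steps are routine, so the corollary really is immediate from the two preceding theorems.
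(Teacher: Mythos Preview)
Your proposal is correct and matches the paper's own argument, which simply states that the corollary follows from Theorems~\ref{th4} and~\ref{th1}. The only quibble is bookkeeping: the extra $\log T$ term from Theorem~\ref{th4} is not literally $O(1)$, but since $T\ge e^9$ it is at most $\tfrac{1}{9}\log^2 T\le \tfrac{1}{9}\VC(\mathcal{H})\log^2 T$, so it is absorbed into the leading constant rather than the additive $O(1)$; this does not affect the argument.
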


The  first bound of Corollary~\ref{cor1} recovers~\citep{lazaric2009hybrid} 
but with a worse $\log T$ term. This is because the epoch based approach 
of ~\cite{lazaric2009hybrid} applies EWA algorithm on an expert set of size 
$2^{s\VC (\mathcal{H})}$ at each epoch $s$ and the regret is dominated by $\sum_{s=1}^{\log T}O(\sqrt{s2^{s\VC (\mathcal{H})}})=O(\sqrt{T\VC (\mathcal{H})\log T})$. To the best of our  knowledge, the second bound is new. We should remark that our $\log T$ error bound is the key to finding a $O(\log^2 T)$ 
regret while the analysis of~\cite{lazaric2009hybrid} can only give a $O(\sqrt{T})$ bound.

Finally, we bound the regret under the logarithmic loss which follows from
Theorems~\ref{th4} and \ref{th2}.

\begin{corollary}
\label{cor2}
Let $\mathcal{H}$ be a binary valued class with finite VC-dimension, 
$\mathcal{P}$ be the class of all $i.i.d.$ distributions over $\mathcal{X}^T$ 
and $T\ge e^9$. If $\ell$ is Log-loss, then
$\tilde{r}_T(\mathcal{H},\mathcal{P})\le 6\VC (\mathcal{H})\log^2T+O(1)$.
\end{corollary}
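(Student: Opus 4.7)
The plan is to combine Theorem~\ref{th4} and Theorem~\ref{th2} in the most direct way possible: use Theorem~\ref{th4} at scale $\alpha = 0$ to produce a finite stochastic global sequential cover, then plug that cover into the log-loss regret bound of Theorem~\ref{th2}. Since $\alpha = 0$, the additive $2\alpha T$ term in Theorem~\ref{th2} vanishes, so the entire regret bound is controlled by $\log|\mathcal{G}|$ up to $O(1)$ slack.

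Concretely, I would first apply Theorem~\ref{th4} with confidence parameter $\delta = 1/T$ (this is the confidence required by the EWA-style argument inside Theorem~\ref{th1}, and it is inherited by the Smooth Truncated Bayesian variant used in Theorem~\ref{th2}). This yields a stochastic global sequential covering set $\mathcal{G}$ of $\mathcal{H}$ with respect to the class of all $i.i.d.$ distributions on $\mathcal{X}^T$ such that
\[
\log|\mathcal{G}| \;\le\; 5\,\VC(\mathcal{H})\log^2 T + \log T \cdot \log T + \log T \;=\; 5\,\VC(\mathcal{H})\log^2 T + \log^2 T + \log T,
\]
where I substituted $\log(1/\delta) = \log T$ into the bound from Theorem~\ref{th4}.

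Next, I would invoke Theorem~\ref{th2} with this cover $\mathcal{G}_{\alpha}$ taken at $\alpha = 0$. The theorem then gives
\[
\tilde{r}_T(\mathcal{H},\mathcal{P}) \;\le\; \log(|\mathcal{G}|+1) + \frac{\log(|\mathcal{G}|+1)}{T} + 1.
\]
Since $\log(|\mathcal{G}|+1) \le \log|\mathcal{G}| + 1$ and the second term is $O(1)$ whenever $\log|\mathcal{G}| = O(T)$ (which certainly holds here), this reduces to $\tilde{r}_T(\mathcal{H},\mathcal{P}) \le \log|\mathcal{G}| + O(1)$. Substituting the bound on $\log|\mathcal{G}|$ and using the assumption $\VC(\mathcal{H}) \ge 1$ (the case $\VC(\mathcal{H}) = 0$ is trivial: $\mathcal{H}$ is a singleton and the regret is bounded by $O(1)$ by running the single expert), I can absorb the extra $\log^2 T$ summand into the leading term via $5\,\VC(\mathcal{H})\log^2 T + \log^2 T \le 6\,\VC(\mathcal{H})\log^2 T$, and collect the $\log T$ and constant contributions into $O(1)$.

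There is essentially no obstacle here; the proof is pure bookkeeping once Theorems~\ref{th4} and~\ref{th2} are in hand. The only subtlety worth flagging is making sure the confidence level $\delta = 1/T$ in Theorem~\ref{th4} matches the failure budget tolerated by the Smooth Truncated Bayesian argument of Theorem~\ref{th2}, so that the contribution of the bad event (where the cover fails to $0$-approximate every $h \in \mathcal{H}$ along the realized $\x^T$) contributes only an additive $O(1)$ under the boundedness of the log-loss predictions enforced by the truncation parameter. Once that is verified, the final constant $6$ in front of $\VC(\mathcal{H})\log^2 T$ follows immediately from $5 + 1 = 6$.
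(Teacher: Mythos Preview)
Your approach is exactly the paper's: Corollary~\ref{cor2} is stated there simply as following from Theorems~\ref{th4} and~\ref{th2}, and your instantiation with $\alpha=0$ and $\delta=1/T$ is the intended one. One small slip: in the last line you say you ``collect the $\log T$ \ldots\ into $O(1)$,'' but $\log T$ is not $O(1)$; instead absorb it into the leading term using $T\ge e^9$ (so $\log T\le \tfrac{1}{9}\log^2 T\le \tfrac{1}{9}\VC(\mathcal{H})\log^2 T$), which is the same loose-constant bookkeeping the paper is doing.
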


To our best knowledge the bound in Corollary~\ref{cor2} is the first known poly-logarithmic bound for $\tilde{r}_T$ under logarithmic loss, while the bound implied by~\citep{bilodeau2021minimax} was only proved for $\bar{r}_T$ and in the weaker \emph{realizable} (and averaged) setting.

\paragraph{Tight bounds for classes with finite Star number.}
In the previous section, we demonstrated that the stochastic global sequential 
covering number of finite VC class is upper bounded w.h.p. by $e^{O(\log^2T)}$. 
We now show that if we assume additional structure on the class,  
we can improve the bound to $e^{O(\log T)}$, matching the naive fixed design 
lower bound for many non-trivial classes (see Section~\ref{sec-lower}).
In Appendix~\ref{app-omega} we show that even for $1$ dimension threshold functions the expected  cumulative error is lower bounded by $\Omega(\log T)$, thus arguing  that the error pattern counting argument as in Lemma~\ref{lem2} cannot provide a bound better than $e^{O(\log^2T)}$. To resolve this issue, we introduce the notion of \emph{Star number} that was used 
originally in~\citep{hanneke2015minimax} for 
analyzing the sample complexity of active learning; 
however, we are using it here in a completely different context. For any binary 
valued class $\mathcal{H}$ and $\x^d\in \mathcal{X}^d$, we say 
$\mathcal{H}$ Star-shatters $\x^d$ if there exist $h,h_1,\cdots,h_d\in 
\mathcal{H}$ such that for all $i,j\in [d]$ with $j\not=i$ we have
$$
h(\x_i)\not=h_i(\x_i)\text{ but }h(\x_j)=h_i(\x_j).
$$
The Star number $\textbf{Star}(\mathcal{H})$ of $\mathcal{H}$ is defined to be the maximum 
number $d$ such that there exists $\x^d$ that is Star-shattered by $\mathcal{H}$. 
Clearly, we have $\textbf{Star}(\mathcal{H})\ge \VC (\mathcal{H})$ for all 
$\mathcal{H}$. 

We now introduce a new notion of \emph{certification}, which is the key for our following arguments. For any sequence $\x^t$ and $h\in \mathcal{H}$, we 
say $\x^{t-1}$ certifies $\x_t$ under $h$ if 
$$\forall f\in \mathcal{H},~\text{if }\forall i\in [t-1],~f(\x_i)=h(\x_i)
\text{ then }f(\x_t)=h(\x_t).$$
 
We have the following property of finite Star number classes w.r.t. certification:
 
\begin{lemma}
\label{lem3}
 If $\mathcal{H}$ has Star number upper bounded by $s$, then for any $\x^t\in \mathcal{X}^t$ and $h\in \mathcal{H}$ we have
 $$\mathrm{Pr}_{\sigma}\left[\{\x_{\sigma(1)},\cdots,\x_{\sigma(t-1)}\}\text{ certifies }\x_{\sigma(t)}\text{ under }h\}\right]\ge 1-\frac{s}{t},$$
 where $\sigma$ is the uniform random permutation over $[t]$.
 \end{lemma}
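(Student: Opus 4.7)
\textbf{Proof plan for Lemma~\ref{lem3}.} The plan is to reduce the probabilistic statement to a deterministic combinatorial bound on ``bad'' positions via the symmetry of the uniform permutation, and then exhibit a star--shattered configuration from any collection of bad positions so as to invoke the definition of Star number.

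First, I would observe that whether $\{\x_{\sigma(1)},\dots,\x_{\sigma(t-1)}\}$ certifies $\x_{\sigma(t)}$ under $h$ depends only on which index is placed at position $t$, since certification is a property of the unordered set of first $t-1$ points together with the singled-out last point. Call $i \in [t]$ \emph{bad} (with respect to the fixed $\x^t$ and $h$) if $\{\x_j : j \in [t],\ j\neq i\}$ does not certify $\x_i$ under $h$. By uniform symmetry of $\sigma$, the failure event in the lemma has probability exactly $|B|/t$, where $B\subseteq [t]$ is the set of bad indices. Thus it suffices to prove $|B|\le s$.

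Next, I would unpack the definition of badness: for each $i\in B$ there exists $h_i\in\mathcal{H}$ agreeing with $h$ on all $\x_j$ with $j\in [t]\setminus\{i\}$ yet differing from $h$ at $\x_i$. Enumerate $B=\{i_1,\dots,i_k\}$ and consider the restriction of $h,h_{i_1},\dots,h_{i_k}$ to the subsequence $\x_{i_1},\dots,\x_{i_k}$. By construction, for every $m$ we have $h_{i_m}(\x_{i_m})\neq h(\x_{i_m})$, while for $m'\neq m$ the index $i_{m'}$ lies in $[t]\setminus\{i_m\}$, so $h_{i_m}(\x_{i_{m'}})=h(\x_{i_{m'}})$. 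This is exactly the defining pattern of Star--shattering of $\x_{i_1},\dots,\x_{i_k}$ by $\mathcal{H}$, with $h$ as the ``center'' and $h_{i_1},\dots,h_{i_k}$ as the ``spokes''. Hence $k\le \mathbf{Star}(\mathcal{H})\le s$, which yields $|B|\le s$ and finishes the proof.

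The step I expect to be the most delicate is the symmetrization reduction: one must carefully note that certification does not depend on the order in which the first $t-1$ permuted points appear (it is a property of a subclass consistent on a set), which is what makes it legitimate to collapse the permutation down to the choice of the last index $\sigma(t)$. Once that is in place, extracting the star--shattered witness from the bad positions is essentially a bookkeeping exercise, and the bound follows directly from the definition of the Star number.
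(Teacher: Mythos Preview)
Your proposal is correct and follows essentially the same approach as the paper: reduce to showing that at most $s$ indices are ``bad'' (not certified by the rest), then observe that any collection of bad indices together with the witnesses $h_i$ furnished by the failure of certification constitutes a Star--shattered configuration with center $h$, forcing $|B|\le s$. Your write-up is more explicit about the symmetrization step and the bookkeeping, but the argument is the same.
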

 \begin{proof}
We only need to show that there are at most $s$ points in $\x^t$ that can 
not be certified by the others under $h$. Suppose otherwise, we have $s+1$ such points. 
Consider the realization of $h$ on these points, by definition of certification, 
we can find the functions $h_1,\cdots,h_{s+1}$ as in the definition of 
Star-shattering. This contradicts the fact that the Star number is upper bounded by $s$.
\end{proof}
 
We now prove a high probability bound on the number of non-certified positions 
for a finite Star number class, which is similar to Lemma~\ref{lem1}.
 
\begin{lemma}
\label{lem4}
Let $\mathcal{H}\subset \{0,1\}^{\mathcal{X}}$ be a class with a finite Star number 
and $T\ge e^9$. Then, with probability $\ge 1-\delta$ over $\x^T$ 
(sampled from some $i.i.d.$ distribution over $\mathcal{X}^T$) for all $h\in \CH$
$$
\sum_{t=1}^T1\{\x^{t-1}\text{ does not certify }\x_t
\text{ under }h\}\le \VC (\mathcal{H})\log T+4\textbf{Star}(\mathcal{H})\log T+\log(1/\delta).
$$
\end{lemma}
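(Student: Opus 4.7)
The plan is to mirror the proof of Theorem~\ref{th4}, replacing the mistake indicator of the 1-inclusion graph predictor with the non-certification indicator, and using Lemma~\ref{lem3} in place of Theorem~\ref{th3} to supply the per-step conditional probability bound. Concretely, for any fixed $h\in\mathcal{H}$ and any fixed $\x^T\in\mathcal{X}^T$, I would consider the random variables
$$X_t^{\sigma}=1\{\{\x_{\sigma(1)},\ldots,\x_{\sigma(t-1)}\}\text{ does not certify }\x_{\sigma(t)}\text{ under }h\}$$
for $t\in[T]$, where $\sigma$ is a uniform random permutation of $[T]$. The key observation is that these indicators enjoy exactly the two structural properties underlying Lemma~\ref{lem1}: first, $X_t^\sigma$ is permutation invariant in the sense that it depends only on the unordered set $\{\x_{\sigma(1)},\ldots,\x_{\sigma(t-1)}\}$ together with the distinguished element $\x_{\sigma(t)}$; and second, Lemma~\ref{lem3} yields the per-step bound $\Pr_\sigma[X_t^\sigma=1\mid \{\x_{\sigma(1)},\ldots,\x_{\sigma(t)}\}]\le \textbf{Star}(\mathcal{H})/t$.

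With these two properties in hand, I would apply the reverse-martingale Bernstein argument underlying Lemma~\ref{lem1} with $C=\textbf{Star}(\mathcal{H})$ to conclude that, for any fixed $h$ and $\x^T$ and any $\delta'>0$,
$$\Pr_\sigma\left[\sum_{t=1}^T X_t^\sigma > 4\,\textbf{Star}(\mathcal{H})\log T+\log(1/\delta')\right]\le \delta'.$$
Next I would union bound over $h$. Since for fixed $\x^T$ and fixed $\mathcal{H}$ the non-certification pattern $(X_1^\sigma,\ldots,X_T^\sigma)$ depends on $h$ only through its restriction $h|_{\{\x_1,\ldots,\x_T\}}$, Sauer's lemma bounds the number of distinct such patterns by $T^{\VC(\mathcal{H})}$. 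Choosing $\delta'=\delta/T^{\VC(\mathcal{H})}$ yields, with probability at least $1-\delta$ over $\sigma$,
$$\sup_{h\in\mathcal{H}}\sum_{t=1}^T X_t^\sigma\le \VC(\mathcal{H})\log T+4\,\textbf{Star}(\mathcal{H})\log T+\log(1/\delta).$$
Finally, the symmetrization identity used in the proof of Theorem~\ref{th4}, namely $\Pr_{\x^T\sim\mu}[A(\x^T)]\le \sup_{\x^T}\Pr_\sigma[A(\x_{\sigma(1)},\ldots,\x_{\sigma(T)})]$ for any event $A$ and $i.i.d.$ $\mu$, transfers this bound from a uniform $\sigma$ on a fixed $\x^T$ to $i.i.d.$ $\x^T\sim\mu$ under the identity ordering, completing the proof.

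The main obstacle I anticipate is justifying the direct transfer of Lemma~\ref{lem1} to the non-certification indicator, since Lemma~\ref{lem1} is literally phrased as a mistake bound for a deterministic predictor $\Phi$. One could try to encode non-certification as a prediction mismatch by letting $\Phi$ output the certified label when the version space agrees and a fixed fallback otherwise, but this under-counts non-certifications whenever $h(\x_t)$ equals the fallback value, so applying Lemma~\ref{lem1} to two parallel predictors (fallbacks $0$ and $1$) gives a bound worse by a factor of two. The cleanest resolution is to observe that the proof of Lemma~\ref{lem1} uses only permutation-invariance and the $C/t$ per-step conditional probability bound, both of which hold for $X_t^\sigma$; the identical Bernstein-for-reverse-martingales argument therefore applies verbatim with $C=\textbf{Star}(\mathcal{H})$, yielding exactly the constant claimed in the lemma.
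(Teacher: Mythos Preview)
Your proposal is correct and matches the paper's proof essentially line for line: apply Lemma~\ref{lem1} with $C=\textbf{Star}(\mathcal{H})$ via the permutation-invariance of certification and the $s/t$ bound of Lemma~\ref{lem3}, then union bound over the at most $T^{\VC(\mathcal{H})}$ restrictions via Sauer's lemma, and finally pass from permutations to $i.i.d.$ samples by symmetry. Your discussion of the obstacle is in fact more careful than the paper's; the paper simply asserts that non-certification ``can be viewed as the event that $\Phi$ makes an error at step $t$ as in Lemma~\ref{lem1},'' while you correctly note that a literal encoding via a single deterministic predictor can under-count, and that the right fix is to observe the proof of Lemma~\ref{lem1} only requires permutation invariance of the indicator and the $C/t$ conditional bound.
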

\begin{proof}
Note that the event $\{\x^{t-1}\text{ does not certify }\x_t\text{ under }h\}$ can be 
viewed as the event that $\Phi$ makes an error at step $t$ as in Lemma~\ref{lem1} 
(since certification is permutation invariant). By Lemma~\ref{lem3} and Lemma~\ref{lem1} with $C=\textbf{Star}(\mathcal{H})$, we have for all 
$h\in \mathcal{H}$ and $\x^T\in \mathcal{X}^T$ w.p. $\ge 1-\delta$ over uniform
random permutation $\sigma$ on $[T]$
$$\sum_{t=1}^T1\{\x^{\sigma(t-1)}\text{ does not certify }
\x_{\sigma(t)}\text{ under }h\}\le 4\textbf{Star}(\mathcal{H})\log T+\log(1/\delta).
$$
By Sauer's lemma there are at most $T^{\VC (\mathcal{H})}$ functions of $\mathcal{H}$ 
restricted on $\x^T$. Using a union bound and setting $\delta:=\frac{\delta}{T^{\VC (\mathcal{H})}}$ 
in the above expression, we have w.p. $\ge 1-\delta$ over random permutation 
$\sigma$
$$\forall h\in \mathcal{H},~\sum_{t=1}^T1\{\x^{\sigma(t-1)}
\text{ does not certify }\x_{\sigma(t)}\text{ under }h\}\le \VC(\mathcal{H})\log T+
4\textbf{Star}(\mathcal{H})\log T+\log(1/\delta).
$$
The result follows since for any event $A$ over $\x^T$ we have 
(due to symmetries of $i.i.d.$ distribution)
$\mathrm{Pr}_{\x^T}[A(\x^T)]=\mathbb{E}_{\sigma}\mathrm{Pr}[A(\x^{\sigma(T)})]\le 
\sup_{\x^T}\mathrm{Pr}_{\sigma}[A(\x^{\sigma(T)})].$
\end{proof}
 
Lemma~\ref{lem4} allows us to construct the sequential covering set explicitly without relying on error pattern counting as in Lemma~\ref{lem2} as shown next.

\begin{theorem}
\label{th5}
Let $\mathcal{H}$ be a binary valued class with finite Star number. Then, there exists a stochastic sequential covering set $\mathcal{G}$ of $\mathcal{H}$ w.r.t. the class of all $i.i.d.$ distributions over $\mathcal{X}^T$ at scale $\alpha=0$ and confidence $\delta$ such that for $T\ge e^9$ we have:
$$\log |\mathcal{G}|\le 5\textbf{Star}(\mathcal{H})\log T+\log(1/\delta).$$
\end{theorem}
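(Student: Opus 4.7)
The plan is to construct $\mathcal{G}$ explicitly, in contrast to the error-pattern-counting approach of Lemma~\ref{lem2}, indexing each covering function by a short binary ``guess string'' that resolves what to predict at positions where the sample history does not yet force a unique value. Invoking $\VC(\mathcal{H})\le \textbf{Star}(\mathcal{H})$, Lemma~\ref{lem4} bounds the number of such non-certified positions, uniformly over $h\in\mathcal{H}$, by $B:=5\textbf{Star}(\mathcal{H})\log T+\log(1/\delta)$ with probability at least $1-\delta$ over $\x^T$, for any i.i.d.\ distribution. I will take $\mathcal{G}=\{g_S:S\in\{0,1\}^{\lceil B\rceil}\}$, which yields $\log|\mathcal{G}|\le\lceil B\rceil\log 2\le B$, as required (the minor rounding is harmless for $T\ge e^9$ and nontrivial $\mathcal{H}$).

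Each $g_S$ is defined sequentially and independently of any target $h$. On input $\x^t$, $g_S$ uses only $\x^t$ and its own previously emitted values $g_S(\x^1),\ldots,g_S(\x^{t-1})$ to maintain the version space $V_{t-1}=\{f\in\mathcal{H}:f(\x_i)=g_S(\x^i)\text{ for all }i<t\}$ and a counter $k$ initialized to $1$. If all members of $V_{t-1}$ agree on $\x_t$, $g_S(\x^t)$ is set to that common value; otherwise $g_S(\x^t)$ is set to the $k$-th bit of $S$ and $k$ is incremented. If $k$ ever exceeds $\lceil B\rceil$ we output $0$; this case will be irrelevant on the good event. This recipe specifies $g_S$ on every finite sequence over $\mathcal{X}$.

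For coverage, fix $\x^T$ in the event of Lemma~\ref{lem4} and any $h\in\mathcal{H}$. I build a matching $S$ by induction on $t$. Suppose $g_S$ has agreed with $h$ on $\x^1,\ldots,\x^{t-1}$; then $V_{t-1}$ equals the set of $f\in\mathcal{H}$ consistent with $h$ on $\x^{t-1}$, so ``$V_{t-1}$ agrees on $\x_t$'' is exactly ``$\x^{t-1}$ certifies $\x_t$ under $h$'', and the forced value is $h(\x_t)$. In the certified case $g_S(\x^t)$ is already $h(\x_t)$; in the non-certified case, set the next free bit of $S$ to $h(\x_t)$ so again $g_S(\x^t)=h(\x_t)$. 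The induction preserves agreement with $h$ on the entire observed prefix, and by Lemma~\ref{lem4} the total number of bits consumed is at most $B$, so a valid string $S\in\{0,1\}^{\lceil B\rceil}$ exists (pad with zeros if fewer bits are needed).

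The main obstacle is aligning the hypothesis-dependent bound of Lemma~\ref{lem4}, which counts non-certified positions under a particular $h$, with a construction of $\mathcal{G}$ that must be oblivious to $h$ — each $g_S$ chooses its guess pattern based purely on its own emitted history, not on the unknown target. The key identity that unlocks the proof is the equivalence observed above: as long as $g_S$ tracks $h$, its maintained version space coincides with the $h$-consistent subclass, so ``bits consumed by $g_S$'' equals ``non-certified positions under $h$''. This hypothesis-independent reformulation of the certification event is what allows the uniform-over-$h$ bound of Lemma~\ref{lem4} to translate directly into a bound on the cardinality of $\mathcal{G}$.
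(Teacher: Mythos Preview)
Your proof is correct and is essentially the same construction as the paper's, just presented in a different (arguably cleaner) language. The paper builds a \emph{realization tree} of $\mathcal{H}$ on $\x^T$ and assigns to each node a subset of an index set $K$ of size $2^M$, halving the subset at every binary (non-certified) node; each $g_k$ then follows the unique root-to-leaf path whose assigned sets contain $k$. Your indexing by binary strings $S\in\{0,1\}^{\lceil B\rceil}$ is the canonical encoding of this same process: the $j$-th bit of $S$ records which child to take at the $j$-th binary node along the path, and your version space $V_{t-1}$ is exactly the set of hypotheses consistent with the current node in the realization tree. Both arguments hinge on the same identity you highlight---that once $g_S$ is tracking $h$, ``$V_{t-1}$ disagrees on $\x_t$'' coincides with ``$\x^{t-1}$ does not certify $\x_t$ under $h$''---and then invoke Lemma~\ref{lem4} (with $\VC(\mathcal{H})\le\textbf{Star}(\mathcal{H})$) to bound the number of such steps by $B$. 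One cosmetic point: you should also specify $g_S$'s output when $V_{t-1}=\emptyset$ (e.g., output $0$), though as with the overflow case this never occurs along the path matching any $h\in\mathcal{H}$.
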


\begin{figure}[h]
    \centering
    \begin{minipage}[t]{1\linewidth}
    \vspace{0pt}
        \begin{tikzpicture}[scale=1, nodes={draw, circle}, ->,
         level 1/.style={sibling distance=7.5em},
        level 2/.style={sibling distance=7.5em},
         level 3/.style={sibling distance=3.9em}]
       \node[fill=gray, label=left:$\frac{\{h_1\cdots h_5\}}{\{g_1\cdots g_8\}}$]{$\textbf{x}_1$}
     child { node[fill=gray, label=left:$\frac{\{h_1\cdots h_3\}}{\{g_1\cdots g_4\}}$] {$\textbf{x}_2$} 
        child { node[fill=gray, label=left:$\frac{\{h_1~h_2\}}{\{g_1~g_2\}}$] {$\textbf{x}_3$} 
            child { node[draw, rectangle]{$\frac{\{h_1\}}{\{g_1\}}$} edge from parent node[draw=none, auto=right] {$0$}}
            child { node[draw, rectangle]{$\frac{\{h_2\}}{\{g_2\}}$} edge from parent node[draw=none, auto=left] {$1$}}
            edge from parent node[draw=none, auto=right] {$0$}
        }
        child { node[label=right:$\frac{\{h_3\}}{\{g_3~ g_4\}}$] {$\textbf{x}_3$} 
            child { node[draw, rectangle]{$\frac{\{h_3\}}{\{g_3~g_4\}}$} edge from parent node[draw=none, auto=right] {$0$}}
            child [missing]
            edge from parent node[draw=none, auto=left] {$1$}
        }edge from parent node[draw=none, auto=right] {$0$}
    }
    child { node[label=right:$\frac{\{h_4~h_5\}}{\{g_5\cdots g_8\}}$] {$\textbf{x}_2$} 
        child [missing]
        child {node[fill=gray, label=right:$\frac{\{h_4~h_5\}}{\{g_5\cdots g_8\}}$] {$\textbf{x}_3$}
            child { node[draw, rectangle]{$\frac{\{h_4\}}{\{g_5~g_6\}}$} edge from parent node[draw=none, auto=right] {$0$}}
            child { node[draw, rectangle]{$\frac{\{h_5\}}{\{g_7~g_8\}}$} edge from parent node[draw=none, auto=left] {$1$}}
            edge from parent node[draw=none, auto=left] {$1$}
        }edge from parent node[draw=none, auto=left] {$1$}
     };
    \end{tikzpicture}
    \end{minipage}

    \begin{minipage}[t]{0\linewidth}
        \vspace{-1.8in}
    \begin{tabular}[t]{|c|c|c|c|c|c|}
     \hline
     & $h_1$ & $h_2$ & $h_3$ & $h_4$ & $h_5$ \\
     \hline
     $\textbf{x}_1$ & $0$ & $0$ & $0$ & $1$ & $1$\\
     \hline
     $\textbf{x}_2$ & $0$ & $0$ & $1$ & $1$ & $1$\\
     \hline
     $\textbf{x}_3$ & $0$ & $1$ & $0$ & $0$ & $1$\\
     \hline
    \end{tabular}
    \end{minipage}
    \hspace{-1.6in}
    \caption{Realization tree of $\mathcal{H}$ defined in left table and partition of $\mathcal{G}$}
    \label{fig:my_label2}
\end{figure}
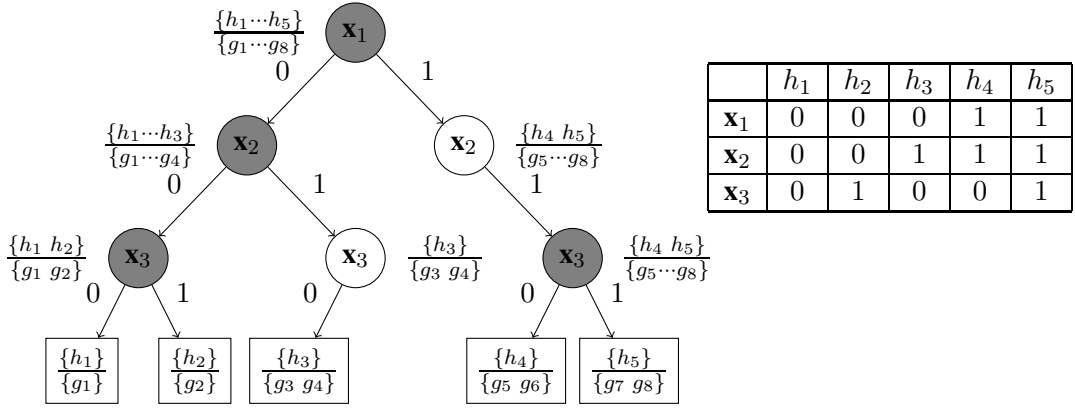

\begin{proof}[Sketch of Proof]
    We only sketch the main idea here (illustrated in Figure~\ref{fig:my_label2}) and refer to Appendix~\ref{app-th5} for full proof. We will construct a set $\mathcal{G}$ of sequential functions with index set $K$. We select $K$ to be \emph{fixed} with $|K|=2^M$, where $M=\lceil 5\textbf{Star}(\mathcal{H})+\log(1/\delta)\rceil$. We will assign sequentially the value $g_k(\x^t)$ for each $k\in K$ after observing the samples $\x^t$. To do so, we maintain a binary tree called the \emph{realization} tree, where at each time after receiving sample $\x_t$ we expand the leaves of the tree with one or two children depending  whether the functions in $\mathcal{H}$ consistent with the leaf agree on $\x_t$ or not. Meanwhile, we also associate with each node in the realization tree a \emph{subset} of $K$ (i.e., a subset of $\mathcal{G}$) in a top-down fashion. Initially, we assign  $K$ to the root. Any time we encounter a node with one child, we pass the set associated with that node to its child, else we split the set into two disjoint subsets of \emph{equal} size and pass them to its two children, respectively. The process is said to have failed, if a node with two children has the size of associated set $\le 1$.  If the procedure does not fail until time $T$, for any $k\in K$, there will be exactly one path (from root to leaf) in the realization tree with nodes that have the associated sets containing $k$. The value of $g_k(\x^t)$ is determined by tracing the path. Clearly, if the procedure does not fail, the set $\{g_k:k\in K\}$ sequentially covers $\mathcal{H}$. The key observation is that if we choose $M$ to be large enough, the procedure does not fail with high probability due to Lemma~\ref{lem4} (since only non-certified positions incur splits on associated sets that reduce the size of the sets for children by $1/2$). 
\end{proof}
 \begin{example}
 \label{ex3-th5}
We illustrate the construction of the realization tree in this example.
We set
$\mathcal{H}=\{h_1,\cdots,h_5\}$, as shown in the table of
Figure~\ref{fig:my_label2} with sample $\x_1,\x_2,\x_3$. The realization tree
is shown in Figure~\ref{fig:my_label2},
where each function $h\in \mathcal{H}$ is consistent with some
path of the tree, and each path has some function $h\in \mathcal{H}$
consistent with it. We assign a subset of $\mathcal{G}$ for each
node in the tree denoted as $\{\cdot\}$. Observe that
if a node has only one child then the
child has the same assigned set as the father, else
we assign an arbitrary partition of the father set with
\emph{equal} sizes to its two children. The final partitions of
the set $\mathcal{G}=\{g_1,\cdots,g_8\}$ are in the leaf nodes
of the tree. In the figure,  binary nodes (i.e., nodes with
two children) are in gray color. The maximum number of binary
nodes in any path is $3$, by selecting $|\mathcal{G}|\ge 2^3=8$,
which guarantees that the assigning procedure does not fail until the leaf.
Each $g_k$ is associated with a unique path from
root to the leaf with (the only) assigned sets on the nodes
that contain $g_k$. The values of $g_k$ are defined to be the
labels of out edges along the path in the obvious way. 
One can verify that $g_1$ covers $h_1$, $g_2$ covers $h_2$, $g_3,g_4$
covers $h_3$, $g_5,g_6$ covers $h_4$, and $g_7,g_8$ covers $h_5$.
Generally, by Lemma~\ref{lem4} the number of binary nodes in any path is of order $O(\log T)$ with high probability (i.e., setting $|\mathcal{G}|=2^{O(\log T)}$ ensures the process success w.h.p.).    
\end{example}

 \begin{corollary}
\label{cor3}
Let $\mathcal{H}$ be a binary valued class with finite Star number, 
$\mathcal{P}$ be the class of all $i.i.d.$ distributions over 
$\mathcal{X}^T$. If $\ell$ is the Log-loss, then 
$\tilde{r}_T(\mathcal{H},\mathcal{P})\le O(\textbf{Star}(\mathcal{H})\log T).$
\end{corollary}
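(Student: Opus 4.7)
The plan is to combine the covering bound of Theorem~\ref{th5} with the log-loss regret bound of Theorem~\ref{th2}; the corollary is essentially a one-line consequence of these two results, and no new ideas are needed. First, I would invoke Theorem~\ref{th5} with confidence parameter $\delta = 1/T$ to obtain a stochastic global sequential covering $\mathcal{G}$ of $\mathcal{H}$ with respect to $\mathcal{P}$ at scale $\alpha = 0$ satisfying
$$\log|\mathcal{G}| \le 5\,\textbf{Star}(\mathcal{H})\log T + \log T = O(\textbf{Star}(\mathcal{H})\log T),$$
valid for $T \ge e^9$. The trivial regime $T < e^9$ gives a constant regret and is absorbed into the $O(\cdot)$ factor.

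Second, I would apply Theorem~\ref{th2} to this covering with $\alpha = 0$. The theorem yields
$$\tilde{r}_T(\mathcal{H},\mathcal{P}) \le 2\alpha T + \log(|\mathcal{G}|+1) + \log(|\mathcal{G}|+1)/T + 1.$$
Plugging in $\alpha = 0$ kills the approximation term, and the remaining terms are $O(\log|\mathcal{G}|)$. Substituting the bound on $\log|\mathcal{G}|$ from the previous step yields $\tilde{r}_T(\mathcal{H},\mathcal{P}) = O(\textbf{Star}(\mathcal{H})\log T)$, which is the claim.

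There is no substantive obstacle: Theorem~\ref{th5} has already done the heavy lifting via the realization-tree construction driven by the certification estimate of Lemma~\ref{lem4}, and Theorem~\ref{th2} converts covering cardinality directly into log-loss regret through the Smooth Truncated Bayesian Algorithm. The only details worth checking are that the confidence $\delta = 1/T$ used when invoking Theorem~\ref{th5} matches the confidence implicitly required by Theorem~\ref{th2} (inherited from Theorem~\ref{th1}), and that taking $\alpha = 0$ is legitimate because the covering produced by Theorem~\ref{th5} is exact (the $\alpha = 0$ scale is available, and log-loss is handled by the Smooth Truncated Bayesian Algorithm rather than by Lipschitz approximation, so no constraint on $\alpha > 0$ is needed). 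Both points are immediate from the statements already in hand.
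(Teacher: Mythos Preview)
Your proposal is correct and matches the paper's intended argument: the paper states Corollary~\ref{cor3} without proof, immediately after Theorem~\ref{th5}, relying on the same pattern used for Corollaries~\ref{cor1} and~\ref{cor2} (plug the covering bound into the appropriate regret theorem). Your choice of $\delta=1/T$ and $\alpha=0$ in Theorem~\ref{th2}, together with the exact covering from Theorem~\ref{th5}, is exactly the combination the paper has in mind.
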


Note that a natural class that has finite star number is the threshold functions $\mathcal{H}=\{1\{x\ge a\}:x,a\in [0,1]\}$, which has Star number $2$. Corollary~\ref{cor3} implies the regret under 
Log-loss is upper bounded by $O(\log T)$. We refer 
to~\citep{hanneke2015minimax} for more non-trivial examples.

We observe that being finite Star number is not a necessary condition to achieve 
a $e^{O(\log T)}$ bound. To see this, consider the class that labels only one 
sample being $1$ and zeros otherwise, which admits a (exact) sequential 
covering of size $T=e^{\log T}$ but has infinite Star number.

\paragraph{Tight bounds with finite Star-Littlestone dimension.}
In this section, we introduce a new complexity measure that we call 
\emph{Star-Littlestone} dimension~\footnote{This is conceptually similar to the VCL tree as 
introduced in the recent paper~\citep{bousquet2021theory}, but they considered a completely different problem using similar idea.}.
The main purpose of this measure is to incorporate the Star number and 
Littlestone dimension that goes beyond  simple finite Star number, and allows us to expand the class of $\CH$ with $e^{O(\log T)}$
cover.

We denote by $\{0,1\}^d_*$ the set of binary 
sequences of length less than or equal to $d$. For any numbers 
$d$ and $s$, we say a binary tree $\tau:\{0,1\}_*^d\rightarrow 
\mathcal{X}$ is Star-Littlestone shattered by $\mathcal{H}$ at star 
scale $s$ if for any path $\epsilon^d\in \{0,1\}_*^d$ 
we have $\textbf{Star}(\mathcal{H}_{\epsilon^d})>s$, where
$\mathcal{H}_{\epsilon^d}=\{h\in \mathcal{H}:\forall t\in [d],~h(\tau(\epsilon^{t-1}))
=\epsilon_t\}$ and $\CH=\cup_{\epsilon^d} \CH_{\epsilon^d}.$
In words, Star-Littlestone shattering implies that the Star number of the class of hypothesis consistent with any path in $\tau$ has Star number greater than $s$. We define the \emph{Star-Littlestone 
dimension} $\SL(s)$ of $\mathcal{H}$ at star scale $s$ to be the maximum number $d$ such that there exists a tree $\tau$ of depth $d$ that is 
Star-Littlestone shattered at star scale $s$ by $\mathcal{H}$. 

Applying Theorem~\ref{th5} and the SOA argument as in~\citep{ben2009agnostic}, we establish our next main theorem with detailed proof in Appendix~\ref{app-th6}.
 
\begin{theorem}
\label{th6}
Let $\mathcal{H}$ be a binary valued class with Star-Littlestone dimension $\SL(s)$ 
at star scale $s$. Then there exists a global sequential 
covering set $\mathcal{G}$ of $\mathcal{H}$ w.r.t. the class of all $i.i.d.$ distributions over $\mathcal{X}^T$ at scale $\alpha=0$ and confidence $\delta$ such that
$$\log |\mathcal{G}|\le O(\max\{\SL(s)+1,s\}\log T+\log(1/\delta)).
$$
\end{theorem}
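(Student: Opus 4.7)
The plan is to lift Theorem~\ref{th5} to the Star-Littlestone setting by running an $\SL$-potential-based SOA algorithm---modeled on the Littlestone-dimension proof of~\cite{ben2009agnostic}---until the version space has Star number at most $s$, and then invoking Theorem~\ref{th5} on that terminal subclass. The error-pattern argument of Lemma~\ref{lem2} then glues the pieces into a single stochastic sequential cover.

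Concretely, I would define a deterministic realizable-case algorithm $\Phi$ that maintains a version space $V\subseteq\CH$ (with $V=\CH$ initially) and, at each step $t$, upon reading $\x_t$, predicts $b^{*}=\arg\max_{b\in\{0,1\}}\SL(V_{\x_t,b},s)$ and updates $V\leftarrow V_{\x_t,h(\x_t)}$. A standard SOA argument then shows that each mistake of $\Phi$ at a step with $\SL(V,s)\ge 1$ strictly drops the potential $\SL(V,s)$: otherwise the chosen $\x_t$ together with $\SL$-saturating Star-Littlestone trees on its two children would realize a tree of depth $\SL(V,s)+1$ shattered by $V$, contradicting the dimension. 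Hence after at most $\SL(s)$ such mistakes $\Phi$ reaches the boundary $\SL(V,s)=0$, at which point, by the very definition of $\SL$, for every $\x\in\mathcal{X}$ at least one of $V_{\x,0},V_{\x,1}$ has $\textbf{Star}\le s$. I would then modify $\Phi$'s tie-breaking at the boundary to prefer the side of \emph{larger} Star number, so that any subsequent mistake deterministically pushes $V$ into a subclass with $\textbf{Star}\le s$; a boundary trajectory that is never corrected is handled by absorbing its deterministic prediction sequence directly into the cover. The total ``charged'' mistake count is thus $E\le\SL(s)+1$.

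Next, by the error-pattern counting of Lemma~\ref{lem2}, the execution of $\Phi$ on any $\x^T$ is determined by the set $I\subseteq[T]$ of positions at which a mistake is charged, with $|I|\le E$, giving at most $\sum_{k\le E}\binom{T}{k}\le T^{E+1}$ transcripts. For each $I$ the terminal version space $V_I$ satisfies $\textbf{Star}(V_I)\le s$, so Theorem~\ref{th5} applied to $V_I$ with confidence $\delta':=\delta/T^{E+1}$ yields a stochastic sequential cover $\mathcal{G}_I$ of $V_I$ with
$$\log|\mathcal{G}_I|\le 5s\log T+\log(1/\delta')=O\bigl((s+\SL(s))\log T+\log(1/\delta)\bigr).$$
Prepending the deterministic predictions of $\Phi$ before the last mistake of $I$ extends each $g\in\mathcal{G}_I$ to a sequential function $\mathcal{X}^*\to\{0,1\}$; setting $\mathcal{G}:=\bigcup_{I}\mathcal{G}^{(I)}$ and union-bounding the $T^{E+1}$ failure events of Theorem~\ref{th5} gives a stochastic sequential cover of $\CH$ satisfying
$$\log|\mathcal{G}|\le(E+1)\log T+\max_{I}\log|\mathcal{G}_I|=O\bigl(\max\{\SL(s)+1,s\}\log T+\log(1/\delta)\bigr).$$

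The hard part is the boundary analysis: at $\SL(V,s)=0$ a correct prediction need not shrink any monotone potential, so the Littlestone-SOA template does not immediately guarantee termination in a $\textbf{Star}\le s$ state. The fix is a careful tie-breaking rule at the boundary together with a separate accounting of the (at most one) forced terminating mistake and of boundary trajectories that remain consistent forever, which is precisely what contributes the ``$+1$'' in the $\SL(s)+1$ factor of the final bound and what does not follow mechanically from~\cite{ben2009agnostic}.
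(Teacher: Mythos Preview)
Your proposal is correct and mirrors the paper's own proof almost exactly: the paper too runs an $\SL$-potential SOA (with tie-breaking by Star number once $\SL=0$), counts error patterns of size at most $d=\SL(s)+1$, and then invokes Theorem~\ref{th5} at confidence $\delta/T^{d+1}$ on each terminal subclass before taking a union bound over the $I$'s. The one point the paper makes more explicit than you do is that the terminal version space $V_I$ is a \emph{random} subclass depending on the phase-1 samples, so Theorem~\ref{th5} must be applied conditionally on the realization of $V_I$, using that the remaining phase-2 samples are still $i.i.d.$ and independent of phase 1.
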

 
\begin{example}
In this example we show a class $\CH$ that has both infinite Star number and Littlestone dimension but
finite Star-Littlestone dimension.
Let $\mathcal{H}=\{h_{[a,b]}(x)=1\{x\in [a,b]\}:[a,b]\subset [0,1]\}$ 
be the indicators of intervals. It is easy to see that $\mathcal{H}$ 
has both infinite Star number and Littlestone dimension. 
However we can show that it has Star-Littlestone dimension $0$ at star scale 
$4$. To see this, consider any point $x\in [0,1]$ and the 
hypothesis class $\mathcal{H}_1=\{h\in \mathcal{H}:~h(x)=1\}$. 
We show that the Star number of $\mathcal{H}_1$ is $\le 4$. 
For any $5$  points in $[0,1]$ there must be at least 3 points on the 
same side relative to $x$, the restriction of $\mathcal{H}_1$ on such 
points is equivalent to threshold functions (either of form $1\{x\ge a\}$ or $1\{x\le b\}$), thus cannot Star-shatters these $3$ points. This implies that the global sequential covering 
size of $\mathcal{H}$ is upper bounded by $e^{O(\log T)}$ as in Theorem~\ref{th6}.
 \end{example}

 In Appendix~\ref{app-th6}, we also provide a composition theorem (Proposition~\ref{propap1}) for bounding stochastic sequential covering number of classes generated by composition of simple classes. This allows us to obtain $e^{O(\log T)}$ type bounds that go beyond even finite Star-Littlestone dimensional classes.
\begin{remark}
We leave it as an open problem to determine if the upper bound 
$e^{O(\log T)}$ can be achieved for any finite VC-dimensional class. Establishing such a result even for the threshold functions $\mathcal{H}=\{h_{\w}(\x)=1\{\langle \w,\x\rangle\ge a\}:\w,\x\in \mathbb{R}^d,~a\in \mathbb{R}\}$ with $d\ge 2$ seems to be non-trivial.
\end{remark}

\subsection{Real Valued Class with Finite Fat-shattering}
\label{sec-real}

We now assume that $\mathcal{H}\subset [0,1]^{\mathcal{X}}$ is $[0,1]$-valued and 
the fat-shattering number is bounded. 
We discuss this case only briefly here, and the reader is referred to 
Appendix~\ref{app-real} for a detailed discussion.

We first recall the notion of fat-shattering number, which can be viewed as a 
scale sensitive VC-dimension. 
For any class $\mathcal{H}\subset[0,1]^{\mathcal{X}}$, 
we say $\mathcal{H}$ $\alpha$-fat shatters $\x^d\in \mathcal{X}^d$ if there 
exists $s^d\in [0,1]^d$ such that for all $I\subset [d]$ there exists 
$h\in \mathcal{H}$ such that for all $t\in [d]$:
    (1) If $t\in I$, then $h(\x_t)\ge s_t+\alpha$;
    (2) If $t\not\in I$, then $h(\x_t)\le s_t-\alpha$.
Then, the fat shattering number of $\mathcal{H}$ at scale $\alpha$ is defined to 
be the maximum number $d:=d(\alpha)$ such that there exists 
$\x^d\in \mathcal{X}^d$ with $\mathcal{H}$ $\alpha$-fat shatters $\x^d$.

We now state our main result for this part. The proof can be found in Appendix~\ref{app-real}.

\begin{theorem}
\label{th7}
Let $\mathcal{H}$ be a class of functions  $\mathcal{X}\rightarrow [0,1]$ with 
the $\alpha$-fat shattering number $d(\alpha)$.
Then there exists a stochastic global sequential covering set $\mathcal{G}$ of 
$\mathcal{H}$ w.r.t. the class of all $i.i.d.$ distributions 
over $\mathcal{X}^T$ at scale 
$\alpha$ and confidence $\delta$ such that
$$
\log|\mathcal{G}|\le 8d(\alpha/32)(\log T\log(1/\alpha))^4+\log T\log(\log T/\delta)+O(1),
$$
where $O(1)$ hides absolute constant that is independent of 
$\alpha$, $T$ and $\delta$.
\end{theorem}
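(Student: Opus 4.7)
The plan is to reduce the real-valued fat-shattering setting to a family of binary finite-VC instances via thresholding at multiple scales, apply Theorem~\ref{th4} to each binary class, and aggregate the covers by scale-wise chaining. A standard threshold-versus-margin argument shows that a suitable binary threshold restriction of $\mathcal{H}$ at margin $\gamma>0$ has VC-dimension bounded by $d(\gamma)$: any point set shattered by the thresholded class would be $\gamma$-fat-shattered by $\mathcal{H}$ with the threshold value serving as the constant witness.

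\textbf{Per-scale covers.} Fix geometric scales $\alpha_j = 2^{-j}$ for $j = 0, 1, \ldots, J$ with $J = \lceil \log_2 (1/\alpha)\rceil$, and at each $\alpha_j$ place threshold offsets on a grid of spacing $\alpha_j/16$ in $[0,1]$. For each pair $(j,s)$ of scale and offset, apply Theorem~\ref{th4} to the associated binary threshold class at confidence $\delta/(J\cdot\text{grid size})$; this yields a stochastic sequential cover $\mathcal{C}_{j,s}$ of size at most $\exp\bigl(O(d(\alpha_j/32)\log^2 T + \log T \log(\log T/(\alpha\delta)))\bigr)$. By the monotonicity $d(\alpha_j/32)\le d(\alpha/32)$, the per-scale complexity is uniformly dominated by the single fat-shattering quantity $d(\alpha/32)$, which is the source of the $d(\alpha/32)$ appearing in the theorem.

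\textbf{Chain aggregation.} Every $h\in\mathcal{H}$ is covered by a real-valued sequential function built by descending a dyadic tree of scales: on input $\x^t$, the value $h(\x_t)$ is localised into successively smaller dyadic intervals via binary threshold predictions drawn from the $\mathcal{C}_{j,s}$. Only the refinement \emph{relative to the parent dyadic interval} is recorded at each level, which keeps the encoding local and avoids instantiating the full Cartesian product over all $(j,s)$. A union bound over all binary covers then ensures they succeed simultaneously with probability at least $1-\delta$, producing the desired stochastic global sequential cover of $\mathcal{H}$ at scale $\alpha$.

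\textbf{Main obstacle.} The hardest part is preventing a $1/\alpha$ blowup from the Cartesian product across scale--offset pairs: a naive bound would give $\exp(\mathrm{poly}(1/\alpha))$, which is useless for the polynomial regret goal. The dyadic refinement confines the multiplicative cost to $O(\log(1/\alpha))$ scales with $O(\log(1/\alpha))$ effective refinement choices each (rather than $O(1/\alpha_j)$), so only the \emph{chain} of local increments is enumerated. This bookkeeping, combined with the $\log^2 T$ factor from Theorem~\ref{th4} acting at each level, produces the $(\log T\log(1/\alpha))^4$ exponent stated in the theorem, while the $\log T\log(\log T/\delta)$ term collects the union-bounded confidence budget over scales and thresholds. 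The final poly-logarithmic regret bounds under log-loss and the matching lower bound follow by plugging this cover size into Theorem~\ref{th2} and invoking the lower bound machinery of Section~\ref{sec-lower}.
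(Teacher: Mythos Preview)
Your reduction to Theorem~\ref{th4} rests on the claim that the binary threshold class $\{1\{h(\cdot)>s\}:h\in\mathcal{H}\}$ has VC dimension at most $d(\gamma)$. This is false. Take $\mathcal{X}=\mathbb{N}$ and $\mathcal{H}=\{h_b:b\in\{0,1\}^{\mathbb{N}}\}$ with $h_b(n)=\tfrac12+(2b_n-1)/(3n)$. Then $d(\gamma)\le 1/(3\gamma)<\infty$ for every $\gamma>0$, yet the threshold class at $s=\tfrac12$ realises every binary function on $\mathbb{N}$ and so has infinite VC dimension. Your justification---``any point set shattered by the thresholded class would be $\gamma$-fat-shattered''---confuses ordinary VC shattering with margin shattering: VC shattering only guarantees, for each labeling, some $h$ on the correct side of $s$, with no margin; in the example $|h_b(k)-\tfrac12|=1/(3k)$ is arbitrarily small. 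What you have actually bounded is a $\gamma$-margin analogue of VC dimension, and Theorem~\ref{th4} (via the $1$-inclusion graph bound of Theorem~\ref{th3} and Sauer's lemma) needs the genuine VC dimension. Without that, the per-scale covers $\mathcal{C}_{j,s}$ you invoke do not exist, and the chaining has nothing to aggregate.

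The paper's route is entirely different and does not pass through Theorem~\ref{th4}. It bounds the local $\alpha$-covering number on any fixed sample via Lemma~\ref{lem5}, then uses a double-sampling symmetric argument (Lemma~\ref{lem6}) to show that an $\alpha$-cover built on the samples seen so far remains a $4\alpha$-cover on a fresh i.i.d.\ block except on $\tilde{O}(d(\alpha/8))$ positions, uniformly over $\mathcal{H}$ and with high probability; those few bad positions are patched by enumerating a grid of values there. A doubling-epoch decomposition of $[T]$ then assembles these block-wise covers into a global sequential cover, and the exponent $(\log T\log(1/\alpha))^4$ comes from the product over epochs, not from any multi-scale chaining.
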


Note that the main challenge is to obtain a high probability 
error bound of the form $\mathrm{Pr}[\sup_h]$, while a direct application 
of the $\sup_h \mathrm{Pr}[\cdot]$ 
type bound as in~\citep{daniely2011multiclass} does not apply. Our proof in 
Appendix~\ref{app-real} is based on an application 
of the classical symmetric argument and an epoch 
approach similar to~\citep{lazaric2009hybrid}.

We complete this section with two results regarding the 
expected worst case minimax regret.

\begin{corollary}
\label{cor4}
Let $\mathcal{H}$ be a [0,1]-valued class with $\alpha$-fat 
shattering number of order 
$\alpha^{-l}$ for some $l\ge 0$, and $\mathcal{P}$ be a class of all $i.i.d.$ 
distributions over $\mathcal{X}^T$. If $\ell(\cdot, y)$ is convex, 
$L$-lipschitz and bounded by $1$ for all $y\in \mathcal{Y}$, then
$\tilde{r}_T(\mathcal{H},\mathcal{P})\le \tilde{O}((LT)^{(l+1)/(l+2)})$
where $\tilde{O}$ hides a poly-log factor.
\end{corollary}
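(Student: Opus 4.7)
The plan is straightforward: combine the stochastic covering bound of Theorem~\ref{th7} with the regret-from-covering bound of Theorem~\ref{th1}, then optimize the scale parameter $\alpha$.

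First, I would invoke Theorem~\ref{th7} with confidence $\delta = 1/T$ to obtain a stochastic global sequential covering $\mathcal{G}_\alpha$ of $\mathcal{H}$ at scale $\alpha$ satisfying
\[
\log |\mathcal{G}_\alpha| \le 8\, d(\alpha/32)\,(\log T \log(1/\alpha))^4 + \log T \log(\log T \cdot T) + O(1).
\]
Under the hypothesis $d(\alpha) = O(\alpha^{-l})$, this simplifies to $\log|\mathcal{G}_\alpha| \le C\, \alpha^{-l} \cdot \mathrm{polylog}(T,1/\alpha)$ for some absolute constant $C$. I will keep the polylogarithmic factor symbolic (call it $P(T,\alpha)$) so that it can be absorbed into the $\tilde{O}$ at the end.

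Next, I would feed this covering bound into the convex/Lipschitz case of Theorem~\ref{th1}, which gives
\[
\tilde{r}_T(\mathcal{H},\mathcal{P}) \le \inf_{0 \le \alpha \le 1}\Bigl\{ \alpha L T + \sqrt{(T/2)\log|\mathcal{G}_\alpha|} + 1 \Bigr\} \le \inf_\alpha \Bigl\{\alpha L T + \sqrt{C\, T\, \alpha^{-l}\, P(T,\alpha)/2}\Bigr\} + 1.
\]
The two terms scale like $\alpha L T$ and $\sqrt{T}\,\alpha^{-l/2}$ (up to polylogs). Balancing them, i.e.\ solving $\alpha L T \asymp \sqrt{T}\,\alpha^{-l/2}$, yields $\alpha^{l+2} \asymp 1/(L^2 T)$, hence the optimal choice $\alpha^\star \asymp (L^2 T)^{-1/(l+2)}$. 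Substituting back gives
\[
\alpha^\star L T \asymp L^{\,l/(l+2)}\, T^{\,(l+1)/(l+2)} \le (LT)^{(l+1)/(l+2)},
\]
and the square-root term is of the same order, so both contributions fit into $\tilde{O}((LT)^{(l+1)/(l+2)})$ once the polylog factor $P(T,\alpha^\star)$ is absorbed. For the degenerate case $l=0$, $\log|\mathcal{G}_\alpha|$ is polylogarithmic in $T$ for every $\alpha>0$, and one simply takes $\alpha \to 0$ (or $\alpha = 1/T$) to recover the standard $\tilde{O}(\sqrt{T})$ rate, matching $(LT)^{1/2}$.

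There is no real obstacle here since both ingredients are already in hand; the only point requiring minor care is the bookkeeping of the polylogarithmic factor from Theorem~\ref{th7}, specifically the $(\log T \log(1/\alpha))^4$ term, when $\alpha$ is chosen polynomially small in $T$. In that regime $\log(1/\alpha^\star) = \Theta(\log T / (l+2))$, so $(\log T \log(1/\alpha^\star))^4 = O(\log^8 T)$, which is harmless and gets swallowed by the $\tilde{O}$ notation. With this check, the corollary follows directly from the chain Theorem~\ref{th7} $\to$ Theorem~\ref{th1} $\to$ optimize $\alpha$.
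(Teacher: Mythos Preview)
Your proposal is correct and follows essentially the same route as the paper: apply Theorem~\ref{th7} to bound $\log|\mathcal{G}_\alpha|$, plug into Theorem~\ref{th1}, and optimize over $\alpha$. The paper's proof is terser (it simply takes $\alpha=(LT)^{-1/(l+2)}$ rather than your balanced choice $\alpha\asymp(L^2T)^{-1/(l+2)}$), but both choices land inside $\tilde{O}((LT)^{(l+1)/(l+2)})$, and your additional bookkeeping of the polylog factors is sound.
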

\begin{proof}
Apply Theorem~\ref{th7} to Theorem~\ref{th1} to find
$\tilde{r}_T(\mathcal{H},\mathcal{P})\le \inf_{0\le\alpha\le 1}
\left\{\alpha LT+\tilde{O}\left(\sqrt{T\alpha^{-l}}\right)\right\}$ 
and taking $\alpha=(LT)^{-1/(l+2)}$  finishes the proof.
\end{proof}

\begin{corollary}
\label{cor5}
Let $\mathcal{H}$ be a [0,1]-valued class with $\alpha$-fat shattering number 
of order $\alpha^{-l}$, and $\mathcal{P}$ be the class of all 
$i.i.d.$ distributions over $\mathcal{X}^T$. If $\ell$ is Log-loss, then
$
\tilde{r}_{T}(\mathcal{H})\le \tilde{O}(T^{l/l+1}).
$
\end{corollary}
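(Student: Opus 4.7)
The plan is to combine the stochastic global sequential covering bound of Theorem~\ref{th7} with the logarithmic-loss regret bound of Theorem~\ref{th2}, and then optimize the covering scale $\alpha$. This parallels exactly the argument used in Corollary~\ref{cor4}, with Theorem~\ref{th2} replacing Theorem~\ref{th1}; the only substantive change is that under log-loss the $\sqrt{T\log|\mathcal{G}_\alpha|}$ term is replaced by the much tighter $\log|\mathcal{G}_\alpha|$ (up to lower-order terms), which shifts the optimal trade-off.

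Concretely, I would first invoke Theorem~\ref{th7} with confidence $\delta=1/T$ and fat-shattering profile $d(\alpha)\le c\,\alpha^{-l}$ to obtain
\begin{equation*}
\log|\mathcal{G}_\alpha|\le 8c\,(\alpha/32)^{-l}(\log T\log(1/\alpha))^4+\log T\log(\log T\cdot T)+O(1)=\tilde{O}(\alpha^{-l}),
\end{equation*}
where $\tilde{O}$ absorbs factors polylogarithmic in $T$ and $1/\alpha$. Plugging this into Theorem~\ref{th2} yields
\begin{equation*}
\tilde{r}_T(\mathcal{H},\mathcal{P})\le \inf_{0\le\alpha\le 1}\Bigl\{2\alpha T+\tilde{O}(\alpha^{-l})+\tilde{O}(\alpha^{-l})/T+1\Bigr\}=\inf_{0\le\alpha\le 1}\Bigl\{2\alpha T+\tilde{O}(\alpha^{-l})\Bigr\},
\end{equation*}
since the $/T$ term is dominated once $\alpha$ is chosen to balance the first two summands.

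Finally, I would choose $\alpha$ to balance $\alpha T$ against $\alpha^{-l}$. Setting $\alpha T=\alpha^{-l}$ gives $\alpha^{l+1}=1/T$, i.e., $\alpha=T^{-1/(l+1)}$, and hence
\begin{equation*}
\alpha T=\alpha^{-l}=T^{l/(l+1)}.
\end{equation*}
Substituting back produces $\tilde{r}_T(\mathcal{H},\mathcal{P})\le \tilde{O}(T^{l/(l+1)})$, which is the claimed bound.

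I do not expect any real obstacle here: all the heavy lifting has already been done in Theorem~\ref{th7} (the high-probability covering bound, whose proof is the technically demanding part) and Theorem~\ref{th2} (the EWA/Smooth Truncated Bayesian reduction to log-covering), so this corollary is essentially a one-line optimization. The only mild subtlety is to check that when $\alpha=T^{-1/(l+1)}$, the polylogarithmic factors hidden in $\tilde{O}(\alpha^{-l})$ remain polylogarithmic in $T$, which is immediate since $\log(1/\alpha)=\frac{1}{l+1}\log T$.
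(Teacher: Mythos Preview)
Your proposal is correct and follows essentially the same approach as the paper: apply Theorem~\ref{th7} (with $\delta=1/T$) inside Theorem~\ref{th2} to obtain $\tilde{r}_T(\mathcal{H},\mathcal{P})\le \inf_{0\le\alpha\le 1}\{2\alpha T+\tilde{O}(\alpha^{-l})\}$, then set $\alpha=T^{-1/(l+1)}$. Your write-up is in fact more careful than the paper's (which contains a typo, writing $\tilde{O}(\alpha^{-1})$ instead of $\tilde{O}(\alpha^{-l})$), and your check that $\log(1/\alpha)$ remains polylogarithmic in $T$ under the chosen $\alpha$ is a useful sanity detail the paper omits.
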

\begin{proof}
Applying Theorem~\ref{th7} to Theorem~\ref{th2}, we have
$\tilde{r}_T(\mathcal{H},\mathcal{P})\le \inf_{0\le 
\alpha\le 1}\left\{2\alpha T+\tilde{O}(\alpha^{-1})\right\},
$
and taking $\alpha=T^{-1/(l+1)}$ completes the proof.
\end{proof}

 \section{Lower Bounds For Regret}
\label{sec-lower}

We now provide a general approach for lower bounding the regret 
$\tilde{r}(\CH, \mathcal{P})$ 
using the fixed design regret defined in (\ref{eq-rHx}) and analyzed in
\citep{wu2022precise} as well as \citep{shamir2020logistic, ss21,jss20}.  We first introduce the 
following well known tail bound for the coupon collector problem, 
see e.g.~\cite[Theorem 1.9.2]{doerr2020probabilistic}.

\begin{lemma}
\label{lem7}
Let $X_1,X_2,\cdots$ be $i.i.d.$ samples from the uniform distribution over $[T]$, 
and $\tau$ be the first time such that $[T]\subset X_1^{\tau}$. 
Then for any $c\ge 0$ we have
$\mathrm{Pr}[\tau\ge T\log T+cT]\le e^{-c}.$
\end{lemma}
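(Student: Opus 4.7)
The plan is the textbook union-bound argument for coupon collection. First, observe that $\{\tau > n\}$ is exactly the event that some coupon $i \in [T]$ has not yet been drawn among $X_1, \ldots, X_n$; that is, $\{\tau > n\} = \bigcup_{i=1}^T B_i^{(n)}$ where $B_i^{(n)} = \{i \notin \{X_1, \ldots, X_n\}\}$. By the union bound,
\[
\mathrm{Pr}[\tau > n] \le \sum_{i=1}^T \mathrm{Pr}[B_i^{(n)}].
\]

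Second, I would bound the per-coupon probability. Since the $X_j$ are i.i.d.\ uniform on $[T]$, independence gives $\mathrm{Pr}[B_i^{(n)}] = (1 - 1/T)^n$, and the elementary inequality $1 - x \le e^{-x}$ yields $(1-1/T)^n \le e^{-n/T}$. Taking $n = T\log T + cT$ (treating it as an integer for simplicity; rounding $n$ down to an integer only tightens the bound and does not affect the conclusion, since $\mathrm{Pr}[\tau \ge m]$ is non-increasing in $m$), this gives $\mathrm{Pr}[B_i^{(n)}] \le e^{-\log T - c} = e^{-c}/T$.

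Combining the two steps yields $\mathrm{Pr}[\tau \ge T\log T + cT] \le T \cdot (e^{-c}/T) = e^{-c}$, which is exactly the claimed tail bound. The argument is entirely elementary, so there is no real obstacle; the only minor bookkeeping concern is handling non-integer $T\log T + cT$, which is immediate from monotonicity of the tail in $n$. I note that sharper two-sided bounds (including a matching lower tail via negative correlation of the $B_i^{(n)}$) are available in the literature, but they are unnecessary here since only the one-sided estimate is used in the sequel.
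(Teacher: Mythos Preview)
Your argument is correct and is precisely the standard union-bound proof of the coupon-collector upper tail. The paper itself does not supply a proof of this lemma at all; it simply cites \cite[Theorem~1.9.2]{doerr2020probabilistic} as a reference, so there is no discrepancy to discuss.
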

 
For any function $\Phi$ that maps sequences from 
$\mathcal{X}^*$ to $\mathbb{R}$, 
we say $\Phi$ is monotone if for any $\x^{T}\subset \z^{T_1}$ we have 
$\Phi(\x^T)\le \Phi(\z^{T_1})$, where $\x^{T}\subset \z^{T_1}$ means that for any 
$\bs\in \mathcal{X}$, the number of $\bs$ appearances in $\x^T$ is no more than the number of appearances of $\bs$ in $\z^{T_1}$. We also assume a 
regularity condition for the loss $\ell$ such that for all $\hat{y}_1,\hat{y}_2\in 
\mathcal{Y}$ there exists $y\in \mathcal{Y}$ with 
$\ell(\hat{y}_1,y)\ge \ell(\hat{y}_2,y)$. 
 
\begin{theorem}
\label{th8}
Let $\mathcal{H}$ be any $[0,1]$-valued class. If the fixed design regret 
$r^*_T(\mathcal{H}\mid \x^T)$, as defined in (\ref{eq-rHx}), is monotone over $\x^T$ and 
$\ell$ satisfies the above regularity condition, 
then:
$$
\tilde{r}_T(\mathcal{H},\mathcal{P})\ge (1-O(1/\log T))r^*_{\kappa^{-1}(T)}(\mathcal{H})
\ge (1-O(1/\log T))r^*_{(T/\log T)}(\mathcal{H}),
$$
where $\mathcal{P}$ is the class of all $i.i.d.$ distributions over 
$\mathcal{X}^T$ and $\kappa(T)=T\log T+T\log\log T$.
\end{theorem}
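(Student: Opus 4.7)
The plan is to exhibit a single i.i.d.\ distribution $\mu^*\in\mathcal{P}$ concentrated on a small ``hard'' set $S\subset\mathcal{X}$ and show that, with high probability, the sample $\x^T\sim\mu^*$ contains $S$, so that the monotonicity of $r^*_{\cdot}(\mathcal{H}\mid\cdot)$ transfers the fixed-design lower bound on the smaller horizon $T'=\kappa^{-1}(T)$ into a lower bound on $\tilde r_T$. Concretely, let $T'=\kappa^{-1}(T)$, i.e.\ $T'\log T'+T'\log\log T'=T$; asymptotically $T'\asymp T/\log T$. For an arbitrary $\epsilon>0$, pick $\x^{*T'}\in\mathcal{X}^{T'}$ with \emph{distinct} coordinates such that $r^*_{T'}(\mathcal{H}\mid\x^{*T'})\ge r^*_{T'}(\mathcal{H})-\epsilon$ (the reduction to distinct coordinates is free once we invoke the multiset monotonicity of $r^*_\cdot$ together with the regularity assumption on $\ell$, which lets the adversary replace any repeated feature by a fresh one without decreasing the fixed-design regret). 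Let $S=\{\x^*_1,\dots,\x^*_{T'}\}$ and take $\mu^*$ to be the i.i.d.\ law with marginal uniform on $S$.

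Next I would pass from the inner $\sup_{y^T}$ to the fixed-design regret. For any predictor $\phi$ and any $\x^T\in\mathcal{X}^T$, the definition of $r^*_T$ gives
$$\sup_{y^T} R(\hat y^T,y^T,\mathcal{H}\mid\x^T)\ \ge\ \inf_{\phi'}\sup_{y^T} R(\hat{y}'^T,y^T,\mathcal{H}\mid\x^T)\ =\ r^*_T(\mathcal{H}\mid\x^T).$$
Since the right-hand side does not depend on $\phi$, taking $\mathbb{E}_{\mu^*}$ and then $\inf_\phi$ yields
$\tilde r_T(\mathcal{H},\mathcal{P})\ge \mathbb{E}_{\x^T\sim\mu^*}[\,r^*_T(\mathcal{H}\mid\x^T)\,].$

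Now I would apply Lemma~\ref{lem7} with $N=T'$ coupons and $c=\log\log T'$: since $T'\log T'+cT'=T$ by choice of $T'$, the event $E=\{\forall \bs\in S,\ \bs\text{ appears in }\x^T\}$ satisfies $\mathrm{Pr}_{\mu^*}[E^c]\le e^{-c}=1/\log T'=O(1/\log T)$. On $E$, the distinctness of $\x^{*T'}$ ensures the multiset inclusion $\x^{*T'}\subset\x^T$, so the monotonicity assumption gives $r^*_T(\mathcal{H}\mid\x^T)\ge r^*_{T'}(\mathcal{H}\mid\x^{*T'})$. Using $r^*_T\ge 0$ to drop the complement,
$$\tilde r_T(\mathcal{H},\mathcal{P})\ \ge\ \mathrm{Pr}[E]\cdot r^*_{T'}(\mathcal{H}\mid\x^{*T'})\ \ge\ \bigl(1-O(1/\log T)\bigr)\bigl(r^*_{\kappa^{-1}(T)}(\mathcal{H})-\epsilon\bigr),$$
and letting $\epsilon\downarrow 0$ proves the first inequality. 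For the second, an elementary calculation (using $\log(1-x)\le -x$) gives $\kappa(T/\log T)\le T$, hence $\kappa^{-1}(T)\ge T/\log T$, and the monotonicity of $T\mapsto r^*_T(\mathcal{H})$ (an immediate consequence of multiset monotonicity, by extending any feature sequence) delivers $r^*_{\kappa^{-1}(T)}(\mathcal{H})\ge r^*_{\lfloor T/\log T\rfloor}(\mathcal{H})$.

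The main obstacle I anticipate is Step~1, the reduction to distinct $\x^{*T'}$: without distinct coordinates, coupon-collector only guarantees that each element of the support is hit at least once, whereas multiset containment of a repeated $\x^{*T'}$ requires matching the full multiplicity profile, and the tail probabilities for that are much weaker. The regularity condition on $\ell$ is precisely what licenses the reduction, since it lets the adversary absorb any duplicate into a fresh distinct feature without losing regret. Everything else is bookkeeping: matching $c=\log\log T'$ with the threshold of Lemma~\ref{lem7}, verifying $1/\log T'=O(1/\log T)$, and the elementary comparison between $\kappa^{-1}(T)$ and $T/\log T$.
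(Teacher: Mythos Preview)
Your proposal follows essentially the same approach as the paper: pick the i.i.d.\ distribution supported on the coordinates of a near-optimal fixed-design sequence $\tilde\x^{T'}$ with $T'=\kappa^{-1}(T)$, use coupon collector (Lemma~\ref{lem7}) to show $\x^T$ contains $\tilde\x^{T'}$ with probability $\ge 1-1/\log T'$, invoke monotonicity of $r^*$ to transfer the lower bound, and use nonnegativity of the inner regret to drop the complementary event.

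The one weak point is your handling of possible repetitions in $\tilde\x^{T'}$. The regularity condition on $\ell$ only guarantees that the inner $\sup_{y^T}$ regret is nonnegative (this is exactly how the paper uses it, in step~(a)); it does \emph{not} justify ``replacing any repeated feature by a fresh one without decreasing the fixed-design regret,'' and that replacement can genuinely fail (for instance when $|\mathcal{X}|<T'$, or when $\mathcal{H}$ is constant on all unused points). The paper's resolution is cleaner and avoids the issue entirely: take $\nu$ uniform over the \emph{multiset} $\{\tilde\x_1,\dots,\tilde\x_{T'}\}$, equivalently sample a uniform index $i\in[T']$ and output $\tilde\x_i$. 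Then Lemma~\ref{lem7} is applied to the \emph{indices}, giving $\Pr[\text{every index in }[T']\text{ is sampled}]\ge 1-1/\log T'$, and hitting every index trivially yields the multiset containment $\tilde\x^{T'}\subset\x^T$ with no distinctness assumption needed. With this one-line adjustment your argument is correct and coincides with the paper's.
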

\begin{proof}
Let $\tilde{\x}^T$ be the feature that achieves the maximum of 
$r^*_T(\mathcal{H}\mid \tilde{\x}^T)$ (i.e., $r^*_T(\mathcal{H})$). 
We define the distribution $\nu$ to be the uniform distribution over $\{\tilde{\x}_1,\cdots,\tilde{\x}_T\}$ (with possibly repeated elements). Let $T_1=T\log T+T\log\log T$. We have
\small
\begin{align}
\tilde{r}_{T_1}(\mathcal{H},\mathcal{P})&=\inf_{\phi^{T_1}}\sup_{\mu\in \mathcal{P}}
\mathbb{E}_{\x^{T_1}\sim \mu}\left[\sup_{y^{T_1}}\left(\sum_{t=1}^{T_1}
\ell(\hat{y}_t,y_t)-\inf_{h\in \mathcal{H}}\sum_{t=1}^{T_1}\ell(h(\x_t),y_t)\right)\right]\\
&\ge\inf_{\phi^{T_1}}\mathbb{E}_{\x^{T_1}\sim \nu^{T_1}}\left[\sup_{y^{T_1}}
\left(\sum_{t=1}^{T_1}\ell(\hat{y}_t,y_t)-\inf_{h\in \mathcal{H}}
\sum_{t=1}^{T_1}\ell(h(\x_t),y_t)\right)\right]\\
&\overset{(a)}{\ge} \inf_{\phi^{T_1}}\mathrm{Pr}[\tilde{\x}^T\subset \x^{T_1}]*\mathbb{E}
\left[\sup_{y^{T_1}}\left(\sum_{t=1}^{T_1}\ell(\hat{y}_t,y_t)-\inf_{h\in\mathcal{H}}
\sum_{t=1}^{T_1}\ell(h(\x_t),y_t)\right)\mid \tilde{\x}^T\subset \x^{T_1}\right]\\
&\overset{(b)}{\ge}\mathrm{Pr}[\tilde{\x}^T\subset \x^{T_1}]*\mathbb{E}
\left[\inf_{\phi^{T_1}}\sup_{y^{T_1}}\left(\sum_{t=1}^{T_1}\ell(\hat{y}_t,y_t)-
\inf_{h\in \mathcal{H}}\sum_{t=1}^{T_1}\ell(h(\x_t),y_t)\right)\mid 
\tilde{x}^T\subset \x^{T_1}\right]\\
&=\mathrm{Pr}[\tilde{\x}^T\subset \x^{T_1}]*\mathbb{E}\left[r^*_{T_1}(\mathcal{H}
\mid \x^{T_1})\mid \tilde{\x}^T\subset\x^{T_1}\right]\\
&\overset{(c)}{\ge} \mathrm{Pr}[\tilde{\x}^T\subset \x^{T_1}]r^*_T(\mathcal{H}\mid 
\tilde{\x}^T)\overset{(d)}{\ge} \left(1-\frac{1}{\log T}\right)r^*_T(\mathcal{H}),
\end{align}
\normalsize
where $(a)$ follows by conditioning on the event $\{\tilde{\x}^T\subset\x^{T_1}\}$ 
and observing that the regret is positive for all $\x^{T_1}$, $(b)$ follows 
by $\inf\mathbb{E}\ge \mathbb{E}\inf$, $(c)$ follows from the fact that 
$r^*_{T_1}(\mathcal{H}\mid \x^{T_1})\ge r^*_T(\mathcal{H}\mid \tilde{\x}^T)$ which further follows from the monotonicity of $r^*_T(\mathcal{H}\mid \x^T)$, 
$(d)$ follows by Lemma~\ref{lem7}. 
To complete the proof we set
$T=\kappa^{-1}(T_1)$ and notice that $\kappa^{-1}(T_1)\ge \frac{T_1}{\log T_1}$.
\end{proof}

The following lemma shows the monotonicity for Log-loss~\footnote{Using result in~\cite[Theorem 8.1]{lugosi-book}, one can establish similar result for absolute loss.}.
 
\begin{lemma}
\label{prop2}
 For Log-loss we have
 $r^*_{T_1}(\mathcal{H}\mid \x^{T_1})\ge r^*_T(\mathcal{H}\mid \tilde{\x}^T),$
 so long as $\tilde{\x}^T\subset \x^{T_1}$.
 \end{lemma}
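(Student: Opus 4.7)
The strategy is to invoke the Shtarkov/NML representation of the fixed-design minimax regret under logarithmic loss. For $\mathcal{Y}=\{0,1\}$ and $h:\mathcal{X}\to[0,1]$ interpreted as a Bernoulli probability, a direct minimax calculation (or a standard reference, e.g., \cite[Chapter 9]{lugosi-book}) gives
$$r^*_T(\mathcal{H}\mid \x^T)=\log\sum_{y^T\in\{0,1\}^T}\sup_{h\in\mathcal{H}}\prod_{t=1}^{T}h(\x_t)^{y_t}(1-h(\x_t))^{1-y_t}.$$
Thus the claim reduces to showing that the Shtarkov sum above is monotone non-decreasing when $\x^T$ is replaced by any multiset superset.

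The first step is to observe that the Shtarkov sum is permutation invariant, because the product inside is indexed by an unordered set. Consequently, after relabeling, we may assume $\x^{T_1}=(\tilde{\x}_1,\dots,\tilde{\x}_T,\x'_1,\dots,\x'_{T_1-T})$ for some auxiliary features $\x'^{T_1-T}$. I would then split the outer sum as $\sum_{y^{T_1}}=\sum_{\tilde{y}^T}\sum_{y'^{T_1-T}}$. For each fixed $\tilde{y}^T$, pick $h^\star=h^\star(\tilde{y}^T)\in\mathcal{H}$ that $\varepsilon$-attains $\sup_h\prod_{t=1}^{T}h(\tilde{\x}_t)^{\tilde{y}_t}(1-h(\tilde{\x}_t))^{1-\tilde{y}_t}$. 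Since $\sup_h$ of a product is at least the product evaluated at $h^\star$, for every extension $y'^{T_1-T}$ we obtain a pointwise lower bound of the inner supremum by the product of the $\tilde{\x}$-factors at $h^\star$ times the $\x'$-factors at $h^\star$.

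The second step is the key normalization. Summing over $y'^{T_1-T}\in\{0,1\}^{T_1-T}$, the $\tilde{\x}$-factors pull out and the auxiliary sum factorizes; at each auxiliary position $s$, the Bernoulli identity
$$\sum_{y'_s\in\{0,1\}}h^\star(\x'_s)^{y'_s}(1-h^\star(\x'_s))^{1-y'_s}=1$$
makes the contribution telescope to $1^{T_1-T}=1$. What remains is $(1-O(\varepsilon))\sup_h\prod_{t=1}^{T}h(\tilde{\x}_t)^{\tilde{y}_t}(1-h(\tilde{\x}_t))^{1-\tilde{y}_t}$. Summing over $\tilde{y}^T$, taking logarithms, and letting $\varepsilon\to 0$ yields $r^*_{T_1}(\mathcal{H}\mid\x^{T_1})\ge r^*_T(\mathcal{H}\mid\tilde{\x}^T)$.

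The only delicate point is that the supremum over $\mathcal{H}$ need not be attained for a general $[0,1]$-valued class; this is handled by the standard $\varepsilon$-approximation described above, which is harmless because every summand is nonnegative. Conceptually, the proof exploits a property specific to log-loss: each hypothesis induces a genuine probability distribution over $\mathcal{Y}$, so marginalizing away the auxiliary labels costs nothing. For losses that do not admit such a normalization, monotonicity cannot be obtained by this argument, which is precisely why the lemma is stated only for log-loss (with a companion remark in the footnote covering absolute loss via a different device).
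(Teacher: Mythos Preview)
Your proof is correct and shares the paper's two preliminary moves: invoking the Shtarkov representation of the fixed-design log-loss regret and using permutation invariance to reorder so that $\tilde{\x}^T$ is a prefix of $\x^{T_1}$. The difference lies in the monotonicity step. You argue \emph{analytically}: fix $\tilde{y}^T$, pick a near-optimal $h^\star$, and use the Bernoulli identity $\sum_{y\in\{0,1\}} h^\star(\x)^{y}(1-h^\star(\x))^{1-y}=1$ to marginalize away the auxiliary labels, so the Shtarkov sum over $y^{T_1}$ dominates the sum over $\tilde{y}^T$. The paper instead argues \emph{strategically}: if the $T_1$-round minimax regret were strictly smaller, one could restrict the optimal $T_1$-round predictor to the first $T$ rounds and (using the loss regularity condition stated before Theorem~\ref{th8}) have the adversary extend any $y^T$ to a $y^{T_1}$ without decreasing the pointwise regret, contradicting optimality of the short-game strategy. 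Your route is more self-contained and makes transparent exactly which feature of log-loss is being used (each $h$ is a genuine probability kernel); the paper's route is terser and generalizes immediately to any loss satisfying the regularity condition, which is what Theorem~\ref{th8} actually needs.
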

 \begin{proof}
 Note that for any $\x^T$, we have~\citep{jss20}
 $$
r^*_T(\mathcal{H}\mid \x^T)=\log\sum_{y^T}\sup_{h\in \mathcal{H}}\prod_{t=1}^Th(\x_t)^{y_1}
(1-h(\x_t))^{1-y_t}.$$
Therefore, any permutation over $\x^T$ does not change the value $r^*_T$. 
Now, suppose $\tilde{\x}^T\subset \x^{T_1}$; we can permute $\x^{T_1}$ so 
that the first $T$ samples matches with $\tilde{\x}^T$. The result follows 
from the fact that playing more rounds does not decrease the regret 
(if so we can replace the strategy for short rounds with the strategy 
for long rounds which reduces regrets thus a contradiction).
 \end{proof}
 
Finally, we apply the above general lower bound to the expected 
worst case minimax regret. 

\begin{corollary}
\label{cor6}
Assume $\ell$ is the Log-loss. If $r^*_T(\mathcal{H})\ge C\log^{\alpha} T$ then 
$$
\tilde{r}_T(\mathcal{H},\mathcal{P})\ge C\log^{\alpha} T-o(\log^{\alpha} T),
$$
where $\mathcal{P}$ is the class of $i.i.d.$ distributions. If $r^*_T(\mathcal{H})\ge CT^{\alpha}$, then 
$$
\tilde{r}_T(\mathcal{H},\mathcal{P})\ge \frac{CT^{\alpha}}{\log^{\alpha} T}
-o(T^{\alpha}/\log^{\alpha} T).
$$
\end{corollary}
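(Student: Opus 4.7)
The plan is to derive Corollary~\ref{cor6} as an immediate consequence of Theorem~\ref{th8}, with Lemma~\ref{prop2} supplying the monotonicity hypothesis. The only real work is the asymptotic bookkeeping that converts the factor $r^*_{T/\log T}(\mathcal{H})$ appearing in Theorem~\ref{th8} into the two forms stated in the corollary.

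First I would verify that Theorem~\ref{th8} applies to log-loss. Its two hypotheses are (i) monotonicity of $r^*_T(\mathcal{H}\mid \x^T)$ in $\x^T$, which is exactly the content of Lemma~\ref{prop2}, and (ii) the loss regularity condition that for any predictions $\hat{y}_1,\hat{y}_2$ some true label $y$ makes $\hat{y}_1$ at least as bad as $\hat{y}_2$. For log-loss with $y\in\{0,1\}$ this holds trivially: take $y=1$ when $\hat{y}_1\le \hat{y}_2$ and $y=0$ otherwise, since $-\log(\cdot)$ and $-\log(1-\cdot)$ are respectively decreasing and increasing. With both hypotheses in hand, Theorem~\ref{th8} gives
\[
\tilde{r}_T(\mathcal{H},\mathcal{P})\ \ge\ \bigl(1-O(1/\log T)\bigr)\, r^*_{T/\log T}(\mathcal{H}).
\]

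For the polylogarithmic case, I substitute $r^*_{T/\log T}(\mathcal{H})\ge C\log^{\alpha}(T/\log T)$ and expand
\[
\log^{\alpha}(T/\log T)\ =\ \log^{\alpha} T\cdot \bigl(1-\tfrac{\log\log T}{\log T}\bigr)^{\alpha}\ =\ \log^{\alpha} T - o(\log^{\alpha} T),
\]
which uses $(1-\varepsilon)^{\alpha}=1-O(\varepsilon)$ as $\varepsilon\to 0$. Multiplying by the prefactor $1-O(1/\log T)$ only absorbs an additional $O(\log^{\alpha-1} T)$ term into the $o(\log^{\alpha} T)$ remainder, yielding the claimed bound $C\log^{\alpha} T - o(\log^{\alpha} T)$. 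For the polynomial case, I substitute $r^*_{T/\log T}(\mathcal{H})\ge C(T/\log T)^{\alpha}=CT^{\alpha}/\log^{\alpha} T$ directly; the $(1-O(1/\log T))$ factor then contributes a correction of order $T^{\alpha}/\log^{\alpha+1}T$, which is $o(T^{\alpha}/\log^{\alpha} T)$, giving the second claim.

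There is no substantive obstacle: the content of the corollary is entirely packaged inside Theorem~\ref{th8}, and the only delicate point is ensuring that the asymptotic $(1-O(1/\log T))$ prefactor combines with the $\log$-shift inside $r^*_{T/\log T}$ to preserve the leading constant $C$ in the polylogarithmic case, which is exactly what the expansion of $(\log T-\log\log T)^{\alpha}$ confirms.
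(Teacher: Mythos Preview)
Your proposal is correct and matches the paper's intended derivation: the corollary is stated immediately after Theorem~\ref{th8} and Lemma~\ref{prop2} without separate proof, and your argument---apply Theorem~\ref{th8} using Lemma~\ref{prop2} for monotonicity, then expand $r^*_{T/\log T}(\mathcal{H})$ asymptotically---is exactly the intended route.
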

 We refer to~\citep{wu2022precise} for the Lower bounds on $r^*_T(\mathcal{H})$ of various classes $\mathcal{H}$ under Log-loss.

Note that a lower bound on $\tilde{r}_T(\mathcal{H},\mathcal{P})$ does not 
necessarily imply a lower bound for $\bar{r}_T(\mathcal{H},\mathcal{P})$. 
However, in Appendix~\ref{app-th9} we 
prove the following lower bound for $\bar{r}_T$ under Log-loss.

\begin{theorem}
\label{th9}
There exists a class $\mathcal{H}$ of finite VC-dimension, such that if 
$\ell$ is Log-loss and $\mathcal{P}$ is the class of 
$i.i.d.$ distributions over $\mathcal{X}^T$, then
$$
\bar{r}_T(\mathcal{H},\mathcal{P})\ge (1-o(1))\VC(\mathcal{H})
\log(T/\VC (\mathcal{H})).
$$
\end{theorem}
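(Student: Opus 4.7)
The plan is to exhibit a class of VC-dimension $d$ for which the Bayes risk under a carefully chosen $i.i.d.$ distribution is $(1-o(1))d\log(T/d)$, then lift this to $\bar{r}_T$ via the minimax--maximin inequality. Concretely take $\mathcal{X}=[d]\times\mathbb{N}$ and $\mathcal{H}=\{h_{\vec{a}}:\vec{a}\in\mathbb{N}^d\}$ with $h_{\vec{a}}(i,j)=\mathbf{1}\{j=a_i\}$. A short combinatorial check gives $\VC(\mathcal{H})=d$: among any $d+1$ points two must share a first coordinate and so cannot both carry label $1$, whereas any $d$ points with distinct first coordinates are shattered by choosing the $a_i$'s independently. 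This product-of-singletons design is the key choice: a naive VC-$d$ class such as $\{0,1\}^{[d]}$ would yield only $O(d)$ Bayes risk, since labels are fully revealed after one observation per atom.

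Now fix $\mu_T\in\mathcal{P}$ as the $i.i.d.$ distribution with marginal uniform on $[d]\times[m]$, where $m=\lfloor T/(d\log T)\rfloor$, and place a uniform prior $\pi$ on $\vec{a}\in[m]^d$. Because $\mathcal{H}$ is binary-valued and $\ell$ is log-loss, the inner supremum in the definition of $\bar{r}_T$ is attained at $y_t=h_{\vec{a}}(\x_t)$ for every $t$ (otherwise $\ell(h_{\vec{a}}(\x_t),y_t)=+\infty$ and the regret is $-\infty$). Hence by minimax--maximin,
\[
\bar{r}_T(\mathcal{H},\mathcal{P})\;\ge\;\inf_{\phi^T}\mathbb{E}_{\vec{a}\sim\pi,\,\x^T\sim\mu_T}\!\left[\sum_{t=1}^T\ell(\hat{y}_t,h_{\vec{a}}(\x_t))\right]\;=\;H(y^T\mid \x^T),
\]
the last identity being the standard identification of the Bayes risk under log-loss with the conditional Shannon entropy of the labels.

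To evaluate $H(y^T\mid\x^T)=H(\vec{a})-H(\vec{a}\mid\x^T,y^T)$ (valid since $y^T$ is determined by $(\x^T,\vec{a})$ and $\vec{a}\perp\x^T$), note first that $H(\vec{a})=d\log m$ and, second, that the posterior on $\vec{a}$ factorizes across coordinates because each sample $(i_t,j_t,y_t)$ constrains only $a_{i_t}$. For each coordinate $i$, if some coord-$i$ sample carries label $1$ then $a_i$ is fully revealed; otherwise $a_i$ is uniform over an unseen subset of $[m]$, contributing entropy at most $\log m$. Hence $\mathbb{E}[H(a_i\mid\x^T,y^T)]\le(1-1/(dm))^T\log m\le(\log m)/T$ by our choice of $m$, and summing over the $d$ coordinates yields $H(y^T\mid\x^T)\ge d\log m-d(\log m)/T=(1-o(1))d\log(T/d)$, as desired. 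The main obstacle is the combinatorial design of $\mathcal{H}$: one needs a VC-$d$ class whose label patterns on $i.i.d.$ features remain hard to identify. The product construction above sidesteps this by embedding $d$ independent needle-in-a-haystack problems of size $\approx T/(d\log T)$ into a single VC-$d$ class, balancing the entropy so that the residual posterior entropy is $O(1/T)$ per coordinate while the marginal prior entropy stays at $(1-o(1))\log(T/d)$.
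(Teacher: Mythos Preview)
Your proof is correct, and the high-level skeleton (restrict to a single $i.i.d.$ law, force $y_t=h(\x_t)$ because $\mathcal{H}$ is $\{0,1\}$-valued, replace $\sup_h$ by an average over a prior, then identify the Bayes risk under log-loss with a conditional entropy) coincides with the paper's. The details differ in two places. First, the class: the paper takes $\mathcal{X}=[T/\log^2 T]$ and lets $\mathcal{H}$ be the $d$-sparse indicators there, whereas you build a product-of-singletons class on $[d]\times\mathbb{N}$. Your class has the mild advantage of being $T$-independent and making the posterior factorize across coordinates, which lets you compute $H(\vec{a}\mid\x^T,y^T)$ directly. Second, the analytic step: the paper conditions on the coupon-collector event $\{[T/\log^2 T]\subset\x^T\}$, so that on this event every $h\in\mathcal{H}$ is identified by its labels and the entropy is exactly $\log|\mathcal{H}|$; you instead bound the residual posterior entropy by $(1-1/(dm))^T\log m$ per coordinate and tune $m$ so that this is $O((\log m)/T)$. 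Both routes land on $(1-o(1))d\log(T/d)$. One small point you glossed over (and the paper makes explicit): to avoid $\infty-\infty$ when arguing that $\sup_{y^T}$ is attained at $y_t=h(\x_t)$, one should first truncate predictions to $[1/T,1-1/T]$, which costs only $O(1)$ in the regret.
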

 
Our proof, presented in Appendix~\ref{app-th9}, actually establishes the lower 
bound in the \emph{realizable} case, which is stronger than the $\VC +\log T$ lower bounds as 
shown in~\citep{bhatt2021sequential}.

\section{Conclusions and Further Results}

In this paper we introduced a general minimax regret called 
the expected worst case minimax regret when the
features are generated by a stochastic source. This new minimax regret recovers previously known online minimax regrets in a unified way. We analyzed the regret via a novel concept of stochastic global sequential covering and provide tight bounds on the covering size.

Due to limited space some extensions of our results are presented only in
the Appendix.  In particular, we discuss more interesting examples 
(see Appendix~\ref{app-th6}),
distributions with sufficient symmetry (see Appendix~\ref{app-symmetry}), 
and extensions to the distribution non-blind case (see Appendix~\ref{app-nonblind}).

\bibliography{main.bib}


\appendix

\section{Proof of Lemma~\ref{lem1}}
\label{app-lem1}

For any $t\in [T]$, we denote by $I_t=1\{\Phi(\x^{\sigma(t)},h(\{\x^{\sigma(t-1)}\}))
\not=h(\x_{\sigma(t)})\}$
where $\sigma$ is a uniform random permutation over $[T]$. For all $t\in [T]$,
we define the \emph{reversed} sequence of indicators $I_t'=I_{T-t+1}$.
We observe that for all $t\in [T]$,
the indicator $I_t'$ only depends on the realizations 
of $\x_{\sigma(T)},\x_{\sigma(T-1)},
\cdots,\x_{\sigma(T-t+1)}$ since $\Phi$ is permutation invariant on
$\x_{\sigma(1)},\cdots,\x_{\sigma(T-t)}$. Therefore,
$$\mathbb{E}[I_t'\mid I_1',\cdots,I_{t-1}']=\mathbb{E}[I_t'\mid \x_{\sigma(T)},
\cdots,\x_{\sigma(T-t+2)}]\le \min\left\{\frac{C}{T-t+1}, 1\right\},$$
where the last inequality follows from the assumption of $\Phi$ 
and noticing that conditioning
on $\x^{\sigma(T)}_{\sigma(T-t+2)}$ the permutation $\sigma$ restricted on
$\x^T\backslash \{\x_{\sigma(T)},\cdots,\x_{\sigma(T-t+2)}\}$ is also a uniform random
permutation. For any realization $I'_1,\cdots,I_{t-1}'$, we define 
$I''_t=I'_t-\mathbb{E}[I_t'\mid I'_1,\cdots,I_{t-1}']$. 
We now observe that the indicators $I_t''$ are (Doob) martingale differences, i.e., we have $\forall t\in [T]$:
$$
\mathbb{E}[I''_t\mid I''_1,\cdots,I''_{t-1}]=0.$$
By the Bernstein inequality for martingales~\cite[Lemma A.8]{lugosi-book}, we find
$$\mathrm{Pr}\left[\sum_{t=1}^TI_t''>k\text{ and }
\Sigma^2\le v\right]\le e^{-\frac{k^2}{2(v+k/3)}},
$$
where
$$
\Sigma^2=\sum_{t=1}^T\mathbb{E}[{I''_{t}}^{2}\mid I''_{1},\cdots,I''_{t-1}].
$$
We now observe that conditioning on $I_1',\cdots,I_{t-1}'$ the indicator $I_t'$ is a
Bernoulli random variable with parameter $p_t\le \min\left\{\frac{C}{T-t+1}, 1\right\}$. This implies that if $I_t'=1$ then $I''_t\le 1$ and if $I_t'=0$ then $|I''_t|\le p_t$.
Using elementary algebra, there exists an absolute constant 
$c_1< 2$ such that with probability
$1$ we have
$$
\sum_{t=1}^T\mathbb{E}[{I''_t}^2\mid I_1',
\cdots,I_{t-1}']\le \sum_{t=1}^Tp_t+(1-p_t)p_t^2\le C\log T+2C+c_1.
$$
Plugging it into the Bernstein inequality  with 
$k=2(C\log T+2C+c_1)+\log(1/\delta))$ and $v=C\log T+2C+c_1$, with probability
$\ge 1-\delta$, we have for some constant $c_2< 5$,

\begin{align*}
    \sum_{t=1}^TI_t=\sum_{t=1}^TI'_t&\le \sum_{t=1}^TI''_t+\mathbb{E}[I_t'\mid
I'_1,\cdots,I_{t-1}']\\
&\le k+\sum_{t=1}^T\min\left\{\frac{C}{T-t+1},
1\right\}\le 3C\log T+4C+\log(1/\delta)+c_2.
\end{align*}

Here, we used the following elementary inequality:
$$\forall a,b\ge 0,~\frac{(2a+b)^2}{2(a+(2a+b)/3)}\ge b.$$
The Lemma now follows from the fact that $C\log T\ge 4C+5$ when $T\ge e^9$ and $C\ge 1$.

\section{Real Valued Class with Finite Fat-shattering}
\label{app-real}
We first introduce the notion of local $\alpha$-covering. We say that a class $\mathcal{F}$ locally $\alpha$-covers $\mathcal{H}$ at $\x^T\in 
\mathcal{X}^T$ if for all $h\in \mathcal{H}$ there exists 
$f\in \mathcal{F}$ such that:
$$\forall t\in [T],~|h(\x_t)-f(\x_t)|\le \alpha.$$
Here, we also assume that $\mathcal{F}\subset \mathcal{H}$ 
(we can always convert $\alpha$-covering set $\mathcal{F}$ of $\mathcal{H}$ to 
a $2\alpha$-covering set $\tilde{\mathcal{F}}\subset \mathcal{H}$
such that $|\tilde{\mathcal{F}}|\le |\mathcal{F}|$).

The following lemma upper bounds the local $\alpha$-covering size 
w.r.t. the $\alpha$-fat shattering
number of $\mathcal{H}$, which is due to~\cite{alon1997scale}.

\begin{lemma}
\label{lem5}
Suppose the $\alpha$ fat-shattering number of $\mathcal{H}$ is 
$d(\alpha)$. Then for all $\x^T\in \mathcal{X}^T$ there exists $\mathcal{F}$ 
(which depends on $\x^T$) that locally $\alpha$-covers $\mathcal{H}$ at $\x^T$ such that:
$$|\mathcal{F}|\le 2\left(T\left(\frac{2}{\alpha}+1\right)^2\right)^{\lceil 
d(\alpha/4)\log\left(\frac{2eT}{\alpha d(\alpha/4)}\right)\rceil}\le 
2^{d(\alpha/4)(\log^2 T+2\log^2(1/\alpha)+O(1))}.$$
\end{lemma}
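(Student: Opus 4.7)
The plan is to reduce the continuous approximation problem to a combinatorial counting problem on a discrete-valued function class, and then invoke the scale-sensitive generalization of the Sauer–Shelah lemma due to Alon et al.

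First, I would discretize the range $[0,1]$ at resolution $\alpha/2$: let $B=\lceil 2/\alpha\rceil+1$ and for each $h\in\mathcal{H}$ define the quantized function $\tilde h(\mathbf{x})=\lfloor 2h(\mathbf{x})/\alpha\rfloor\in\{0,1,\ldots,B-1\}$. If two hypotheses $h,h'\in\mathcal{H}$ induce the same pattern $(\tilde h(\mathbf{x}_1),\ldots,\tilde h(\mathbf{x}_T))=(\tilde h'(\mathbf{x}_1),\ldots,\tilde h'(\mathbf{x}_T))$ on $\mathbf{x}^T$, then $|h(\mathbf{x}_t)-h'(\mathbf{x}_t)|\le \alpha/2<\alpha$ for every $t$. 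Hence choosing one representative from $\mathcal{H}$ for each distinct pattern gives a valid local $\alpha$-cover $\mathcal{F}\subset\mathcal{H}$, and the task reduces to bounding the number of distinct quantized patterns.

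Second, I would relate the fat-shattering dimension of the discretized class to that of $\mathcal{H}$. Let $\tilde{\mathcal{H}}=\{\tilde h:h\in\mathcal{H}\}$; I claim that its pseudo-shattering dimension at integer scale $1$ is at most $d(\alpha/4)$. Indeed, if $\tilde{\mathcal{H}}$ $1$-shatters $\mathbf{x}^d$ with offsets $\tilde s^d$ (integer values), then by lifting the offsets back to real witnesses $s_t=(\tilde s_t+1/2)\cdot\alpha/2$, the original class $\mathcal{H}$ separates each coordinate by at least $\alpha/4$ above/below $s_t$ whenever the corresponding coordinate of $\tilde h$ is $\ge \tilde s_t+1$ or $\le \tilde s_t-1$, witnessing $\alpha/4$-fat shattering of $\mathbf{x}^d$ in $\mathcal{H}$.

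Third, I would apply the Alon–Ben-David–Cesa-Bianchi–Haussler bound: for any class of $\{0,\ldots,B-1\}$-valued functions with $1$-fat (pseudo-)shattering dimension at most $d$, the number of distinct restrictions to $T$ points is at most $2(TB^2)^{\lceil d\log(eTB/d)\rceil}$. Substituting $B\le 2/\alpha+1$ and $d=d(\alpha/4)$ yields the first stated inequality
\[
|\mathcal{F}|\le 2\left(T\left(\tfrac{2}{\alpha}+1\right)^2\right)^{\lceil d(\alpha/4)\log(2eT/(\alpha d(\alpha/4)))\rceil},
\]
and taking logarithms together with the elementary estimate $d(\alpha/4)\log(2eT/(\alpha d(\alpha/4)))\le d(\alpha/4)\log(2eT/\alpha)=O(d(\alpha/4)(\log T+\log(1/\alpha)))$ gives the simpler second bound $2^{d(\alpha/4)(\log^2 T+2\log^2(1/\alpha)+O(1))}$.

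The main obstacle is step three, namely the scale-sensitive Sauer–Shelah inequality counting patterns of finite-valued function classes in terms of their pseudo-dimension; once this is invoked as a black box, the rest is quantization bookkeeping. Since the result is attributed to Alon et al., I would cite it directly rather than re-prove it, and devote the bulk of the exposition to carefully tracking how the scale halves when passing from $\mathcal{H}$ to $\tilde{\mathcal{H}}$ (forcing $d(\alpha/4)$ rather than $d(\alpha)$ in the final expression).
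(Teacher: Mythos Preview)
The paper does not actually prove Lemma~\ref{lem5}; it is stated as a known result attributed to \cite{alon1997scale} and invoked without argument. Your sketch is precisely the standard proof from that reference---quantize the range at resolution $\alpha/2$, observe that the resulting $\{0,\ldots,B-1\}$-valued class has pseudo-dimension at most $d(\alpha/4)$, and apply the combinatorial Sauer--Shelah-type bound of Alon et al.---so there is nothing to compare: you have supplied exactly the argument the paper is citing.
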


We now prove Theorem~\ref{th7}, which we repeat  below.

{\bf Theorem~\ref{th7}}. {\it
Let $\mathcal{H}$ be a class of functions $\mathcal{X}\rightarrow [0,1]$ and  
let $\alpha$-fat shattering number of $\mathcal{H}$ be $d(\alpha)$. 
Then, there exists a stochastic sequential covering set 
$\mathcal{G}$ of $\mathcal{H}$ w.r.t. 
the class of all $i.i.d.$ distributions over $\mathcal{X}^T$ at scale $\alpha$ and 
confidence $\delta$ such that: 
$$
\log|\mathcal{G}|\le 8d(\alpha/32)(\log T\log(1/\alpha))^4+
\log T\log(\log T/\delta)+O(1),$$
where $O(1)$ hides absolute constant that is independent of $\alpha$, $T$, and $\delta$.
}

Our proof of Theorem~\ref{th7} is based on the following key lemma (which is an application of the classical symmetric argument), and an epoch approach similar to~\cite{lazaric2009hybrid}.

\begin{lemma}
\label{lem6}
Let $\mathcal{H}\subset [0,1]^{\mathcal{X}}$ be a class with $\alpha$-fat 
shattering number $d(\alpha)$. Let $S_1,S_2$ be two $i.i.d.$ samples from 
the same distribution over $\mathcal{X}$, both of size $k$. For any $S_i$ with 
$i\in \{1,2\}$, we define a distance for all $h_1,h_2\in\mathcal {H}$ as:
$$
d_{S_i}^{\alpha}(h_1,h_2)=\sum_{s\in S_i}1\{|h_1(s)-h_2(s)|\ge \alpha\}.$$
Then
$$
\mathrm{Pr}_{S_1,S_2}\left[\exists h_1,h_2\in 
\mathcal{H}~s.t.~d_{S_1}^{\alpha}(h_1,h_2)
=0\text{ and }d_{S_2}^{4\alpha}(h_1,h_2)\ge r\right]
\le 2^{\tilde{O}(d(\alpha/8))-r},
$$
where $\tilde{O}(d(\alpha/8))=2d(\alpha/8)(\log^2 k+2\log^2(1/\alpha)+O(1))$.
\end{lemma}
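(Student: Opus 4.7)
The plan is to apply the classical two-sample symmetrization together with the local fat-shattering cover from Lemma~\ref{lem5}. At a high level, I will (i) rewrite $(S_1,S_2)$ as first drawing $2k$ i.i.d.\ points $S$ and then uniformly partitioning $S$ into two ordered halves of size $k$; (ii) reduce the existential quantifier over $\mathcal{H}$ to a quantifier over a finite local $(\alpha/2)$-cover $\mathcal{F}$ of $\mathcal{H}$ on $S$; (iii) bound the per-pair probability using a simple combinatorial splitting estimate; and (iv) apply a union bound over pairs in $\mathcal{F}^2$. The $\alpha/2$ scale is chosen so that after the triangle-inequality losses the event for the $h_i$'s still implies a strictly separated event for the cover functions $f_i$.

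For the execution, condition on the combined sample $S=S_1\cup S_2$ of size $2k$. By Lemma~\ref{lem5} applied at scale $\alpha/2$ on $S$, there exists $\mathcal{F}\subset\mathcal{H}$ that locally $(\alpha/2)$-covers $\mathcal{H}$ on $S$ with $|\mathcal{F}|^2\le 2^{\tilde{O}(d(\alpha/8))}$, matching the exponent in the statement. Suppose $h_1,h_2\in\mathcal{H}$ witness the event; pick $f_i\in\mathcal{F}$ with $|h_i(s)-f_i(s)|\le\alpha/2$ for all $s\in S$. The triangle inequality gives the weaker cover-side event: for every $s\in S_1$, $|f_1(s)-f_2(s)|<2\alpha$, while for at least $r$ points $s\in S_2$, $|f_1(s)-f_2(s)|\ge 3\alpha$. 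Fix such a pair $(f_1,f_2)$ and set $B=\{s\in S:|f_1(s)-f_2(s)|\ge 2\alpha\}$ and $V=\{s\in S:|f_1(s)-f_2(s)|\ge 3\alpha\}\subset B$. The cover-side event forces $B\subset S_2$ and $|V|\ge r$, hence $|B|\ge r$. Under the uniform random split of $S$ into two ordered halves, the probability that every element of a fixed set $B$ lands in $S_2$ is $\binom{2k-|B|}{k}/\binom{2k}{k}\le 2^{-|B|}\le 2^{-r}$. A union bound over at most $|\mathcal{F}|^2$ pairs, conditionally on $S$, yields $\mathrm{Pr}[\text{event}\mid S]\le|\mathcal{F}|^2\cdot 2^{-r}\le 2^{\tilde{O}(d(\alpha/8))-r}$, and since this bound is uniform in $S$, taking expectation over $S$ finishes the proof.

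The main obstacle, and the place where I will have to be careful, is calibrating the cover scale against the gap in the event: an $(\alpha/2)$-cover converts the original $\alpha$ vs.\ $4\alpha$ gap into a $2\alpha$ vs.\ $3\alpha$ gap on the cover side, which is just enough to ensure $V\subset B$ so that the combinatorial ``all of $B$ lies in $S_2$'' bound produces the $2^{-r}$ decay; taking a coarser cover would collapse the two sides of the gap. A secondary point worth verifying is the order of operations: the cover $\mathcal{F}$ depends on $S$, so it must be built \emph{after} conditioning on $S$ but \emph{before} the random partition, which is precisely what the two-stage resampling formulation in step~(i) makes legitimate.
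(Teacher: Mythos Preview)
Your proposal is correct and follows essentially the same approach as the paper: condition on the pooled sample, build a local $(\alpha/2)$-cover $\mathcal{F}$ via Lemma~\ref{lem5}, use the triangle inequality to pass to the $2\alpha$ vs.\ $3\alpha$ cover-side event, bound the per-pair probability by $2^{-r}$ via symmetrization, and union-bound over $|\mathcal{F}|^2$. The only cosmetic difference is the symmetrization device---the paper swaps the $i$th elements of $S_1$ and $S_2$ independently with probability $1/2$, whereas you use a uniform random split of the pooled $2k$ points---both of which give the same $2^{-r}$ decay.
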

\begin{proof}
We use a symmetric argument. We denote by $A$ the event that 
$\exists h_1,h_2\in \mathcal{H}$ such that $d_{S_1}^{\alpha}(h_1,h_2)=0$ but 
$d_{S_2}^{4\alpha}(h_1,h_2)\ge r$. Let $\sigma$ be a random permutation that 
switches the $i$th positions of $S_1,S_2$ w.p. $\frac{1}{2}$ and independently for different $i\in[k]$. By symmetries, it
is sufficient to fix $S_1, S_2$ and upper bound
$\mathrm{Pr}_{\sigma}[A[\sigma(S_1,S_2)]].$
By Lemma~\ref{lem5}, we know that there exists a set $\mathcal{F}$ 
that $\alpha/2$-covers $\mathcal{H}$ on $S_1\cup S_2$ with:
$$|\mathcal{F}|\le 2^{d(\alpha/8)(\log^2k+2\log^2(1/\alpha)+O(1))}.$$
If the event $A$ happens, then there exist $f_1,f_2\in \mathcal{F}$ 
such that (using property of covering):
$$d_{S_1}^{2\alpha}(f_1,f_2)=0\text{ but }d_{S_2}^{3\alpha}(f_1,f_2)\ge r.$$
Clearly, in order for $A$ to happen, any position $s\in S_2$ such that 
$|f_1(s)-f_2(s)|\ge 3\alpha$ must not be switched to $S_1$ under $\sigma$, 
which happens with probability upper bounded by $2^{-r}$. 
Applying union bound over all pairs of $\mathcal{F}$, we have
$$\mathrm{Pr}_{S_1,S_2}[A]\le 2^{2d(\alpha/8)
(\log^2 k+2\log^2(1/\alpha)+O(1))-r}
$$
which completes the proof.
\end{proof}

\begin{proof}[Proof of Theorem~\ref{th7}]
We partition the time horizon into epochs, where each epoch $s$ ranges from time step $2^{s-1},\cdots,2^s-1$. For each epoch $s$, we will construct a covering set $\mathcal{G}_s$. The global covering set $\mathcal{G}$ will be constructed by considering all the combinations of functions in $\mathcal{G}_s$ with $s\in \{1,\cdots,\lceil\log T\rceil\}$.

For any epoch $s$, we construct $\mathcal{G}_s$ as follows. 
Let $\mathcal{F}\subset \mathcal{H}$ be the local $\alpha$-covering set on the 
samples $\x_1,\cdots,\x_{2^{s-1}-1}$. By Lemma~\ref{lem5}, we have
$$|\mathcal{F}|\le 2^{d(\alpha/4)(s^2+2\log^2(1/\alpha))+O(1)}.$$ 
Let 
$$r_s=2d(\alpha/8)(s^2+2\log^2(1/\alpha)+O(1))+\log(\log T/\delta). 
$$
By Lemma~\ref{lem6} w.p. $\ge 1-\frac{\delta}{\log T}$ for any 
$h\in \mathcal{H}$ there exists $f\in \mathcal{F}$ such that 
$f$ $4\alpha$-covers $h$ on
samples $\x_{2^{s-1}},\cdots,\x_{2^s-1}$ except $r_s$ positions 
(the $f\in \mathcal{F}$ that $\alpha$-covers $h$ on $\x^{2^{s-1}-1}$ is the 
desired function since $\mathcal{F}$ is a local $\alpha$-covering). 
Let $K$ be a discretization of interval $[0,1]$ such that for any $a\in [0,1]$ there exists $k\in K$ so that $|a-k|\le 4\alpha$. We have $|K|\le \lceil\frac{1}{8\alpha}\rceil$. Now, for any 
$I\subset \{2^{s-1},\cdots,2^{s}-1\}$ with $|I|\le r_s$,
$\{k_i\}_{i\in I}\in K^{|I|}$ and $f\in \mathcal{F}$, 
we construct a function $f_{I, k^{|I|}}$ as follows:
\begin{itemize}
    \item[1.] If $t\in I$, we set $f_{I, k^{|I|}}(\x_t)=k_t$;
    \item[2.] If $t\not\in I$, we set $f_{I,k^{|I|}}(\x_t)=f(\x_t)$.
\end{itemize}
The class $\mathcal{G}_s$ is defined as the class of all 
such $f_{I,k^{|I|}}$. By definition of $r_s$ and by Lemma~\ref{lem6}, we have 
w.p. $\ge 1-\frac{\delta}{\log T}$, for all $h\in \mathcal{H}$ 
there exists $g\in \mathcal{G}_s$ such that for all $t\in \{2^{s-1},\cdots,2^s-1\}$
we have:
$$|g(\x_t)-h(\x_t)|\le 4\alpha.$$
We now observe that:
$$|\mathcal{G}_s|\le |\mathcal{F}|\cdot (2^s|K|)^{r_s+1}\le 
2^{3d(\alpha/8)((s\log(1/\alpha))^3)+
\log(\log T/\delta)+O(1)}.$$

We now construct the global covering set $\mathcal{G}$ as follows. For any 
index $(j_1,\cdots,j_{\lceil\log T\rceil})$ with $j_s\in [|\mathcal{G}_s|]$, we define a 
function $g$ such that it uses the $j_s$ function in $\mathcal{G}_s$ to make 
prediction during epoch $s$. By union bound on the epochs, we have w.p. 
$\ge 1-\delta$ for any $h\in \mathcal{H}$, 
there exists $g$ such that:
$$\forall t\in [T],~|h(\x_t)-g(\x^t)|\le 4\alpha.$$
This implies that $\mathcal{G}$ is a $4\alpha$ global sequential covering set of 
$\mathcal{H}$. Thus
$$|\mathcal{G}|=\prod_{s=1}^{\lceil\log T\rceil}|\mathcal{G}_s|
\le 2^{4d(\alpha/8)(\log T\log(1/\alpha))^4+\log T \log(\log T/\delta)+O(1)}.
$$
The result follows by taking $\alpha$ in the above expression to be $\alpha/4$.
\end{proof}

\section{Proof of Theorem~\ref{th5}}
\label{app-th5}
We will construct a covering set $\mathcal{G}$ directly without relying on 
the error pattern counting as in Lemma~\ref{lem2}. This is the key to removing the extra $\log T$ factor. We will introduce a set $K$ to index the functions in $\mathcal{G}$, we assume that $K$ is fixed and $|K|=2^M$ for some $M$ to be chosen later. For any $k\in K$, we will construct a \emph{sequential function} $g_k$ as follows. 

Let $\textbf{x}^T$ be a realization of the sample from an $i.i.d.$ source. The realization tree $\mathcal{T}$ of $\mathcal{H}$ on $\textbf{x}^T$ is a leveled binary tree of depth $T+1$, with each node at level $t$ being labeled  $\x_t$ (where level $1$ has only the root $v_1$), each left edge being labeled $0$ and each right edge being labeled $1$, such that any node $v_t\in \mathcal{T}$ at level $t$ has left (respectively right) child if and only if there exist $h\in \mathcal{H}$ such that $h(\x_t)=0$ (respectively $h(\x_t)=1$) and $h(\x_i)=L(v_i\rightarrow v_{i+1})$ for all $i\le t-1$, where $v_1\rightarrow v_2\rightarrow \cdots \rightarrow v_t=v$ is the path from root $v_1$ to $v$ and $L$ is the edge label function. Note that different realizations of $\x^T$ will result in different realization trees.

We now assign values of the functions $g_k$ with $k\in K$ using the following procedure. For any node $v$ in the realization tree $\mathcal{T}$, we will associate a set $\mathcal{K}(v)\subset K$ using the following rule (starting from root):
\begin{itemize}
    \item[1.] If $v$ is the root, then $\mathcal{K}(v)=K$;
    \item[2.] If $v$ has only one child $u$, then $\mathcal{K}(u)=\mathcal{K}(v)$;
    \item[3.] If $v$ has two children $u_1,u_2$, we assign the sets to $u_1,u_2$ being an arbitrary partition of $\mathcal{K}(v)$ of \emph{equal} sizes, i.e., $|\mathcal{K}(u_1)|=|\mathcal{K}(u_2)|$, $\mathcal{K}(u_1)\cap \mathcal{K}(u_2)=\emptyset$ and $\mathcal{K}(u_1)\cup \mathcal{K}(u_2)=\mathcal{K}(v)$.
\end{itemize}
Clearly, the value $\mathcal{K}(v)$ for any node $v$ at level $t$ can be determined with only the realization of $\x^t$ and the values of $\mathcal{K}$ of all nodes at level $t$ form a partition of $K$. The procedure $\mathcal{K}$ fails if there exists some node $v$ with two children such that $|\mathcal{K}(v)|<2$. Suppose the procedure $\mathcal{K}$ does not fail. We have for any $k\in K$, there exists a unique path $v_1\rightarrow v_2\rightarrow\cdots\rightarrow v_{T+1}$ with $v_1$ being the root, such that for all $t\le T+1$ we have $k\in\mathcal{K}(v_t)$. For any such $k$, we assign the value of $g_k$ on $\x^t$ as:
$$g_k(\x^t)=L(v_t\rightarrow v_{t+1}),$$
where $L$ is the edge label function as discussed above. If the procedure $\mathcal{K}$ fails at some node $v_t$, we assign the value of $g_k(\x^j)$ arbitrarily for $j\ge t$.

By definition of the realization tree, for any $h\in \mathcal{H}$ there must be a unique path $v_1\rightarrow\cdots\rightarrow v_{T+1}$, with $v_1$ being root such that $h(\x_t)=L(v_t\rightarrow v_{t+1})$ for all $t$. Therefore, if the procedure $\mathcal{K}$ does not fail, then for $k\in \mathcal{K}(v_{T+1})$, we have $h(\x_t)=g_k(\x^t)$ for all $t\le T$ by definition of $g_k$. We now show that by setting $M=\lceil 5\textbf{Star}(\mathcal{H})+\log(1/\delta)\rceil$, w.p. $\ge 1-\delta$ over $\x^T$, the procedure $\mathcal{K}$ will not fail, thus proving that the class $\mathcal{G}=\{g_k:k\in K\}$ is a stochastic sequential covering of $\mathcal{H}$ with confidence $\delta$. To see this, we note that the procedure $\mathcal{K}$ fails at node $v_t$ at level $t$ if and only if there are $\ge M+1$ nodes with two children in the (unique) path $v_1\rightarrow \cdots\rightarrow v_t$, where $v_1$ is root, since only rule $3$ will reduce the size of value of $\mathcal{K}$ by $1/2$. Assume now the procedure $\mathcal{K}$ fails at node $v_t$. Let $h\in \mathcal{H}$ be a function such that $h(\x_i)=L(v_i\rightarrow v_{i+1})$ for all $i\le t$, which must exist by definition of realization tree. Since any node $v_j$ in the path $v_1\rightarrow\cdots\rightarrow v_t$ with two children implies $\x^{j-1}$ \emph{does not} certify $\x_j$ under $h$, we have that there are at least $M+1$ positions $j$ (with $j\le t$) such that $\x^{j-1}$ does not certify $\x_j$ under $h$. By Lemma~\ref{lem4} and selection of $M$, this happens with probability $\le \delta$. This completes the proof.

\section{Proof of Theorem~\ref{th6} and more examples}
\label{app-th6}
\begin{proof}[Proof of Theorem~\ref{th6}]
The proof will incorporate the SOA argument as in~\citep{ben2009agnostic} 
and the result from Theorem~\ref{th5}. For notational convenience, we denote $d=\mathsf{SL}(s)+1$. For any $I\subset [T]$ with $|I|\le d$, 
we will construct a set $\mathcal{G}_{I}$. Let $\Phi$ be the SOA predictor (similar to~\cite[Algorithm 1]{ben2009agnostic}) that predicts the label for which the remaining consistent subclass has maximum Star-Littlestone dimension at star scale $s$, if both subclasses have $\mathsf{SL}$ dimension $0$ we predict the label for which the remaining consistent subclass has maximum Star number (and break ties arbitrarily). We now construct functions in $\mathcal{G}_I$ as follows. The predictions of functions in $\mathcal{G}_I$ are partitioned into 2 phases (start with phase 1). At phase 1, all the functions in $\mathcal{G}_I$ use the same prediction rule as in Lemma~\ref{lem2}, that is, if we are at time step $t\in I$, we predict using $1-\Phi$, else we use $\Phi$ to predict, where $\Phi$ is the SOA prediction rule described above. We enter phase 2 if the remaining consistent class has Star number upper bounded by $s$; we 
then construct the prediction functions in $\mathcal{G}_I$ as in Theorem~\ref{th5} 
with $\textbf{Star}(\mathcal{H})=s$, confidence 
$\delta/T^{d+1}$ and $|\mathcal{G}_I|\le e^{5s\log T+\log(T^{d+1}/\delta)}$. The covering class 
$\mathcal{G}$ is defined to be:
$$\mathcal{G}=\bigcup_{I\subset [T],~|I|\le d}\mathcal{G}_I.$$
By Theorem~\ref{th5} with $\textbf{Star}(\mathcal{H})=s$ and 
$\delta=\delta/T^{d+1}$ and computing the number of $I$s, we have
$$
|\mathcal{G}|\le T^{d+1}e^{5s\log T+\log(T^{d+1}/\delta)}\le 
e^{O(\max\{d,s\}\log T+\log(1/\delta))}.
$$

We now show that $\mathcal{G}$ is indeed a stochastic sequential 
covering of $\mathcal{H}$ with confidence $\delta$. Let $\mathcal{H}_I$ be the (\emph{random}) subclass of functions in $\mathcal{H}$ that are consistent with $\Phi$ with error pattern $I$ before entering phase $2$~\footnote{Here, phase $1$ and $2$ corresponds to that the functions in $\mathcal{H}$ consistent with $h$ on current sample has Star number $> s$ and $\le s$, respectively.} (it is possible that $h$ remains on phase $1$ until time $T$). Note that all functions in $\mathcal{H}_I$ agree on samples at phase 1. 
Note also that, with probability $1$ we have 
$\mathcal{H}=\bigcup_{I\subset [T],|I|\le d}\mathcal{H}_I$. To see this, we note that if $h$ disagreed with the SOA then the remaining consistent class has $\mathsf{SL}(s)$ decreased by at least $1$ (similar to the argument as in~\cite[Lemma 10]{ben2009agnostic}) or has Star number $\le s$ if the current consistent class has $\mathsf{SL}(s)=0$. This implies that any $h\in \mathcal{H}$ can be disagreed with SOA at most $d$ times before entering phase $2$, which must be in some $\mathcal{H}_I$ with $|I|\le d$. Now, for any $I$ with $|I|\le d$ we need to show that:
$$\mathrm{Pr}[\mathcal{G}_I\text{ covers }\mathcal{H}_I]
\ge 1-\frac{\delta}{T^{d+1}}.
$$
Note that the main difficulty here is that $\mathcal{H}_I$ 
is a \emph{random} subset. 
We show that conditioning on any realization of $\mathcal{H}_I$, 
the above inequality holds (the inequality will then hold by 
law of total probability). 
This follows from Theorem~\ref{th5} by noticing that the samples in phase 2 
are still $i.i.d.$ and independent of samples in phase $1$, and $\mathcal{G}_I$ 
trivially covers $\mathcal{H}_I$ in phase $1$ by definition of 
$\mathcal{G}_I$ and $\mathcal{H}_I$. The theorem will now follow by a union bound on all the $I$s.
\end{proof}

We provide the following useful proposition that allows us to prove $e^{O(\log T)}$ type bounds for function classes that are generated by composition of simple classes with $e^{O(\log T)}$ covering.

\begin{proposition}
\label{propap1}
    Let $\mathcal{H},\mathcal{H}_1,\cdots, \mathcal{H}_m$
be binary valued classes over the same domain $\mathcal{X}$ such
that there exists function $f:\{0,1\}^m\rightarrow\{0,1\}$ so that for all
$h\in \mathcal{H}$ there exist $h_i\in \mathcal{H}_i$ such that
$\forall \x\in \mathcal{X},~h(\x)=f(h_1(\x),\cdots,h_2(\x))$.
If $\forall i\in [m],~\mathcal{H}_i$ admit a statistical sequential covering set
$\mathcal{G}_i$ at confidence $\delta/m$ w.r.t. the same class $\mathcal{P}$, then $\mathcal{H}$ admits a
statistical sequential covering set $\mathcal{G}$ w.r.t. $\mathcal{P}$ at confidence $\delta$ such that
$$|\mathcal{G}|\le \prod_{i=1}^m|\mathcal{G}_i|.$$
\end{proposition}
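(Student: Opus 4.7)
The plan is to take $\mathcal{G}$ to consist of all compositions through $f$ of sequential functions drawn from the $\mathcal{G}_i$'s. Concretely, for each tuple $(g_1,\ldots,g_m)\in \mathcal{G}_1\times\cdots\times\mathcal{G}_m$ I would define $g:\mathcal{X}^*\to\{0,1\}$ by
\[
g(\x^t)=f(g_1(\x^t),\ldots,g_m(\x^t)),
\]
and let $\mathcal{G}$ be the set of all such $g$. The cardinality bound $|\mathcal{G}|\le \prod_{i=1}^m|\mathcal{G}_i|$ is immediate from this construction.

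The rest is a union bound. Fix any $\mu\in\mathcal{P}$. By Definition~\ref{def-gcover} applied to each $\mathcal{H}_i$ at confidence $\delta/m$, if $A_i$ denotes the bad event that $\mathcal{G}_i$ fails to sequentially cover some $h_i\in\mathcal{H}_i$ on the realized $\x^T$, then $\mathrm{Pr}_{\x^T\sim\mu}[A_i]\le \delta/m$. Hence
\[
\mathrm{Pr}_{\x^T\sim\mu}\!\left[\bigcup_{i=1}^m A_i\right]\le \delta.
\]
Condition on the complement, i.e., on the event that for every $i\in[m]$ and every $h_i\in\mathcal{H}_i$ there exists $g_i\in\mathcal{G}_i$ with $h_i(\x_t)=g_i(\x^t)$ for all $t\in[T]$.

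Now I would verify the covering property for $\mathcal{H}$ on this good event. Given any $h\in\mathcal{H}$, use the hypothesis to decompose $h(\x)=f(h_1(\x),\ldots,h_m(\x))$ for some $h_i\in\mathcal{H}_i$. On the good event, pick $g_i\in\mathcal{G}_i$ agreeing with $h_i$ on every $\x_t$; the corresponding composition $g=f(g_1,\ldots,g_m)\in\mathcal{G}$ then satisfies
\[
g(\x^t)=f(g_1(\x^t),\ldots,g_m(\x^t))=f(h_1(\x_t),\ldots,h_m(\x_t))=h(\x_t)
\]
for every $t\in[T]$. This shows $\mathcal{G}$ is a stochastic global sequential cover of $\mathcal{H}$ at confidence $\delta$, completing the argument.

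The only subtle point, and the one I would be careful about in writing, is the order of quantifiers: the covering statement requires that on a single good sample $\x^T$, simultaneously every $h_i\in\mathcal{H}_i$ is covered by some $g_i\in\mathcal{G}_i$. This is exactly what Definition~\ref{def-gcover} guarantees for each individual class (the ``$\exists g$'' is inside the probability), which is why the union bound over the $m$ classes cleanly gives a single good event on which the composed cover works for all $h\in\mathcal{H}$ at once. No loss function, no sequential algorithm, and no further complexity measures enter the argument — it is purely combinatorial given the covering guarantees assumed.
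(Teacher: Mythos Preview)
Your proposal is correct and follows essentially the same approach as the paper: the paper also defines $\mathcal{G}$ as all compositions $g(\x^t)=f(g_{j_1}(\x^t),\ldots,g_{j_m}(\x^t))$ over tuples from $\mathcal{G}_1\times\cdots\times\mathcal{G}_m$, then uses the same union bound over the $m$ covering events and verifies the composed function covers $h$. Your treatment of the quantifier order is in fact more explicit than the paper's own proof.
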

\begin{proof}
    For any tuple of indexes $(j_1,\cdots,j_m)$ with $j_i\in [|\mathcal{G}_i|]$, we construct a function $g$ such that:
    $$g(\x^t)=f(g_{j_1}(\x^t),\cdots,g_{j_m}(\x^t)),$$
    where $g_{j_i}$ is the $j_i$th function in $\mathcal{G}_i$. The covering set $\mathcal{G}$ is defined to be the class containing of all such functions $g$. For any function $h\in \mathcal{H}$, there exist $h_1,\cdots,h_m$ such that for all $\x\in \mathcal{X}$, $h(\x)=f(h_1(\x),\cdots h_m(\x))$. By union bound and definition of stochastic sequential covering of $\mathcal{G}_i$, w.p. $\ge \delta$ over $\x^T$, for all $i\in [m]$ there exist $g_{j_i}\in \mathcal{G}_i$ such that $\forall t\in [T],~g_{j_i}(\x^t)=h_i(\x_t)$. One can verify that the function $g$ corresponding to $(j_1,\cdots,j_m)$ is the desired function covers $h$ on $\x^T$.
\end{proof}
\begin{example}
    Let 
$$\mathcal{H}=\{h_B(\x)=1\{\x\in B\}:B=\prod_{i=1}^d[a_i,b_i]\subset 
\mathbb{R}^d\}
$$
be the class of indicators of rectangular cuboids in $\mathbb{R}^d$.
Note that $\mathcal{H}$ has infinite Star-Littlestone dimension for any finite star scale when $d\ge 2$. However, $\mathcal{H}$ can be expressed as a function in terms of indicators of
intervals. Applying the Proposition~\ref{propap1} and Theorem~\ref{th6}
we obtain a covering set $\mathcal{G}$ of $\mathcal{H}$ with $\log |\mathcal{G}|\le
O(d\log T+d\log(d/\delta))$. This implies a regret bound of mixable losses
(including logarithmic loss) of order $O(d\log T+d\log d)$.
\end{example}

\section{Proof of Theorem~\ref{th9}}
\label{app-th9}

We choose $\mathcal{H}$ to be the class supported over $[T/\log^2 T]\overset{\text{def}}{=}\{1,2,\cdots,\lceil T/\log^2T \rceil\}$ with 
functions that take value $1$ on at most $d$ positions and zeros otherwise. 
It is easy to see that this class has VC-dimension $d$ and 
$|\mathcal{H}|\ge \binom{\lceil T/\log^2 T \rceil}{d}\ge \left(\frac{T}{d\log^2 T}\right)^d$. 
Let $\nu$ be uniform distribution over $[T/\log^2 T]$. We observe that:
\begin{align*}
\bar{r}_T(\mathcal{H},\mathcal{P})&=\inf_{\phi^T}\sup_{\mu\in \mathcal{P}}
\sup_{h}\mathbb{E}_{\x^T\sim \mu}\left[\sup_{y^T}\sum_{t=1}^T\ell(\hat{y}_t,y_t)-
\ell(h(\x_t),y_t)\right]\\
&\ge \inf_{\phi^T}\sup_{h\in \mathcal{H}}\mathbb{E}_{\x^T\sim \nu^T}
\left[\sup_{y^T}\sum_{t=1}^T\ell(\hat{y}_t,y_t)-\ell(h(\x_t),y_t)\right]\\
&\overset{(a)}{\ge}\inf_{\phi^T}\sup_{h\in \mathcal{H}}\mathbb{E}_{\x^T\sim \nu^T}
\left[\sum_{t=1}^T\ell(\hat{y}_t,h(\x_t))\right]-O(1)\\
&\ge \mathrm{Pr}[[T/\log^2 T]\subset \x^T]\inf_{\phi^T}\sup_{h\in \mathcal{H}}
\mathbb{E}_{\x^T\sim \nu^T}\left[\sum_{t=1}^T\ell(\hat{y}_t,h(\x_t))
\mid [T/\log^2 T] \subset \x^T\right]-O(1)\\
&\overset{(b)}{\ge} (1-o(1))\inf_{\phi^T}\mathbb{E}_{h\sim \mathcal{H}}
\mathbb{E}_{\x^T\sim \nu^T}\left[\sum_{t=1}^T\ell(\hat{y}_t,h(\x_t))
\mid [T/\log^2 T] \subset \x^T\right]-O(1)\\
&\overset{(c)}{\ge}(1-o(1))\mathbb{E}_{\x^T\sim \nu^T}\left[\inf_{\phi}
\mathbb{E}_{h\sim \mathcal{H}}\left[\sum_{t=1}^T\ell(\hat{y}_t,h(\x_t))
\right]\mid [T/\log^2 T]
\subset \x^T\right]-O(1) \\
&\overset{(d)}{\ge} (1-o(1))\inf_{\x^T\supset [T/\log^2T]}\inf_{\phi}
\mathbb{E}_{h\sim \mathcal{H}}\left[\sum_{t=1}^T\ell(\hat{y}_t,
h(\x_t))\right]-O(1)\\
&\overset{(e)}{\ge} (1-o(1))\inf_{\x^T\supset [T/\log^2T]}\inf_{Q}
\mathbb{E}_{y^T\sim P}[\log(1/Q(y^T))]-O(1)\\
&\overset{(f)}{\ge }(1-o(1)) \log|\mathcal{H}|-O(1)\ge (1-o(1))d\log(T/d)
\end{align*}
where $(a)$ follows since $y_t$ must equal to $h(\x_t)$ for all $t$ to maximize the 
regret~\footnote{This holds only for $h(\x_t)\in \{0,1\}$, while our upper bounds as in Section~\ref{sec-binary} hold for any \emph{binary} values in $[0,1]$.} (otherwise, the regret will be negative infinite since $h(\x_t)\in\{0,1\}$ 
and we may assume w.l.o.g. $\hat{y}_t\in [1/T,1-1/T]$ which only losses $O(1)$ on 
the optimal regret), $(b)$ follows by coupon collector 
and $\sup\ge \mathbb{E}$ where $h$ is choosing uniformly from 
$\mathcal{H}$, $(c)$ follows by $\inf\mathbb{E}\ge \mathbb{E}\inf$, 
$(d)$ follows by $\mathbb{E}\ge \inf$, $(e)$ follows by 
observing that $\{h(\x_t)\}_{t=1}^{T}$ 
is uniformly distributed over the label vectors of 
$\mathcal{H}$ restricted on the 
given $\x^T$ (we denote the distribution to be $P$) and 
$\sum_{t=1}^{T}\ell(\hat{y}_t,y_t)$ can be interpreted as minus $\log$ of some 
probability assignment $Q$ over $\{0,1\}^{T}$, $(f)$ follows by 
the fact that the $\inf_Q$ is achieved when $Q=P$ 
(by positivity of KL divergence) and the expectation expression 
equals the entropy of $P$ which further equals to 
$\log|\mathcal{H}|$ since $h$ is uniformly 
distributed over $\mathcal{H}$ and are distinct when restricted on 
$\x^T$ (since $[T/\log^2T]\subset \x^T$).

\section{Distributions with sufficient symmetry}
\label{app-symmetry}

One may observe that the main technique we used in Section~\ref{sec-binary} is to 
exploit the symmetry of $i.i.d.$ samples. In this appendix, we will show how 
such techniques can be generalized to distributions that have sufficient 
symmetry on the sample. We say a distribution $\mu$ over $\mathcal{X}^T$
is a product distribution with type $k$ if there exist distributions 
$\nu_1,\cdots,\nu_k$ over $\mathcal{X}$ such that
 $$\mu=\prod_{t=1}^T\nu_t,$$
where $\nu_t\in \{\nu_1,\cdots,\nu_k\}$.

We have the following analogous result of Theorem~\ref{th4}:

\begin{theorem}
\label{th10}
Let $\mathcal{H}$ be a binary valued class with finite VC-dimension, 
and $\mathcal{P}$ be the class of all product distributions over $\mathcal{X}^T$ with type $k$. 
Then there exists a global sequential covering set $\mathcal{G}$ of $\mathcal{H}$ 
at scale $\alpha=0$ and confidence $\delta$ such that:
$$\log |\mathcal{G}|\le O(k\VC (\mathcal{H})\log^2 T+\log T\log(1/\delta)).$$
Moreover, the linear dependency of $k$ can not be improved.
\end{theorem}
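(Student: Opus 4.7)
The plan is to extend the proof of Theorem~\ref{th4} by running the same global 1-inclusion graph predictor $\Phi$ on the full sequence and showing that its cumulative error is $O(k\VC(\mathcal{H})\log T+\log(1/\delta))$ with high probability under every product distribution of type $k$. Fix $\mu\in\mathcal{P}$ with block structure $(B_1,\ldots,B_k)$, where $B_j=\{t:\nu_t=\nu_j\}$. Since $\mu$ is invariant under \emph{block-preserving} permutations $\sigma$ (those with $\sigma(B_j)=B_j$ for all $j$), I will condition on the multiset $\x^T$ and analyze the error under a uniformly random block-preserving $\sigma$, mirroring the symmetry step used in Theorem~\ref{th4}.

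The first ingredient is a per-round error estimate. By Theorem~\ref{th3} and the permutation invariance of $\Phi$, conditional on the set of first-$t$ samples, $\sigma(t)$ is uniform over the block-$\pi(t)$ positions among those $t$ indices, where $\pi(t)$ denotes the block of position $t$. Since the orientability bound of the 1-inclusion graph gives a total of at most $\VC(\mathcal{H})$ errors summed over any leave-one-out of the whole set, restricting the uniform choice to a subset of size $n_{\pi(t)}(t):=|B_j\cap[t]|$ yields expected step-$t$ error at most $\VC(\mathcal{H})/n_{\pi(t)}(t)$. Summing over $t$ produces expected cumulative error at most $\sum_{j=1}^{k}\VC(\mathcal{H})H_{m_j}\le k\VC(\mathcal{H})\log T$, where $H_m$ is the $m$-th harmonic number and $m_j=|B_j|$.

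Next I would adapt the reverse-martingale argument of Lemma~\ref{lem1} to block-preserving $\sigma$. Revealing $\x_{\sigma(T)},\x_{\sigma(T-1)},\ldots$ in reverse order and using block-preservation, at reverse step $t$ the index $\sigma(T-t+1)$ is uniform over the remaining block-$\pi(T-t+1)$ positions, so the conditional expected error is bounded by $\VC(\mathcal{H})/\tilde n_t$. Both the sum of these conditional means and the corresponding predictable variance are at most $k\VC(\mathcal{H})\log T$, since the within-block contributions each form a harmonic sum $\VC(\mathcal{H})(1+1/2+\cdots+1/m_j)$. Bernstein's inequality for martingale differences then gives cumulative error $O(k\VC(\mathcal{H})\log T+\log(1/\delta))$ with probability $\ge 1-\delta$. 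Sauer's lemma (with the substitution $\delta\leftarrow\delta/T^{\VC(\mathcal{H})}$) extends this uniformly over all $h\in\mathcal{H}$, and feeding the resulting error bound into Lemma~\ref{lem2} produces a stochastic cover $\mathcal{G}$ with $\log|\mathcal{G}|\le O(k\VC(\mathcal{H})\log^2 T+\log T\log(1/\delta))$ as claimed.

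For the matching lower bound that linear $k$ dependence is unavoidable, I would pick $\mathcal{H}$ and marginals $\nu_1,\ldots,\nu_k$ with disjoint supports so that the type-$k$ distribution effectively decouples into $k$ independent i.i.d.\ sub-problems on $\mathcal{H}$. Any covering set must cover each sub-problem simultaneously, so its cardinality factorizes into a product of $k$ per-block covers, each of size at least $e^{\Omega(\VC(\mathcal{H})\log T)}$ by the fixed-design lower bound translated through Theorem~\ref{th8}. The principal technical obstacle is the adaptation of Lemma~\ref{lem1}: the per-round conditional expectation is $\VC(\mathcal{H})/\tilde n_t$ rather than $C/(T-t+1)$, so one must carefully interleave $k$ reverse harmonic sums when verifying the Bernstein variance condition, which is the only substantive deviation from the i.i.d.\ proof.
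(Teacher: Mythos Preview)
Your upper-bound argument is essentially the paper's own: both run the global $1$-inclusion predictor $\Phi$, exploit invariance of $\mu$ under block-preserving permutations, bound the per-step error by $\VC(\mathcal{H})/\tilde n_t$ via the leave-one-out orientability of the $1$-inclusion graph (Theorem~\ref{th3}), sum the resulting $k$ harmonic series, boost to high probability with the reverse-martingale Bernstein step of Lemma~\ref{lem1}, apply Sauer's lemma for uniformity in $h$, and finish with the error-pattern construction of Lemma~\ref{lem2}. You even spell out the one genuinely new step (the interleaved harmonic sums in the Bernstein variance term) more carefully than the paper's sketch does.

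The lower-bound portion, however, diverges from the paper and has a gap. You propose disjoint-support marginals so that the problem decouples into $k$ independent i.i.d.\ sub-problems, and then invoke Theorem~\ref{th8} to claim each sub-cover has size $e^{\Omega(\VC(\mathcal{H})\log T)}$. But Theorem~\ref{th8} lower-bounds the \emph{regret} $\tilde r_T$, not the stochastic sequential covering number; there is no reverse implication from regret to covering size in the paper's framework (only the forward direction, via Theorems~\ref{th1}--\ref{th2}). So the step ``each of size at least $e^{\Omega(\VC(\mathcal{H})\log T)}$ by the fixed-design lower bound translated through Theorem~\ref{th8}'' does not go through as stated.

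The paper instead proves only what the theorem literally asserts---that the \emph{linear} dependence on $k$ is unavoidable---and does so by a direct halving argument: take $\mathcal{H}$ to be one-dimensional thresholds, let each $\nu_i$ be a singleton (Dirac) distribution, and place one singleton at each of the first $k$ time steps. Since the singletons can be chosen after $\mathcal{G}$ is fixed, an adaptive binary-search choice of the $k$ points forces $|\mathcal{G}|\ge 2^k$, i.e.\ $\log|\mathcal{G}|\ge k$. This is weaker than the $\Omega(k\VC(\mathcal{H})\log T)$ you were aiming for, but it is all that is needed for the statement, and it avoids the regret-to-covering inversion your proposal requires.
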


 \begin{proof}[Sketch of Proof]
For any product distribution $\mu$ of type $k$, we denote by $n_i$
the number of appearances of $\nu_i$ in the product of $\mu$. Let $\Phi$ be the
$1$-inclusion graph algorithm. For all $\x^T$, we consider the random permutation
$\sigma$ that permutes only the positions of 
$\x^T$ to positions that correspond to the same $\nu_i$. 
Applying Theorem~\ref{th3} (bounding the prediction error for each group), we know that for all $h\in\mathcal{H}$ the
expected number of errors of $\Phi$ on $h$ is upper bounded by:
$$\sum_{i=1}^k\VC (\mathcal{H})\log n_i\le k\VC (\mathcal{H})\log(T/k),$$
where the inequality follows by AM-GM inequality by noticing 
that $\sum_{i=1}^kn_i=T$.
The theorem follows by the same argument as in 
Lemma~\ref{lem1} to boost the expected
error bound to high probability error bound and applying Lemma~\ref{lem2}. 
The final part follows by considering
the linear threshold functions $1\{x\ge a\}$ and 
choosing $\nu_i$ to be singleton
distribution and selecting adversarially at the fist $k$ steps.
This ensures a lower bound of the covering set of size $2^k$.
\end{proof}
\begin{remark}
    Note that similar generalization can also be established for the results in Theorem~\ref{th5} and~\ref{th6}. We leave the investigation of more interesting distribution classes to future work.
\end{remark}

\section{Distribution non-blind regrets}
\label{app-nonblind}

We now discuss some results for the 
\emph{distribution non-blind} case. 
Let $\nu$ be an arbitrary distribution over $\mathcal{X}$ and 
$\mathcal{H}\subset \{0,1\}^{\mathcal{X}}$ be a binary valued class with 
finite VC-dimension. By classic result~\citep{haussler1995sphere} we know that for any $\epsilon>0$, there exists a class 
$\mathcal{F}_{\epsilon}\subset \mathcal{H}$ such that: 
$$
\forall h\in \mathcal{H}~\exists f\in \mathcal{F}_{\epsilon},~\mathrm{Pr}_{\x\sim 
\nu}[h(\x)\not=f(\x)]\le\epsilon,
$$
and
$$
|\mathcal{F}_{\epsilon}|\le \left(\frac{c}{\epsilon}\right)^{2\VC (\mathcal{H})},
$$
where $c>1$ is some absolute constant. A natural question is whether we can use 
$\mathcal{F}_{\epsilon}$ as a covering set and derive some meaningful bound 
for $\tilde{r}_T$. For clarity of presentation, we focus our 
attention on the logarithmic loss $\ell$. Suppose we apply the (truncated) Bayesian algorithm~\citep{wu2022precise} with truncation parameter $\frac{1}{T}$ over $\mathcal{F}_{\epsilon}$, we will achieve the following regret bound:
$$\tilde{r}_T(\mathcal{H},\{\nu^T\})=2\VC (\mathcal{H})\log(c/\epsilon)+\log T \cdot \mathbb{E}_{\x^T
\sim \nu^T}\left[\sup_{h\in \mathcal{H}}\inf_{f\in \mathcal{F}_{\epsilon}}
\sum_{t=1}^T1\{h(\x_t)\not=f(\x)\}\right].
$$

Clearly, the main difficulty is to estimate the quantity:
$$
\mathbb{E}_{\x^T\sim \nu^T}\left[\sup_{h\in \mathcal{H}}
\inf_{f\in \mathcal{F}_{\epsilon}}\sum_{t=1}^T1\{h(\x_t)\not=f(\x)\}\right].
$$
Note that, by the definition of $\mathcal{F}_{\epsilon}$, we have:
$$
\sup_{h\in \mathcal{H}}\inf_{f\in \mathcal{F}_{\epsilon}}~\mathbb{E}_{\x^T\sim 
\nu^T} \left[\sum_{t=1}^T1\{h(\x_t)\not=f(\x_t)\}\right]\le \epsilon T.$$

However, as we discussed in Section~\ref{sec-formulation}, 
moving the expectation outside of 
$\inf\sup$ is non-trivial. Nevertheless, we can prove the 
following somewhat surprising theorem~\footnote{We note that~\cite{haghtalab2022smoothed} establishes similar result for \emph{smooth adversary} samples but with a $d\log T$ bound.} that bounds the approximation error \emph{independent} of $T$ using a refined "double sampling trick":

\begin{theorem}
\label{th12}
Let $\mathcal{H}\subset \{0,1\}^{\mathcal{X}}$ be a binary valued class with finite VC-dimension, $\mathcal{F}_{\epsilon}$ is an $\epsilon$-cover of 
$\mathcal{H}$ under distribution $\nu$ over $\mathcal{X}$ as defined above. 
If $\epsilon=\frac{1}{T^2}$, then
$$
\mathbb{E}_{\x^T\sim \nu^T}\left[\sup_{h\in \mathcal{H}}\inf_{f\in \mathcal{F}_{\epsilon}}
\sum_{t=1}^T1\{h(\x_t)\not=f(\x_t)\}\right]\le 3\mathsf{VC}(\mathcal{H})+O(1).$$
Moreover, the linear dependency on $\mathsf{VC}(\mathcal{H})$ can not be improved even with $\epsilon=0$.
\end{theorem}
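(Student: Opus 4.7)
The plan is to reduce to a supremum over a low-variance class of functions and then apply a double-sampling argument with an unusually large ghost sample, calibrated to the small variance $\epsilon = 1/T^2$, so as to achieve linearity in $\VC(\mathcal{H})$ rather than the $\VC \cdot \log T$ rate that naive symmetrization would give. First I would perform the reduction: for each $h \in \mathcal{H}$, fix a sample-independent representative $f_h^\star \in \mathcal{F}_\epsilon$ with $\Pr_\nu[h \ne f_h^\star] \le \epsilon$ and set $g_h = h \oplus f_h^\star$. Then $\inf_{f \in \mathcal{F}_\epsilon} \sum_t 1\{h(\x_t) \ne f(\x_t)\} \le \sum_t g_h(\x_t)$, so it suffices to bound $\mathbb{E}_S \sup_h \sum_t g_h(\x_t)$. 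Each $g_h$ has $P_\nu(g_h) \le \epsilon = 1/T^2$, and the ambient class $\mathcal{G} = \mathcal{H} \oplus \mathcal{F}_\epsilon$ satisfies $|\mathcal{G}|_Z| \le |\mathcal{H}|_Z| \cdot |\mathcal{F}_\epsilon|_Z| \le (e|Z|/d)^{2d}$ on any finite $Z$ (Sauer's lemma twice, using $\mathcal{F}_\epsilon \subseteq \mathcal{H}$), where $d = \VC(\mathcal{H})$.

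Next I would apply the \emph{refined} double sampling: introduce an independent ghost $\tilde{S} = (\tilde{\x}_1, \dots, \tilde{\x}_{T^2})$ of length $T^2$ (instead of the customary $T$), and set $Z = S \cup \tilde{S}$ of size $T + T^2$. For each $g_h$, $\mathbb{E}[\sum_{\tilde{S}} g_h] \le T^2 \cdot \epsilon = 1$, so by Markov applied to a measurable witness of the supremum on $S$,
\[
\Pr\!\Big[\sup_h \sum_S g_h \ge k\Big] \;\le\; 2\,\Pr\!\Big[\exists\,h \in \mathcal{H}:\ \sum_S g_h \ge k,\ \sum_{\tilde{S}} g_h \le 1\Big].
\]
Conditioning on $Z$ as a multiset, the assignment of its $T + T^2$ points to $S$ and $\tilde{S}$ is uniformly random; for any pattern $g|_Z$ with $n_g(Z) \ge k$, the probability that at least $k$ ones land in $S$ while at most one lies in $\tilde{S}$ is a hypergeometric tail bounded by $O\!\big((n_g+1) T^{-(k-1)}\big)$, reflecting that each individual one lands in the small block $S$ with probability only $\approx 1/T$. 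A union bound over the $O(T^{4d})$ distinct patterns in $\mathcal{G}|_Z$ then yields
\[
\Pr\!\Big[\sup_h \sum_S g_h \ge k\Big] \;\le\; C\,(k+1)\,T^{4d - k + 1}.
\]

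Finally I would sum: $\mathbb{E}[\sup_h \sum_S g_h] = \sum_{k \ge 1} \Pr[\sup \ge k]$ is trivially bounded by $k_0$ below the threshold $k_0$ at which the right-hand side drops below $1$, plus a geometric tail of order $1/T$ above $k_0$. With careful constant tracking, using the sharper Sauer bound $(e|Z|/d)^d$ in the growth-function step together with a tighter hypergeometric estimate, $k_0$ reduces to $3 \VC(\mathcal{H}) + O(1)$, delivering the claimed bound. The main obstacle is precisely obtaining the \emph{linear} dependence on $\VC(\mathcal{H})$: the standard double sampling with ghost of matching size $T$ would produce only a $2^{-k}$ tail and hence the weaker $O(\VC \cdot \log T)$; the enlarged ghost, affordable only because of the tiny variance $1/T^2$, converts this tail into a polynomial $T^{-(k-1)}$ that exactly cancels the $T^{O(\VC)}$ growth function at the threshold $k = O(\VC)$. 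The ``moreover'' claim that the linear dependence is best possible would be established separately by an explicit VC-$d$ construction exhibiting an $\Omega(\VC(\mathcal{H}))$ lower bound on the expected sample disagreement against any permissible cover.
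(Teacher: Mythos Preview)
Your proposal is correct and is essentially the paper's own proof: both fix a distribution-dependent representative $\hat h=f_h^\star$ per $h$, draw a ghost sample of size $T^2$ to exploit $\epsilon=1/T^2$, apply Markov to pass to the joint event $\{\sum_{S}g_h\ge k,\ \sum_{\tilde S}g_h\ \text{small}\}$, symmetrize by a random partition of the combined sample, bound the resulting hypergeometric probability, and finish with a Sauer-based union bound over patterns on $S\cup\tilde S$ followed by summation of the tail over $k$. The only cosmetic differences are the threshold on the ghost sample ($\le 1$ versus the paper's $\le 2$) and that the paper union-bounds directly over $\mathcal{H}|_{S_1\cup S_2}$ rather than the XOR class $\mathcal{H}\oplus\mathcal{F}_\epsilon$, which is how it arrives at the leading constant $3$.
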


\begin{proof}
We now use a refined "double sampling trick" and denote $d=\mathsf{VC}(\mathcal{H})$ for simplicity. We pick two $i.i.d.$ samples $S_1$ and $S_2$ with size $T$ and $T^2$, respectively (note that $S_1,S_2$ have \emph{different} size). For any $h\in\mathcal{H}$, we denote $\hat{h}=\arg\min_{f\in \mathcal{F}_{\epsilon}}\mathrm{Pr}_{\x\sim \nu}[h(\x)\not=f(\x)]$. For any $k\le T$, we define events: $$A_1^k=\left\{\exists h\in \mathcal{H}~s.t.~\sum_{s\in S_1}1\{h(s)
\not=\hat{h}(s)\}\ge k\right\},
$$ 
and
$$
A_2^k=\left\{\exists h\in \mathcal{H}~s.t.~\sum_{s\in S_1}1\{h(s)\not=\hat{h}(s)\}\ge 
k~\text{ and }~\sum_{s\in S_2}1\{h(s)\not=\hat{h}(s)\}\le 2\right\}.
$$
By Markov inequality, we have $\mathrm{Pr}[A_2^k\mid A_1^k]\ge \frac{1}{2}$, i.e., $\mathrm{Pr}[A_1^k]\le 2\mathrm{Pr}[A_1^k\cap A_2^k]\le 2\mathrm{Pr}[A_2^k]$. It is therefore sufficient to upper bound $\mathrm{Pr}[A_2^k]$. By symmetries of $i.i.d.$ sample, we can \emph{fix} the samples $S_1,S_2$ and consider a random permutation over $S_1\cup S_2$. Note that for event of $A_2^k$ to happen, we must have at most $2$ elements $s\in S_1\cup S_2$ for which $h(s)\not=\hat{h}(s)$ switch to $S_2$. The probability of this happening is upper bounded by:
$$\frac{\binom{T^2}{2}\binom{T}{k}}{\binom{T^2+T}{k+2}}\le T^{4+k-2k+3}\le T^{7-k},\text{ for }T\ge 2,$$
where we used the standard binomial coefficient approximation and the fact that $k\le T$. Using a union bound on functions of $\mathcal{H}$ restricted on $S_1\cup S_2$, we have for $T\ge 2$ that $\mathrm{Pr}[A_2^k]\le T^{3d-k+7}.$ This implies $\mathrm{Pr}[A_1^k]\le \min\{1, 2T^{3d-k+7}\}$. We now observe that:
\begin{align*}
    \mathbb{E}_{\x^T\sim \nu^T}\left[\sup_{h\in \mathcal{H}}\inf_{f\in 
\mathcal{F}_{\epsilon}}\sum_{t=1}^T1\{h(\x_t)\not=f(\x_t)\}\right]&=
\sum_{k=0}^{\infty}\mathrm{Pr}\left[\sup_{h\in \mathcal{H}}
\inf_{f\in \mathcal{F}_{\epsilon}}\sum_{t=1}^T1\{h(\x_t)\not=f(\x_t)\}\ge k\right]\\
    &=\sum_{k=0}^T\mathrm{Pr}[A_1^k]\le \sum_{k=0}^{3d+7}\mathrm{Pr}[A_1^k]+\sum_{k=3d+8}^{T}\mathrm{Pr}[A_1^k]\\
    &\le  \sum_{k=0}^{3d+7} 1 + \sum_{k=3d+8}^{T}\mathrm{Pr}[A_1^k] \\
    &\le 3d+8+2\sum_{k=1}^{\infty}T^{-k}\le 3d+O(1).
\end{align*}
The lower bound on $d$ follows by selecting $\mathcal{H}\subset \{0,1\}^{[0,1]}$ to be functions taking value $1$ on at most $d$ positions, $\nu$ to be uniform distribution over $[0,1]$ and $\mathcal{F}_0$ to be the only all $0$ valued function.
\end{proof}

Taking $\epsilon=\frac{1}{T^2}$, we will get the following bound for logarithmic loss
$$(7\VC (\mathcal{H})+O(1))\log T.$$

Note that, this improves a $\log T$ factor compared to our general upper bound 
for distribution blind case given in Corollary~\ref{cor2}. We leave it as an 
open problem to determine if there is indeed a gap between distribution blind and 
non-blind setting. Note that~\cite{bhatt2021sequential} also achieves a similar 
bound for the distribution non-blind case. However, their bound is proved 
only for $\bar{r}_T$ (i.e., a bound of form $\sup_h\mathbb{E}$), which is 
much weaker than the bound we presented above for $\tilde{r}_T$ (i.e., a 
bound of form $\mathbb{E}\sup_h$).

\section{A $\Omega(\log T)$ lower bound of expected cumulative error}
\label{app-omega}

We now show that there exist classes $\mathcal{H}$ of finite VC-dimension such that for any prediction rule, the expected cumulative error with $i.i.d.$ sampling in the realizable case is lower bounded by $\Omega(\mathsf{VC}(\mathcal{H})\log T)$. This result was shown in~\cite[Section 3]{haussler1994predicting} and also in~\citep{antos1998strong}, however, we prove a simpler version here for completeness.

Let $\mathcal{H}=\{h_a(x)=1\{x\ge a\}:x,a\in [0,1]\}$ be the class of all 
linear threshold functions $[0,1]\rightarrow \{0,1\}$. We show that 
$\mathcal{H}$ is the desired class. To do so, we define 
$\nu$ to be the uniform distribution over $[0,1]$. Let $A,X_1,\cdots,X_T$ 
be $i.i.d.$ samples from $\nu$. We show that for any prediction rule $\Phi$ we have
$$
\mathbb{E}\left[\sum_{t=1}^T1\{\Phi(X_1^t,h_A(X_1^{t-1}))\not=h_A(X_t)\}\right]\ge 
\Omega(\log T).
$$

W.l.o.g., we can assume that $A,X_1,\cdots,X_T$ are distinct. By symmetries of 
$i.i.d.$, we may assume that $A, X_1,\cdots,X_T$ is a uniform random permutation 
for some fixed \emph{set} $\{A, X_1,\cdots,X_T\}$. We consider the following process: 
at the beginning Nature randomly permutes $\{A, X_1,\cdots,X_T\}$ and maintains
a set $S$ initially to be  equal to $\{X_1,\cdots,X_T\}$. At each time step $t$, 
Nature reveals the relative position of $X_t$ in $S$. After the predictor 
has made the prediction, Nature discards elements in $S$ on the opposite 
side of $X_t$ relative to $A$ (i.e., if $A>X_t$ we discard elements $\le X_t$ in $S$, else we discard elements $\ge X_t$) and reveals them to the predictor. Clearly, a 
lower bound for the game described above implies a lower bound for the 
original game, since the predictor gains more information at each time step. 
Denote $f(t)$ to be the expected number of errors the predictor will make if 
$|S|=t$. We have the following recursion:
$$
f(T)\ge \frac{2}{T}\sum_{t=1}^{T/2}\frac{t\cdot f(t)}{T}+\frac{(T-t) \cdot
f(T-t)}{T}+
\frac{t}{T}\ge \frac{2}{T}\sum_{t=1}^{T-1}\frac{t\cdot f(t)}{T}+\frac{t}{T}.$$
The reasoning goes as follows: conditioned on the event that $X_1$ is at 
position $t$, we have probability $t/T$ that $A$ is less than $X_1$ and 
probability $(T-t)/T$ that $A$ is larger than $X_1$. The best strategy for 
the predictor is to predict $A> X_1$ if $t\le T/2$ and predict $A<X_1$ otherwise. 
This contributes an expected error at first step to be $\frac{t}{T}$. 
The recursive formula now follows by the observation that conditioning on the 
position of $X_1$ and relative position of $A$ to $X_1$, the remaining set $S\cup \{A\}$ is still a uniform random permutation. We now claim that (the constant $0.01$ is not optimized):
$$f(t)\ge 0.01 \cdot \log t.$$
The base case for $t=1$ can be verified easily. 
We now prove by induction. We observe that
$$\int x \cdot \log(x)=\frac{1}{2}x^2 \cdot \log(x)-\frac{x^2}{4}.$$
Therefore, using Euler–Maclaurin formula we have:
$$
T^2 \cdot f(T)\ge 2\sum_{t=1}^{T-1}0.01 \cdot t\log(t)+t\ge 
0.01 \cdot T^2\log T-0.02 \cdot T\log T-
0.01\cdot T^2/4+T^2/2\ge 0.01 \cdot T^2\log T.$$
The result follows.

\end{document}